\documentclass{article}

\PassOptionsToPackage{numbers, compress}{natbib}

\usepackage[preprint]{neurips_2026}

\usepackage[utf8]{inputenc} 
\usepackage[T1]{fontenc}    
\usepackage{hyperref}       
\usepackage{url}            
\usepackage{booktabs}       
\usepackage{amsfonts}  
\usepackage{amssymb}
\usepackage{amsmath}
\usepackage{nicefrac}       
\usepackage{microtype}      
\usepackage{xcolor}         

\usepackage{wrapfig}

\usepackage{multirow}
\usepackage{makecell}

\usepackage{enumitem}

\usepackage{graphicx}

\usepackage{algorithm}
\usepackage{algpseudocode}

\usepackage{amsthm}
\newtheorem{assumption}{Assumption}
\newtheorem{proposition}{Proposition}
\newtheorem{lemma}{Lemma}
\newtheorem{theorem}{Theorem}

\usepackage{subcaption}

\providecommand{\pmstd}[1]{{\scriptsize$\pm$ #1}}
\providecommand{\bestpm}[2]{\textbf{#1} {\scriptsize$\pm$ #2}}

\title{When Generator Replay Degrades:\\
Projected Rehearsal Orchestration for\\
Heterogeneous Federated Class-Incremental Learning}

\author{%
  Thinh T. H. Nguyen$^{1}$, Khoa D. Doan$^{1}$, Binh T. Nguyen$^{2}$, \textbf{Danh Le-Phuoc$^{3}$}, \textbf{Kok-Seng Wong$^{1}$}\thanks{Corresponding author: \url{wong.ks@vinuni.edu.vn}}\\
  $^{1}$VinUniversity, Hanoi, Vietnam\\
  $^{2}$VNU-HCM, University of Science, Ho Chi Minh City, Vietnam\\
  $^{3}$Technische Universit{\"a}t Berlin, Germany\\
  \texttt{\{thinh.nth, khoa.dd\}@vinuni.edu.vn, ngtbinh@hcmus.edu.vn,}\\
  \texttt{danh.lephuoc@tu-berlin.de, wong.ks@vinuni.edu.vn}
}

\begin{document}

\maketitle

\begin{abstract}
Federated class-incremental learning (FCIL) becomes substantially harder when clients observe different label subsets, progress through tasks at different stages, and provide uneven supervision for the same semantic concepts. Existing FCIL methods often preserve old knowledge through input-space synthesis, but they can be fragile under heterogeneous task streams and difficult to transfer across modalities. To alleviate such issues, we propose PRO, a framework that replaces synthetic input replay with projected rehearsal orchestration. To remove external pretraining, we evaluate all methods under the same warmup. After this, PRO maintains compact class-level projected memories on the server and allows clients perform balanced pseudo multi-task training over current examples and old projected memories. To handle stronger representation drift, we further introduce PRO-MAX, which augments PRO with neighborhood-weighted memory alignment while preserving the same server-light principle that the server only aggregates model updates and memory statistics. Across image, text, and graph benchmarks, PRO and PRO-MAX improve retention and final utility under heterogeneous streams while remaining competitive in homogeneous FCIL. Even when baselines are given expanded replay budgets, they degrade under supervision imbalance and stage misalignment, indicating that replay quantity alone does not resolve replay-quality failures. Additional weak-task diagnostics further show that larger replay mismatch is associated with larger downstream degradation, while our method keeps projected memories better aligned with the evolving representation.
\end{abstract}

\section{Introduction}

Federated learning (FL)~\cite{mcmahan2017communication, kamp2021federated, lu2022federated} enables collaborative training without centralizing raw data, making it suitable for privacy-sensitive applications. However, real client data streams are often non-stationary: new classes appear, label spaces expand, and clients encounter novel concepts under different conditions. This motivates federated class-incremental learning (FCIL)~\cite{dong2022federated, zhang2023target, dong2023no}, where a global model must learn new classes while retaining prior knowledge under privacy and communication constraints. Although FCIL has gained attention, most studies assume relatively homogeneous task sequences across clients~\cite{babakniya2023data, nguyen2024overcoming, liang2024diffusion}. In real deployments, clients may observe different label subsets, follow distinct task orders, and provide uneven supervision for the same concepts. Consequently, even within one communication round, clients can optimize different parts of the label space, making representation learning and memory preservation more challenging~\cite{yang2024federated, hamedi2025federated}.


A common approach in FCIL to handle forgetting is to use generator-based replay or data-free synthesis~\cite{zhang2023target, nori2025federated, rong2025can}. Rather than storing old examples, these methods create artificial samples of previous classes and reuse them during training. While effective on standard vision benchmarks, this design can become fragile as a retention mechanism under heterogeneous streams. There are two main reasons for this. First, the quality of the replay depends on how well each task was learned in the first place. If the supervision was weak or biased, the replayed data can reflect those problems, and errors may accumulate over time. Second, generating samples in the input space ties the memory method to the type of data. Adapting this approach to other data types often needs new generators, extra modules, or makes the system much more complicated. Ideally, memory in FCIL should work across different data types and not be too sensitive to uneven learning.

These observations suggest that the key abstraction should be a modality-agnostic memory interface rather than a modality-specific generator. Motivated by this view, we propose \textbf{P}rojected \textbf{R}ehearsal \textbf{O}rchestration (\textbf{PRO}), a generator-free framework for heterogeneous FCIL. PRO maintains compact projected memories on the server, i.e., feature-space summaries computed after the modality-specific encoder rather than input-space examples. The server broadcasts these memories to clients, where they are sampled as old-class pseudo features and combined with current-task features in a class-balanced projected rehearsal objective. In this way, old-new consolidation is performed distributively on clients without an additional server-side optimization loop. To further handle stronger representation drift, we introduce a \textbf{M}emory \textbf{A}lignment e\textbf{X}tension, yielding \textbf{PRO-MAX}. PRO-MAX estimates how old projected memories should move as client updates shift the feature space and aligns them through confidence-weighted aggregation. Both variants follow the same principle that the server \textbf{only} aggregates model updates and memory statistics and performs no gradient-based training.

We evaluate our methods across image, text, and graph benchmarks under both homogeneous and heterogeneous FCIL protocols. The experiments test whether projected rehearsal remains competitive in standard FCIL, whether it is more robust than generator-based replay under heterogeneous streams, and whether the same memory abstraction transfers across modalities. We further include a controlled weak-task stress test to examine whether replay derived from a poorly learned task causes downstream degradation. In summary, this work makes four contributions:

\begin{itemize}[leftmargin=*]
\item We identify two limitations of generator-based FCIL under heterogeneous streams, i.e., modality-coupled memory and error compounding when weakly learned tasks produce weak replay signals.
\item To enable modality-agnostic continual memory without input synthesis, we propose PRO, a generator-free FCIL framework that performs projected rehearsal through compact class-level memories and client-side balanced pseudo multi-task learning.
\item To keep old memories compatible with evolving representations under severe heterogeneity, we introduce PRO-MAX, which adds neighborhood-weighted memory alignment through confidence-weighted server aggregation.
\item We evaluate our methods across image, text, and graph benchmarks under homogeneous and heterogeneous FCIL, including system-cost measurements and additional tests for replay robustness.
\end{itemize}

\section{Related Work}
\label{sec:related-work}

\textbf{Representation-based Continual Learning.}
Continual learning (CL) mitigates forgetting via regularization, distillation, replay, or dynamic architectures. Closest to our work are methods that preserve old knowledge in representation space instead of reconstructing raw inputs. Early approaches used feature-space class means for incremental recognition~\cite{rebuffi2017icarl}, while later exemplar-free methods rely on prototypes, prototype augmentation, or translated pseudo-features to avoid storing raw examples~\cite{zhu2021prototype,petit2023fetril}. A key challenge in this setting is feature drift, where evolving representations make old class memories stale and require compensation~\cite{yu2020semantic}. PRO builds on this idea for federated streams by storing compact class-level memories and rehearsing them on clients. Unlike per-example feature banks, which may raise privacy concerns~\cite{mahendran2015understanding,fredrikson2015model,zhu2019deep}, PRO uses class-level projected statistics.

\textbf{Input-Space Replay and Generator-Based FCIL.}
FCIL introduces both local forgetting on clients and global forgetting after aggregation. GLFC is an early method that explicitly addresses this dual forgetting problem~\cite{dong2022federated}. A major exemplar-free branch avoids storing old examples by using data-free distillation, generator-based replay, diffusion replay, or hybrid replay mechanisms~\cite{zhang2023target,babakniya2023data,nguyen2024overcoming,liang2024diffusion,nori2025federated,rong2025can}. These methods are effective in standard vision-centered protocols, but their memory quality depends on the task state from which replay is derived. Under heterogeneous FCIL, weak or biased task learning can therefore produce weak replay signals. Moreover, input-space synthesis is naturally tied to the data modality. PRO differs from this line by replacing input-space generation with projected class memories, while PRO-MAX further aligns these memories when the representation drifts.

\textbf{Heterogeneous Federated Continual Learning.}
This field studies clients with different data distributions, task orders, availability, or learning stages. Prior methods address heterogeneity through inter-client transfer, selective transfer, asynchronous updates, synaptic regularization, class-wise balancing, or prototype communication~\cite{yoon2021federated,chaudhary2022federated,shenaj2023asynchronous,li2024rehearsal,qi2025class,tan2022fedproto,zhu2025federated,wan2025motion}. These works motivate our heterogeneous protocol and baselines. Specifically, PRO focuses on a generator-free memory interface for heterogeneous FCIL streams while keeping old-new consolidation on clients and maintaining only compact projected memories on the server. We provide a full discussion in Appendix~\ref{app:extended-related-work}.

\section{Problem Setup and Motivating Observation}
\label{sec:problem-setup}

\subsection{Heterogeneous FCIL}

We consider an FL system with $C$ clients and a sequence of semantic tasks $\mathcal{S}=\{1,\dots,T\}$. Each task $t$ introduces a disjoint class set $\mathcal{Y}_t$, and the full label space is $\mathcal{Y}=\bigcup_{t=1}^{T}\mathcal{Y}_t$. Unlike homogeneous FCIL, where clients usually follow the same task sequence, we allow each client $c$ to follow a client-specific task order $\pi_c$. At local position $p$, client $c$ receives data
$\mathcal{D}_{c,p}=\{(x_i,y_i)\}_{i=1}^{N_{c,p}}$ with observed labels $y_i\in \widetilde{\mathcal{Y}}_{c,p}\subseteq \mathcal{Y}_{\pi_c[p]}$.
The notation $\widetilde{\mathcal{Y}}_{c,p}$ emphasizes that a client may observe only part of the semantic task label set. We assume that every evaluated semantic class is supported by at least one client when its task appears, i.e.,
$\bigcup_{(c,p):\pi_c[p]=t}\widetilde{\mathcal{Y}}_{c,p}=\mathcal{Y}_t$, for all $t\in\mathcal{S}$.
Thus, supervision can be sparse and unevenly distributed, but no evaluated class is completely absent from the federation.

For client $c$, the semantic task history up to position $p$ is $\mathcal{T}_c^{(p)}=\{\pi_c[0],\dots,\pi_c[p]\}$. The global prediction space is defined at the semantic-task level, i.e,
$\mathcal{Y}^{(p)}
=
\bigcup_{c=1}^{C}
\bigcup_{t\in\mathcal{T}_c^{(p)}}
\mathcal{Y}_t$.
All methods are evaluated under the single-head FCILprotocol where the task identity is not given at inference time.

Within this setup, we instantiate heterogeneity through two controls. \textbf{Supervision imbalance} means that different clients may provide different class coverage or sample counts for the same semantic task. \textbf{Stage misalignment} means that clients need not learn the same semantic task at the same communication time, i.e., $\pi_c[p]\neq \pi_{c'}[p]$ can hold for two clients at the same local position. Together, these two factors create a stream where task supervision is uneven and client progress is not synchronized. Full formal definitions, samplers, and metric details are provided in Appendix~\ref{app:problem-setup}.

\paragraph{Evaluation Summary.} Because clients can follow different task histories, we use a client-first evaluation protocol. At each position, each client is evaluated on the semantic tasks it has encountered so far, while predictions are still made by the global single-head model over the cumulative label space. By this, evaluation respects client-specific CL histories without abandoning the FCIL setting. We report three metrics. \textbf{Final average accuracy} (FAA) measures final utility over each client's accumulated task history. \textbf{Current-task accuracy} (CTA) measures whether the system still learns new tasks under heterogeneous supervision. \textbf{Average forgetting} (AF) measures how much previous tasks degrade after subsequent learning. The formal metric definitions are given in Appendix~\ref{app:problem-setup}.

\subsection{Motivating Observation: When Weak Tasks Poison Generator Replay}
\label{sec:observation}

Generator-based and data-free replay methods are effective when previous tasks are learned under stable supervision, since their replay signals are derived from the task states already learned by the model~\cite{zhang2023target,babakniya2023data}. However, under heterogeneous FCIL, supervision imbalance can leave a task weakly learned, while stage misalignment can force the server to aggregate clients that are optimizing different parts of the label space. Consequently, a weak task state may produce unreliable synthetic replay, and this error can be reused in later stages:
\[
\text{weak task learning}
\;\Longrightarrow\;
\text{poor synthetic replay}
\;\Longrightarrow\;
\text{downstream degradation}.
\]

This does not mean that projected memories are immune to weak task learning. Any memory derived from a poorly learned task can be imperfect. Rather, input-space generators introduce an additional amplification channel, where a biased task state can be repeatedly converted into synthetic samples and rehearsed later. We test this failure mode in Section~\ref{subsec:generator-failure} through task-wise accuracy heatmaps and feature-space comparisons between real and synthetic samples from weakly learned tasks. Motivated by this observation, we replace input-space replay with compact projected class memories.

\section{Methodology}
\label{sec:methodology}


Our approach is guided by three key requirements, as discussed in Section~\ref{sec:observation}. \textbf{First}, clients need to achieve strong performance on their current task; otherwise, any subsequent memory based on that task risks being inaccurate. \textbf{Second}, to ensure the method is broadly applicable and efficient, we represent memory with compact, modality-agnostic class-level statistics instead of relying on raw data, synthetic samples, or storing features for every example. \textbf{Third}, since clients can have diverse data and may focus on different parts of the label space, we use a balanced pseudo multi-task objective for integrating new and previous knowledge.

To realize these requirements, we decompose the model as follows:
$x \xrightarrow{h_{\theta}} u \xrightarrow{a_{\psi}} z \xrightarrow{W} \ell$, where $h_{\theta}$ serves as a modality-specific base encoder, $a_{\psi}$ is a lightweight adapter, and $W$ is a unified classifier head. Here, $u=h_{\theta}(x)\in\mathbb{R}^{d_u}$, and $z=a_{\psi}(u)\in\mathbb{R}^{d_z}$. The prediction at position $p$ uses a single-head class-incremental protocol:
$\hat{y}=\arg\max_{y\in\mathcal{Y}^{(p)}}W_y^\top a_{\psi}(h_{\theta}(x))$.
This decomposition allows the base encoder to project raw inputs into unified feature space, enabling the memory mechanism to function across modalities. 
The adapter, in turn, provides a targeted point for continual adaptation, aligning with the principle of parameter-efficient learning~\cite{houlsby2019parameter,hu2022lora}.


PRO does not rely on an externally pretrained encoder. To ensure a fair comparison, we apply the same warmup phase before the start of the FCIL stream. During warmup, the modality-specific base encoder $h_\theta$ is initialized using only in-domain client data, after which the auxiliary warmup head is removed. This warmup is treated as a protocol-level initialization, not as a unique advantage for PRO. Once this shared initialization is complete, each method proceeds with its own continual learning process. Details of the warmup objective and computational overhead are provided in Appendix~\ref{app:warmup-objective} and Appendix~\ref{app:cost-details}. The specific impact of warmup on PRO is analyzed in the ablation study (Table~\ref{tab:ablation}).

\begin{figure*}[t]
\centering
\includegraphics[width=0.9\linewidth]{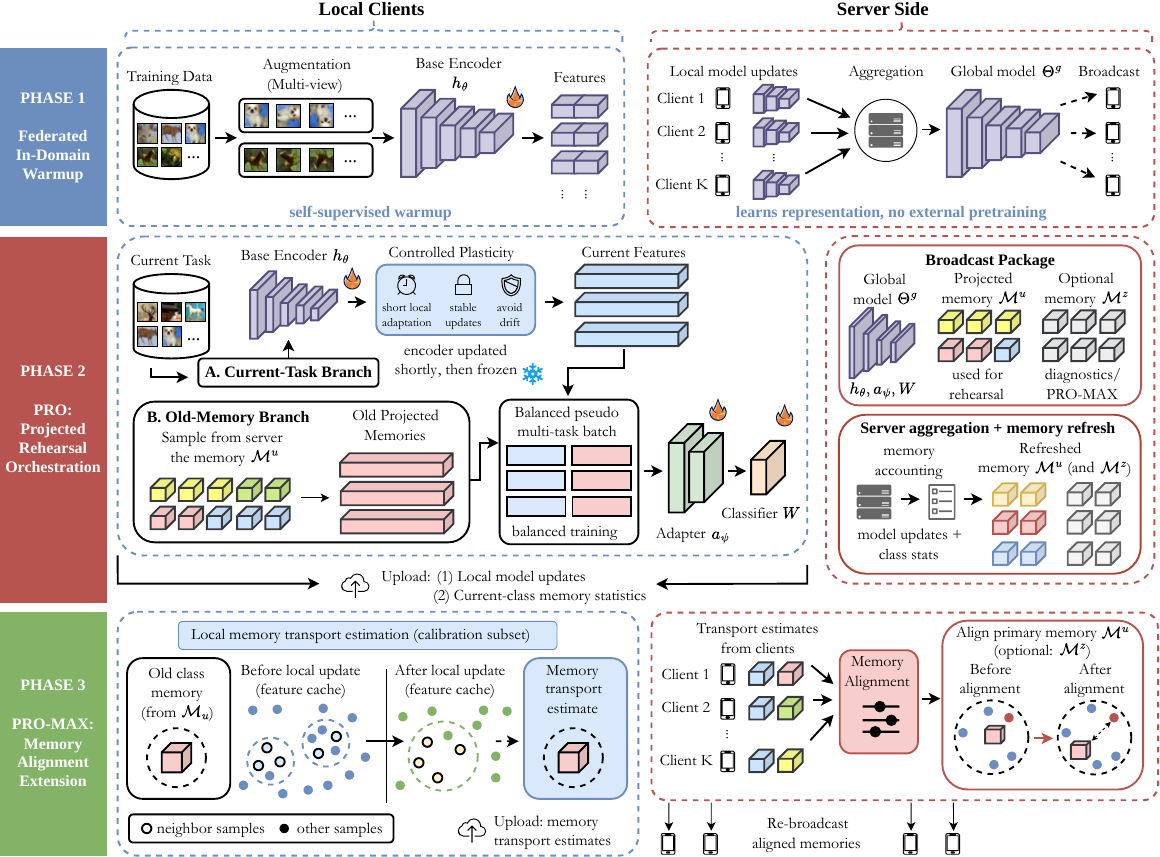}
\caption{Overview of PRO and PRO-MAX. The server broadcasts the global model and projected memories. Each client learns its current task with controlled plasticity, samples old projected memories from the server bank, and trains on a balanced pseudo multi-task objective. The server aggregates model updates and refreshes memory statistics. PRO-MAX further aligns old projected memories using neighborhood-weighted transport.}
\label{fig:method-overview}
\end{figure*}

\subsection{PRO: Projected Rehearsal Orchestration}


{After the warmup phase, the server stores compact projected memories for each class in the base feature space. Specifically, it maintains $\mathcal{M}^{u}=\{(\mu_y^u,\sigma_y^u,n_y)\}_{y\in\mathcal{Y}^{(p)}}$, where $\mu_y^u$ and $\sigma_y^u$ represent the mean and diagonal standard deviation of the base features $u=h_{\theta}(x)$ for each class. In PRO, this memory bank is essential because old projected samples are generated from $\mathcal{M}^u$ before being processed by the adapter $a_{\psi}$. This setup ensures that gradients from old classes continue to update both the adapter and the classifier during training.


At each communication round $r$ within position $p$, the server sends $(\Theta^g,\mathcal{M}^{u})$ to participating clients, where $\Theta^g=(\theta^g,\psi^g,W^g)$. If auxiliary adapted-space memory is used, $\mathcal{M}^{z}$ is also shared. Clients initialize their models from $\Theta^g$ and begin with a brief controlled-plasticity phase designed to learn the current task before constructing memory, while limiting excessive representation drift. In heterogeneous FCIL, labels for the current task may be sparse, incomplete, or biased on each client. As a result, relying solely on supervised updates can lead to overfitting to the limited label subset and disrupt the projected feature space. To address this, when the base encoder is unfrozen, we optionally include a modality-compatible self-supervised auxiliary regularizer. This regularizer does not involve replay or a teacher model; it only constrains the short current-task representation update. The local plasticity objective is given as follows:

\begin{equation}
\label{eq:plastic}
\mathcal{L}_{\mathrm{plastic}}
=
\mathcal{L}_{\mathrm{sup}}
+
\lambda_{\mathrm{aux}}\mathcal{L}_{\mathrm{aux}}
+
\lambda_{\mathrm{prox}}
\left\|
\Pi_r(\Theta)-\Pi_r(\Theta^g)
\right\|_2^2,
\end{equation}
where $\Theta=(\theta,\psi,W)$ and $\Pi_r(\cdot)$ selects only trainable parameters in round $r$. The supervised term is
\begin{equation}
\label{eq:sup}
\mathcal{L}_{\mathrm{sup}}
=
\mathbb{E}_{(x,y)\in\mathcal{D}_{c,p}}
\mathrm{CE}
\left(
W^\top a_{\psi}(h_{\theta}(x)),y
\right).
\end{equation}
The proximal term uses a single coefficient $\lambda_{\mathrm{prox}}$ and constrains the local trainable parameters to remain close to the broadcast global model, which is motivated by client drift in heterogeneous federated optimization~\cite{li2020federated,karimireddy2020scaffold}. When the base encoder is frozen, it is excluded from $\Pi_r(\Theta)$ and we set $\lambda_{\mathrm{aux}}=0$. The distinction between the pre-stream warmup objective and this in-stream auxiliary regularizer is detailed in Appendix~\ref{app:warmup-objective}. In practice, this controlled-plasticity phase is deliberately short. The base encoder is updated only in early rounds of a new task. Later rounds freeze the base encoder and focus on the adapter and classifier.

After this phase, the client constructs a projected pseudo multi-task dataset. For current classes, it uses real examples mapped into the base space $\mathcal{U}^{\mathrm{cur}}_{c,p}
=
\{(h_{\theta_c}(x_i),y_i):(x_i,y_i)\in\mathcal{D}_{c,p}\}$. For old classes, it samples projected memories $\tilde{u}_{y,j}=\mu_y^u+\gamma\sigma_y^u\odot\epsilon_j$, where $\epsilon_j\sim\mathcal{N}(0,I)$, which yields
$\widetilde{\mathcal{U}}^{\mathrm{old}}_{c,p}=\{(\tilde{u}_{y,j},y):y\in\mathcal{Y}^{(p-1)}\}$.
The client trains on the joint dataset $\mathcal{U}^{\mathrm{joint}}_{c,p}=\mathcal{U}^{\mathrm{cur}}_{c,p}\cup\widetilde{\mathcal{U}}^{\mathrm{old}}_{c,p}$ with
\begin{equation}
\label{eq:pro-loss}
\mathcal{L}_{c}^{\mathrm{PRO}}
=
\mathbb{E}_{(u,y)\in\mathcal{U}^{\mathrm{joint}}_{c,p}}
\mathrm{CE}
\left(
W^\top a_{\psi}(u),y
\right).
\end{equation}
The key design choice is that pseudo-old samples are inserted before the adapter: $\tilde{u}\rightarrow a_{\psi}(\tilde{u})\rightarrow W$. Thus, old memories provide gradients to both the adapter and the classifier. If old pseudo features were inserted only after the adapter, they would calibrate $W$ but would not prevent $a_{\psi}$ from drifting toward the current task. We optimize this objective with a class-balanced sampler so that current real features and old projected memories contribute comparably.

After local training, the server aggregates model updates as
$\Theta^g \leftarrow \sum_c \omega_c \Theta_c$, where
$\omega_c=\frac{n_c}{\sum_{c'}n_{c'}}$. It then refreshes current-class memories using the aggregated global model, since statistics computed under local models may not share the same feature space. In the final round of each task, the server sends the global model to participating clients, which recompute class-level projected statistics without gradient updates. The server aggregates only means, second moments, radii, and counts.

\subsection{PRO-MAX: Memory Alignment eXtension}

PRO assumes that the algorithm keeps representation drift moderate. Under heterogenous scenarios, however, even short local updates can make old projected memories stale. PRO-MAX therefore adds neighborhood-weighted memory alignment while preserving the same server-light principle.

Each client caches a small calibration subset before and after local training, i.e., $u_i^- = h_{\theta^-}(x_i)$, and $u_i^+ = h_{\theta^+}(x_i)$, and observes sample drift $\delta_i^u=u_i^+-u_i^-$.
For an old class $y$, the client looks at calibration samples that were close to $\mu_y^u$ before the update. With $d_{i,y}^u=\|u_i^- - \mu_y^u\|_2^2$, it selects a nearest-neighbor set $\mathcal{N}_y^u$ and assigns weights $\alpha_{i,y}^u=\exp\left(-\frac{d_{i,y}^u}{2\tau_u^2}\right)$.

The local memory transport vector is $\Delta_{c,y}^{u} = \frac{\sum_{i\in\mathcal{N}_{y}^{u}}\alpha_{i,y}^{u}\delta_i^u}{\sum_{i\in\mathcal{N}_{y}^{u}}\alpha_{i,y}^{u}+\epsilon}$, where $q_{c,y}^{u}=\sum_{i\in\mathcal{N}_{y}^{u}}\alpha_{i,y}^{u}$.
The same construction can be applied in the adapted space for $\mathcal{M}^z$. The server aggregates transport vectors by confidence, i.e., $\mu_y^u\leftarrow \mu_y^u+\Delta_y^{u,g}$, where $\Delta_y^{u,g}=\frac{\sum_c q_{c,y}^{u}\Delta_{c,y}^{u}}{\sum_c q_{c,y}^{u}+\epsilon}$. The confidence score prevents clients whose current data are far from an old class from strongly influencing that class memory. Algorithms~\ref{alg:pro} and~\ref{alg:promax} in Appendix~\ref{app:method-details} summarize the complete training procedures for PRO and PRO-MAX.



\section{Experimental Protocol}
\label{sec:experimental-protocol}


We structure our experiments around four main objectives. \textbf{First}, we test whether PRO and PRO-MAX achieve strong performance in homogeneous FCIL settings, confirming their effectiveness extends beyond the heterogeneous scenarios. \textbf{Second}, we evaluate whether generator-based and replay-heavy FCIL methods remain robust when facing heterogeneous streams caused by supervision imbalance and stage misalignment. \textbf{Third}, we investigate modality transfer by applying our projected-memory approach to text and graph classification tasks. \textbf{Finally}, we measure communication, memory, and training costs to ensure that the benefits of PRO and PRO-MAX are not due to hidden server-side computation or excessive replay. All results follow the client-first metrics described in Section~\ref{sec:problem-setup}.

\textbf{Benchmarks and settings.}
CIFAR-100 serves as our main image benchmark, divided into 10 semantic tasks with 10 classes each. We use this dataset for experiments on homogeneous and heterogeneous FCIL, generator-replay diagnostics, resource analysis, and ablation studies. For modality transfer, we evaluate on THUCNews-10 for text classification (5 tasks, 2 categories each) and Cora for graph node classification (3 tasks with class groups of size 2, 2, and 3). Additional large-scale results for TinyImageNet, CLINC150, and ogbn-arxiv are provided in Appendix~\ref{app:additional-results}.

\textbf{Baselines.}
We compare with FL-only baselines FedAvg~\cite{mcmahan2017communication}, FedProx~\cite{li2020federated}, and FedProto~\cite{tan2022fedproto}; generator-based or replay-heavy FCIL baselines TARGET~\cite{zhang2023target}, MFCL~\cite{babakniya2023data}, and HR~\cite{nori2025federated}; and heterogeneity-aware or asynchronous FCIL baselines FedSpace~\cite{shenaj2023asynchronous}, FedSSI~\cite{li2024rehearsal}, and FedCBDR~\cite{qi2025class}. For text, we use FedSeIT~\cite{chaudhary2022federated} where its assumptions match the THUCNews protocol. For graph node classification, we compare with POWER~\cite{zhu2025federated} and MOTION~\cite{wan2025motion}.

\textbf{Resource regimes.}
For CIFAR-100, we evaluate a resource-matched regime, where additional replay or memory communication is matched to the projected-memory budget of PRO/PRO-MAX, and a resource-expanded regime, where replay-heavy baselines receive a larger shared replay budget selected on the homogeneous validation setting. The expanded budget is reused in heterogeneous FCIL to test whether replay degradation persists when replay quantity is no longer the limiting factor. To diagnose generator replay, we visualize final task-wise accuracy and compare real versus synthetic features for weakly learned tasks. Full details are given in Appendix~\ref{app:generator-stress} and Appendix~\ref{app:cost-details}.

\textbf{Implementation.}
Images use ResNet-18~\cite{he2016deep}, text uses TextCNN~\cite{chaudhary2022federated}, and graphs use a two-layer GAT~\cite{wan2025motion, zhu2025federated} unless otherwise stated. All methods use the same task splits, client partitions, communication schedule, random seeds, and warmup. After this shared warmup, each baseline follows its original mechanism, while PRO and PRO-MAX use the warmed encoder for projected-memory rehearsal. The concrete protocol values are reported in Appendix~\ref{app:exp-details} and Appendix~\ref{app:generator-stress}.

\section{Results and Analysis}
\label{sec:results}

\subsection{Results under Homogeneous FCIL}
\label{subsec:homogeneous-results}


In Table~\ref{tab:homogeneous-results}, the organization emphasizes resource regime rather than method family, allowing for a clearer comparison of how memory and replay budgets affect performance. In the resource-matched regime, all methods operate with the same additional memory or replay communication budget as PRO and PRO-MAX. The resource-expanded regime provides generator-based and replay-heavy baselines with a larger shared replay budget, testing whether their performance gains are primarily due to increased synthetic or replay resources.


This comparison is essential for interpreting the results in heterogeneous settings that follow. If generator-based baselines only match PRO or PRO-MAX in the homogeneous setting when given an expanded budget, this highlights the resource efficiency of projected memory. Importantly, we apply the same expanded budget in the heterogeneous experiments. If generator-based methods still experience performance drops under heterogeneity, this suggests that the issue is not merely the amount of replay, but also the quality and reliability of the replay signal.

\begin{table*}[ht]
\centering
\caption{
Homogeneous CIFAR-100 FCIL under resource-matched and resource-expanded regimes. We report normalized incremental-stream communication cost together with FAA, CTA, and AF. Resource-matched gives all methods the same additional replay or memory budget as PRO/PRO-MAX. Resource-expanded gives generator-based and replay-heavy FCIL baselines a larger shared replay budget selected on the homogeneous validation setting.}
\label{tab:homogeneous-results}
\resizebox{\textwidth}{!}{%
\begin{tabular}{cccccccc}
\toprule
\makecell[c]{Regime} 
& \makecell[c]{Group} 
& Method 
& Budget 
& Comm. $\downarrow$ 
& FAA $\uparrow$ 
& CTA $\uparrow$ 
& AF $\downarrow$ \\
\midrule

\multirow{11}{*}{\makecell[c]{Matched\\Resource}}
& \multirow{3}{*}{\makecell[c]{FL\\only}}
& FedAvg   & None              & 1.00$\times$ & 10.84 \pmstd{0.36} & 62.14 \pmstd{1.05} & 79.86 \pmstd{0.81} \\
&
& FedProx  & None              & 1.00$\times$ & 11.27 \pmstd{0.42} & 62.04 \pmstd{0.98} & 78.92 \pmstd{0.77} \\
&
& FedProto & Proto             & 1.08$\times$ & 20.65 \pmstd{0.88} & 60.88 \pmstd{1.16} & 62.40 \pmstd{1.42} \\
\cmidrule(lr){2-8}

& \multirow{3}{*}{\makecell[c]{Generator / replay\\FCIL}}
& TARGET   & Synthetic         & 1.22$\times$ & 35.92 \pmstd{1.06} & 66.58 \pmstd{1.12} & 39.48 \pmstd{1.55} \\
&
& MFCL     & Synthetic         & 1.30$\times$ & 37.84 \pmstd{1.18} & 67.36 \pmstd{1.19} & 36.22 \pmstd{1.46} \\
&
& HR       & Hybrid            & 1.35$\times$ & 40.76 \pmstd{1.04} & 68.92 \pmstd{1.01} & 32.64 \pmstd{1.31} \\
\cmidrule(lr){2-8}

& \multirow{3}{*}{\makecell[c]{Heterogeneous\\FCIL}}
& FedSpace & Proto             & 1.15$\times$ & 38.72 \pmstd{1.12} & 65.82 \pmstd{1.23} & 35.76 \pmstd{1.45} \\
&
& FedSSI   & Memory            & 1.10$\times$ & 39.84 \pmstd{1.08} & 66.24 \pmstd{1.17} & 34.96 \pmstd{1.39} \\
&
& FedCBDR  & Replay            & 1.25$\times$ & 41.42 \pmstd{0.94} & 69.18 \pmstd{1.04} & 31.88 \pmstd{1.24} \\
\cmidrule(lr){2-8}

& \multirow{2}{*}{Ours}
& PRO      & \multirow{2}{*}{\makecell[c]{Projected\\memory}} 
                                  & 1.00$\times$ & 43.86 \pmstd{0.82} & 67.20 \pmstd{0.91} & 27.36 \pmstd{1.08} \\
&
& PRO-MAX  &                    & 1.00$\times$ & \bestpm{46.72}{0.78} & 67.90 \pmstd{0.86} & \bestpm{23.64}{0.97} \\
\midrule

\multirow{8}{*}{\makecell[c]{Expanded\\Resource}}
& \multirow{3}{*}{\makecell[c]{Generator / replay\\FCIL}}
& TARGET   & Synthetic         & 2.80$\times$ & 43.18 \pmstd{0.96} & 70.22 \pmstd{1.02} & 30.66 \pmstd{1.28} \\
&
& MFCL     & Synthetic         & 3.00$\times$ & 44.02 \pmstd{1.05} & 70.10 \pmstd{1.04} & 29.12 \pmstd{1.22} \\
&
& HR       & Hybrid            & 3.20$\times$ & 45.34 \pmstd{0.92} & 68.70 \pmstd{0.94} & 26.38 \pmstd{1.13} \\
\cmidrule(lr){2-8}

& \multirow{3}{*}{\makecell[c]{Heterogeneous\\FCIL}}
& FedSpace & Proto             & 1.25$\times$ & 39.28 \pmstd{1.08} & 66.14 \pmstd{1.19} & 35.12 \pmstd{1.38} \\
&
& FedSSI   & Memory            & 1.15$\times$ & 40.58 \pmstd{1.02} & 66.82 \pmstd{1.11} & 34.18 \pmstd{1.32} \\
&
& FedCBDR  & Replay            & 2.80$\times$ & 45.76 \pmstd{0.86} & 68.80 \pmstd{0.90} & 25.84 \pmstd{1.05} \\
\cmidrule(lr){2-8}

& \multirow{2}{*}{Ours}
& PRO      & \multirow{2}{*}{\makecell[c]{Projected\\memory}} 
                                  & 1.12$\times$ & 43.86 \pmstd{0.82} & 67.20 \pmstd{0.91} & 27.36 \pmstd{1.08} \\
&
& PRO-MAX  &                    & 1.16$\times$ & \bestpm{46.72}{0.78} & 67.90 \pmstd{0.86} & \bestpm{23.64}{0.97} \\
\bottomrule
\end{tabular}
}
\end{table*}

\subsection{Results under Heterogeneous FCIL}
\label{subsec:heterogeneous-results}

\begin{table*}[t]
\centering
\caption{
Heterogeneous CIFAR-100 FCIL under the expanded replay budget selected from the homogeneous setting. Generator-based and replay-heavy methods use the same expanded budget as in Table~\ref{tab:homogeneous-results}; PRO and PRO-MAX remain fixed-budget projected-memory methods.
}
\label{tab:heterogeneous-results}
\footnotesize
\begin{tabular}{@{}ccccccc@{}}
\toprule
\makecell[c]{Group} 
& Method  
& Budget                                           
& Comm. $\downarrow$ 
& FAA $\uparrow$              
& CTA $\uparrow$             
& AF $\downarrow$            \\ 
\midrule
\multirow{3}{*}{\makecell[c]{FL\\only}} 
& FedAvg   & None   & 1.00$\times$ & 7.86 \pmstd{0.42}  & 48.64 \pmstd{1.58} & 82.74 \pmstd{1.36} \\
& FedProx  & None   & 1.00$\times$ & 8.92 \pmstd{0.47}  & 49.86 \pmstd{1.51} & 80.96 \pmstd{1.28} \\
& FedProto & Proto  & 1.08$\times$ & 18.74 \pmstd{0.96} & 52.38 \pmstd{1.43} & 66.82 \pmstd{1.74} \\
\midrule
\multirow{3}{*}{\makecell[c]{Generator / replay\\FCIL}} 
& TARGET  & Synthetic          & 2.80$\times$ & 27.86 \pmstd{1.74} & 53.42 \pmstd{1.86} & 57.84 \pmstd{2.18} \\
& MFCL    & Generative replay  & 3.00$\times$ & 29.42 \pmstd{1.68} & 54.76 \pmstd{1.79} & 54.63 \pmstd{2.04} \\
& HR      & Hybrid replay      & 3.20$\times$ & 32.18 \pmstd{1.52} & 56.92 \pmstd{1.64} & 49.76 \pmstd{1.87} \\
\midrule
\multirow{3}{*}{\makecell[c]{Heterogeneous\\FCIL}}   
& FedSpace & Proto             & 1.25$\times$ & 33.84 \pmstd{1.39} & 55.18 \pmstd{1.52} & 47.92 \pmstd{1.76} \\
& FedSSI   & Memory            & 1.15$\times$ & 34.62 \pmstd{1.31} & 55.74 \pmstd{1.44} & 46.38 \pmstd{1.69} \\
& FedCBDR  & Replay            & 2.80$\times$ & 38.96 \pmstd{1.18} & 59.36 \pmstd{1.27} & 39.72 \pmstd{1.48} \\
\midrule
\multirow{2}{*}{Ours}                                
& PRO     & \multirow{2}{*}{\makecell[c]{Projected\\memory}} 
                                      & 1.12$\times$ & 43.28 \pmstd{0.94} & 63.72 \pmstd{1.03} & 31.54 \pmstd{1.18} \\
& PRO-MAX &                                      & 1.16$\times$ & \bestpm{46.12}{0.88} & \bestpm{65.06}{0.96} & \bestpm{26.18}{1.07} \\
\bottomrule
\end{tabular}
\end{table*}

Table~\ref{tab:heterogeneous-results} moves the FCIL methods into the heterogeneous stress setting. Compared with the homogeneous expanded-resource regime, generator-based and replay-heavy methods degrade substantially even though they retain the larger replay budget. This pattern is consistent with the central stress hypothesis: increasing replay quantity does not necessarily fix replay quality when the task state used to derive replay is weak or temporally misaligned.

Among prior heterogeneous FCIL baselines, FedCBDR remains the strongest because its replay-balancing mechanism directly addresses class and task imbalance. FedSpace and FedSSI are more robust than generator-only baselines, but still suffer under the combined supervision-imbalance and stage-misalignment protocol. PRO improves over these baselines with a smaller communication budget by replacing input-space replay with projected class-level memory. PRO-MAX further reduces forgetting through memory alignment, suggesting that its main benefit is retention under representation drift rather than merely higher current-task fitting.

\subsection{When Generators Fail}
\label{subsec:generator-failure}

\begin{figure*}[t]
\centering
\begin{subfigure}{0.42\textwidth}
    \centering
    \includegraphics[width=\linewidth]{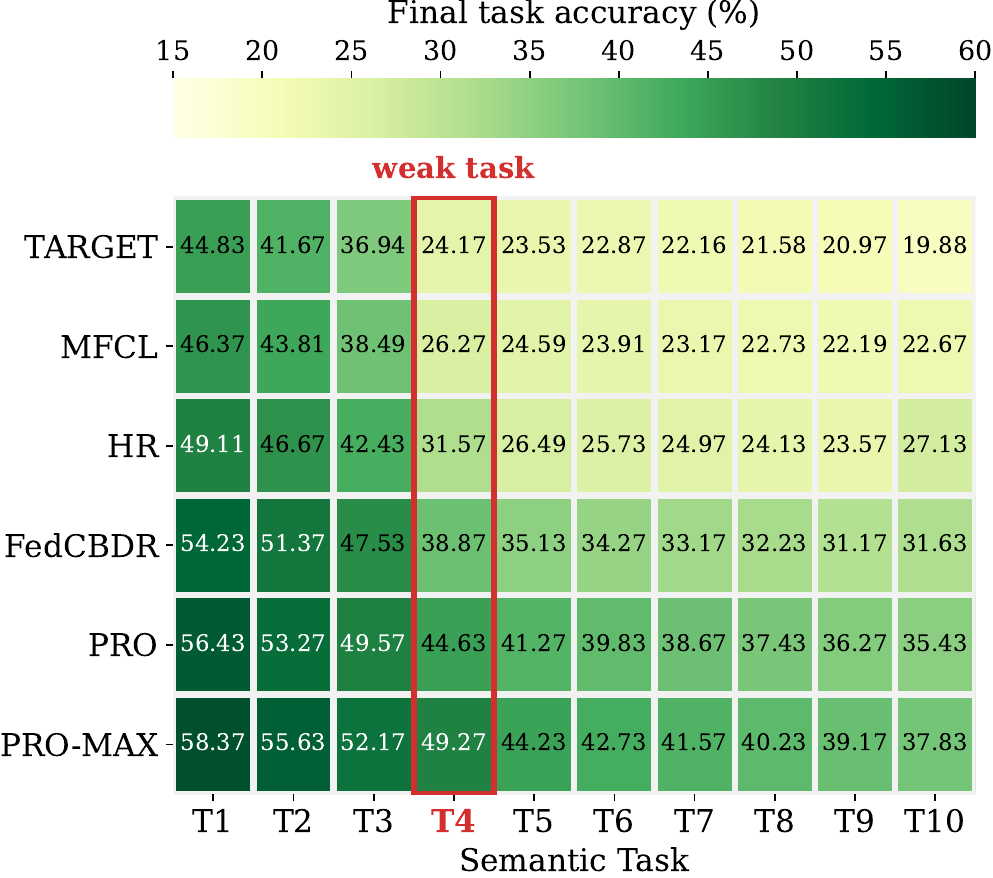}
    \caption{Task-wise final accuracy heatmap.}
\end{subfigure}
\hfill
\begin{subfigure}{0.56\textwidth}
    \centering
    \includegraphics[width=\linewidth]{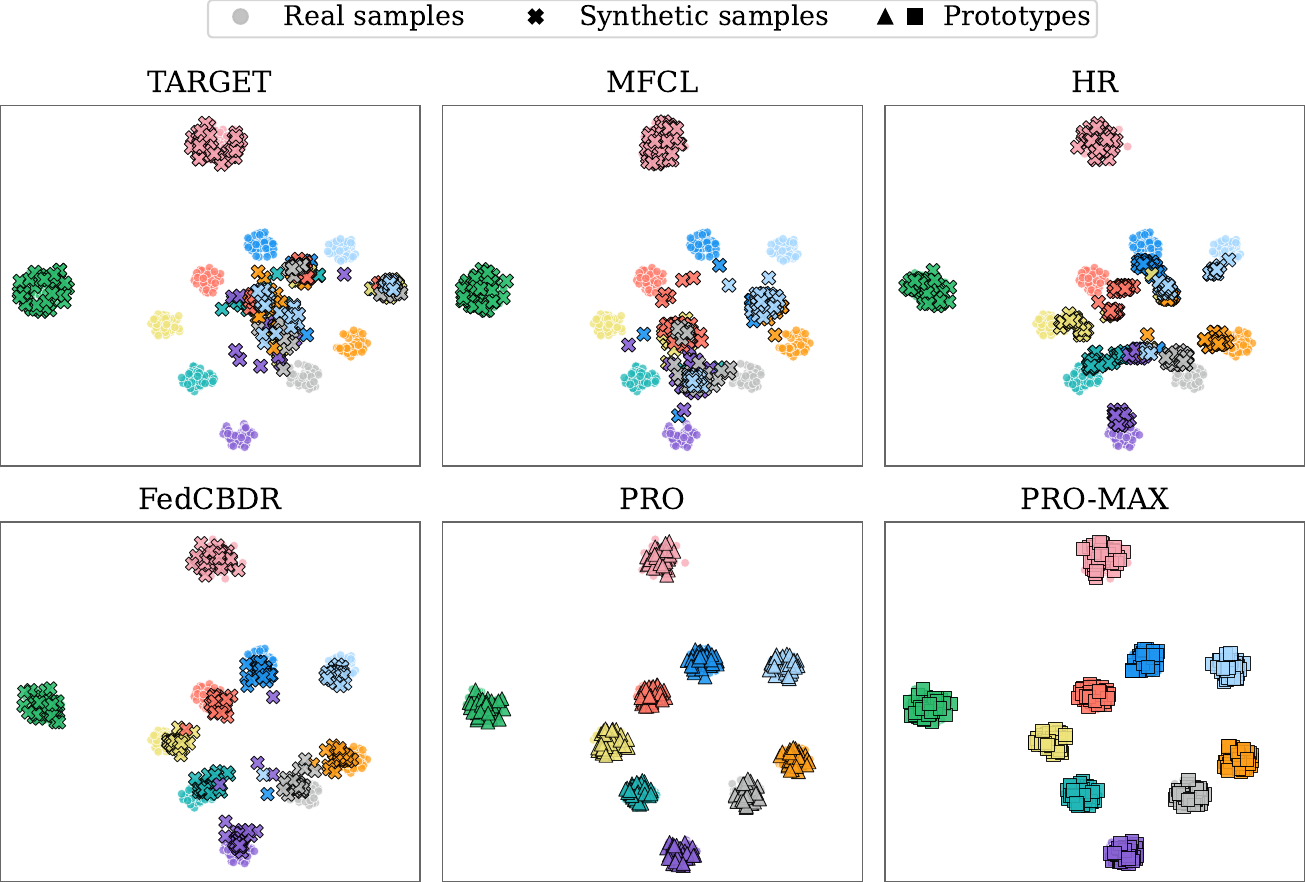}
    \caption{Real vs. generated features in the weak task.}
\end{subfigure}
\caption{
Generator-failure diagnostic under heterogeneous CIFAR-100. The heatmap reports final per-task accuracies after the full stream. The feature visualization compares real held-out samples with replay samples for prior baselines and projected-memory samples for PRO/PRO-MAX.
}
\label{fig:generator-stress}
\end{figure*}

\begin{table*}[t]
\centering
\caption{
Modality-portability evaluation on THUCNews-10 and Cora under both homogeneous and heterogeneous settings. We report FAA, CTA, AF, and normalized cumulative communication.
}
\label{tab:modality-results}
\resizebox{\textwidth}{!}{
\begin{tabular}{@{}ccccccccc@{}}
\toprule
\multirow{2}{*}{Method} 
& \multicolumn{4}{c}{Homogeneous}                                                                                                                  
& \multicolumn{4}{c}{Heterogeneous}                                                                                                                \\ 
\cmidrule(l){2-9} 
& FAA $\uparrow$                           
& CTA $\uparrow$                          
& AF $\downarrow$                         
& Comm. $\downarrow$ 
& FAA $\uparrow$                           
& CTA $\uparrow$                          
& AF $\downarrow$                         
& Comm. $\downarrow$ \\ 
\midrule

\multicolumn{9}{c}{\textbf{THUCNews-10}} \\ 
\midrule
FedAvg                  
& 68.54 {\scriptsize$\pm$ 1.28}          
& 84.18 {\scriptsize$\pm$ 1.04}          
& 31.82 {\scriptsize$\pm$ 1.74}          
& 1.00$\times$       
& 61.08 {\scriptsize$\pm$ 1.52}          
& 79.76 {\scriptsize$\pm$ 1.21}          
& 41.62 {\scriptsize$\pm$ 1.96}          
& 1.00$\times$       \\
FedProx                 
& 69.36 {\scriptsize$\pm$ 1.21}          
& 84.06 {\scriptsize$\pm$ 0.98}          
& 30.64 {\scriptsize$\pm$ 1.68}          
& 1.00$\times$       
& 62.42 {\scriptsize$\pm$ 1.47}          
& 80.13 {\scriptsize$\pm$ 1.18}          
& 40.16 {\scriptsize$\pm$ 1.87}          
& 1.00$\times$       \\
FedProto                
& 72.78 {\scriptsize$\pm$ 1.12}          
& 84.52 {\scriptsize$\pm$ 0.96}          
& 25.92 {\scriptsize$\pm$ 1.46}          
& 1.06$\times$       
& 66.24 {\scriptsize$\pm$ 1.36}          
& 80.48 {\scriptsize$\pm$ 1.10}          
& 34.82 {\scriptsize$\pm$ 1.66}          
& 1.06$\times$       \\ 
FedSeIT                 
& 78.86 {\scriptsize$\pm$ 0.97}          
& 85.42 {\scriptsize$\pm$ 0.84}          
& 18.62 {\scriptsize$\pm$ 1.24}          
& 1.25$\times$       
& 73.28 {\scriptsize$\pm$ 1.18}          
& 82.14 {\scriptsize$\pm$ 0.98}          
& 25.86 {\scriptsize$\pm$ 1.39}          
& 1.25$\times$       \\ 
\midrule
PRO                     
& 82.94 {\scriptsize$\pm$ 0.82}          
& 87.24 {\scriptsize$\pm$ 0.76}          
& 13.82 {\scriptsize$\pm$ 1.02}          
& 1.10$\times$       
& 78.42 {\scriptsize$\pm$ 1.04}          
& 84.08 {\scriptsize$\pm$ 0.91}          
& 20.84 {\scriptsize$\pm$ 1.26}          
& 1.10$\times$       \\
PRO-MAX                 
& \textbf{85.16} {\scriptsize$\pm$ 0.74} 
& \textbf{88.02} {\scriptsize$\pm$ 0.71} 
& \textbf{10.96} {\scriptsize$\pm$ 0.88} 
& 1.14$\times$       
& \textbf{81.62} {\scriptsize$\pm$ 0.96} 
& \textbf{85.22} {\scriptsize$\pm$ 0.86} 
& \textbf{16.42} {\scriptsize$\pm$ 1.10} 
& 1.14$\times$       \\ 

\midrule
\multicolumn{9}{c}{\textbf{Cora}} \\ 
\midrule
FedAvg                  
& 46.18 {\scriptsize$\pm$ 2.76}          
& 58.86 {\scriptsize$\pm$ 2.94}          
& 53.84 {\scriptsize$\pm$ 4.28}          
& 1.00$\times$       
& 39.84 {\scriptsize$\pm$ 2.98}          
& 53.26 {\scriptsize$\pm$ 3.16}          
& 61.82 {\scriptsize$\pm$ 4.56}          
& 1.00$\times$       \\
FedProx                 
& 47.06 {\scriptsize$\pm$ 2.68}          
& 59.12 {\scriptsize$\pm$ 2.88}          
& 51.76 {\scriptsize$\pm$ 4.12}          
& 1.00$\times$       
& 41.32 {\scriptsize$\pm$ 2.87}          
& 54.08 {\scriptsize$\pm$ 3.05}          
& 59.74 {\scriptsize$\pm$ 4.38}          
& 1.00$\times$       \\
FedProto                
& 52.46 {\scriptsize$\pm$ 2.51}          
& 61.18 {\scriptsize$\pm$ 2.72}          
& 44.28 {\scriptsize$\pm$ 3.96}          
& 1.08$\times$       
& 46.82 {\scriptsize$\pm$ 2.74}          
& 56.42 {\scriptsize$\pm$ 2.93}          
& 51.86 {\scriptsize$\pm$ 4.11}          
& 1.08$\times$       \\
POWER                   
& 65.84 {\scriptsize$\pm$ 4.94}          
& 70.86 {\scriptsize$\pm$ 3.38}          
& 28.64 {\scriptsize$\pm$ 4.72}          
& 1.42$\times$       
& 58.76 {\scriptsize$\pm$ 4.38}          
& 64.36 {\scriptsize$\pm$ 3.72}          
& 38.82 {\scriptsize$\pm$ 4.46}          
& 1.42$\times$       \\
MOTION                  
& 66.42 {\scriptsize$\pm$ 4.72}          
& 71.28 {\scriptsize$\pm$ 3.44}          
& 27.86 {\scriptsize$\pm$ 4.51}          
& 1.36$\times$       
& 60.14 {\scriptsize$\pm$ 4.21}          
& 65.08 {\scriptsize$\pm$ 3.58}          
& 35.94 {\scriptsize$\pm$ 4.29}          
& 1.36$\times$       \\ 
\midrule
PRO                     
& 68.86 {\scriptsize$\pm$ 4.06}          
& 72.82 {\scriptsize$\pm$ 3.06}          
& 23.84 {\scriptsize$\pm$ 4.06}          
& 1.12$\times$       
& 63.24 {\scriptsize$\pm$ 3.91}          
& 66.42 {\scriptsize$\pm$ 3.32}          
& 30.86 {\scriptsize$\pm$ 3.94}          
& 1.12$\times$       \\
PRO-MAX                 
& \textbf{70.42} {\scriptsize$\pm$ 3.82} 
& \textbf{73.64} {\scriptsize$\pm$ 2.88} 
& \textbf{20.96} {\scriptsize$\pm$ 3.74} 
& 1.16$\times$       
& \textbf{66.18} {\scriptsize$\pm$ 3.66} 
& \textbf{68.04} {\scriptsize$\pm$ 3.14} 
& \textbf{25.84} {\scriptsize$\pm$ 3.62} 
& 1.16$\times$       \\ 
\bottomrule
\end{tabular}
}
\end{table*}

Figure~\ref{fig:generator-stress} tests the observation in Section~\ref{sec:observation} through two linked views. The heatmap shows final task-wise accuracy, revealing whether a weak task is followed by later degradation that FAA, CTA, and AF may hide. The feature view examines replay quality by embedding real held-out samples and weak-task replay samples in the same feature space. Reliable replay should keep them close, while biased or incomplete weak-task states can shift replay away from the real cluster or cause collapse, separating replay failure from classifier miscalibration. Appendix~\ref{app:causal-generator-failure} quantifies this chain. Table~\ref{tab:generator-causal} reports weak-task accuracy after learning, replay-real mismatch, downstream drop, final FAA, and AF. Figure~\ref{fig:mismatch-downstream-by-method} plots MMD against downstream drop for each method across seeds and weak-task intervention strengths, with circles, squares, and triangles marking mild, medium, and severe interventions. The positive within-method trends show that larger mismatch aligns with larger downstream degradation, supporting that heterogeneous FCIL fails not only from limited replay quantity, but also from degraded replay quality.



\subsection{Modality-Portability Evaluation}
\label{subsec:modality-results}

To test whether the benefit of our method is tied to image benchmarks or reflects a more general continual federated learning mechanism, we further evaluate on a text benchmark, THUCNews-10, and a graph benchmark, Cora. Table~\ref{tab:modality-results} consolidates both the homogeneous and heterogeneous settings in a single view. The homogeneous block shows whether a method transfers across modalities when client streams remain aligned, while the heterogeneous block tests the harder and more realistic case in which cross-client inconsistency is added on top of modality-specific representation challenges.

\subsection{Ablation Study}
\label{subsec:ablation}
Table~\ref{tab:ablation} isolates the main design choices. The PRO and PRO-MAX comparison shows the gain from memory alignment.

\begin{wraptable}{r}{0.68\linewidth}
\centering
\vspace{-0.5cm}
\caption{
Ablation study on CIFAR-100 under the severe-heterogeneity regime.
}
\label{tab:ablation}
\footnotesize
\setlength{\tabcolsep}{2.5pt}
\begin{tabular}{lcccc}
\toprule
Variant & Comm. $\downarrow$ & FAA $\uparrow$ & CTA $\uparrow$ & AF $\downarrow$ \\
\midrule
PRO-MAX & 1.16$\times$ & 46.12 \pmstd{0.88} & 65.06 \pmstd{0.96} & 26.18 \pmstd{1.07} \\
PRO & 1.12$\times$ & 43.28 \pmstd{0.94} & 63.72 \pmstd{1.03} & 31.54 \pmstd{1.18} \\
after-adapter old memory & 1.10$\times$ & 38.64 \pmstd{1.10} & 62.86 \pmstd{1.05} & 41.38 \pmstd{1.36} \\
w/o global refresh & 1.13$\times$ & 40.86 \pmstd{1.06} & 62.94 \pmstd{1.04} & 36.82 \pmstd{1.32} \\
w/o shared warmup & 1.04$\times$ & 37.84 \pmstd{1.24} & 58.66 \pmstd{1.12} & 41.72 \pmstd{1.52} \\
w/o projected old memory & 1.00$\times$ & 24.16 \pmstd{1.18} & 65.88 \pmstd{1.05} & 62.46 \pmstd{1.94} \\
w/o proximal regularization & 1.12$\times$ & 39.72 \pmstd{1.15} & 62.48 \pmstd{1.08} & 39.16 \pmstd{1.41} \\
\bottomrule
\end{tabular}
\end{wraptable}

The \emph{after-adapter old memory} variant rehearses old classes in the adapted space and updates only the classifier. Its drop shows that old projected memories should enter before the adapter, so they regularize both the adapter and classifier, rather than only calibrating $W$. The \emph{without global refresh} variant keeps local or stale memory statistics instead of recomputing them with the aggregated global model. Its degradation confirms that global memory accounting keeps projected memories in a consistent feature space. Removing projected old memory causes the largest forgetting, showing that PRO performs continual consolidation rather than only improving current-task learning. Finally, the \emph{without shared warmup} and \emph{without proximal regularization} variants test whether projected memory needs a meaningful initial representation and controlled local plasticity under severe heterogeneity.

\section{Conclusion}

We studied heterogeneous federated class-incremental learning under supervision imbalance and stage misalignment. Motivated by the fragility of replay signals derived from weakly learned task states, we proposed PRO, a generator-free framework that replaces input-space replay with compact projected class memories and client-side balanced pseudo multi-task learning. We further introduced PRO-MAX, which adds confidence-weighted memory alignment to improve robustness when heterogeneous client updates shift the representation space. The framework avoids raw-data storage, input-space generators, local full-model distillation, and server-side gradient-based optimization during the incremental stream.

The main limitation is that projected memories depend on the quality and stability of the learned representation. When the warmup representation is weak or when class support is extremely sparse, class-level projected statistics may be noisy. PRO-MAX mitigates this issue through memory transport, but it cannot fully recover old-class geometry when no reliable neighborhood support is available. Future work should study stronger privacy accounting for projected memories, adaptive memory uncertainty estimation, and deployment under fully asynchronous client participation.

\newpage

{
    \small
    \bibliographystyle{unsrtnat}
    \bibliography{references}
}

\newpage


\appendix
\section{Extended Related Work}
\label{app:extended-related-work}

\subsection{Representation-Space Memory in Continual Learning}

Continual learning studies how a model can acquire new tasks while retaining performance on old ones. Existing methods are commonly grouped into regularization-based methods, which constrain important parameters from changing too much~\cite{kirkpatrick2017overcoming}; distillation-based methods, which preserve the outputs of previous models~\cite{li2017learning}; replay-based methods, which revisit stored or generated signals from old tasks~\cite{rebuffi2017icarl,buzzega2020dark}; and architecture-based methods, which allocate or expand model components over time~\cite{yan2021dynamically,wang2022foster}. These directions address the stability-plasticity trade-off from different perspectives.

The line most related to PRO is representation-space memory. Early class-incremental learning methods showed that feature-space class means can support incremental recognition~\cite{rebuffi2017icarl}. Later exemplar-free methods reduce the need for raw exemplars by using prototypes, prototype augmentation, or pseudo-features in representation space~\cite{zhu2021prototype,petit2023fetril}. This line is relevant to our work because it shows that old knowledge need not always be stored as input-level examples.

Nevertheless, centralized representation-memory methods do not directly address the constraints of federated class-incremental learning. In FCIL, memories must be communicated, clients may observe different label subsets, and the server aggregates updates from clients at different stages of the stream. PRO adapts the representation-memory idea to this setting by storing compact class-level projected statistics on the server and broadcasting them to clients for local pseudo multi-task learning. The memory is inserted before the adapter, so old classes provide gradients to both the adapter and the classifier rather than only calibrating the final classifier.

Another closely related issue is representation drift. When the feature extractor changes, old class memories computed under earlier models may become stale. Prior work in centralized class-incremental learning has studied semantic drift and proposed compensation mechanisms for old representations~\cite{yu2020semantic}. PRO-MAX follows the same high-level motivation but changes the operating setting. Instead of performing centralized drift compensation, each client estimates local feature movement from a small calibration cache, and the server aggregates confidence-weighted transport summaries. This design is necessary in heterogeneous FCIL because different clients can move their feature spaces in different directions depending on local task order and label support.

Finally, representation-space memory also raises privacy considerations. Storing per-example features is not automatically safe: deep representations and model outputs can be inverted or exploited to reveal information about private inputs~\cite{mahendran2015understanding,fredrikson2015model}, and gradients can leak private training examples in federated settings~\cite{zhu2019deep}. These risks motivate a coarser memory object. Prototype-based federated learning has shown that class-level representatives can be useful communication objects under heterogeneous clients~\cite{tan2022fedproto}. PRO adopts this class-level projected memory perspective. It does not claim formal privacy guarantees, but it avoids raw examples, generated inputs, and instance-level feature banks while retaining enough information for old-class rehearsal.

\subsection{Input-Space Replay and Generator-Based FCIL}

Federated class-incremental learning combines the optimization difficulty of FL with an expanding label space. GLFC is an early FCIL framework that explicitly models both local forgetting on clients and global forgetting after aggregation~\cite{dong2022federated}. Other FCIL methods address forgetting through knowledge distillation, surrogate data, task-adaptive decomposition, or communication-efficient adaptation~\cite{ma2022continual,yoon2021federated,liu2023fedet}. These methods provide important foundations for FCIL, but many are evaluated under relatively synchronized or vision-centered protocols.

A major exemplar-free branch uses data-free or generator-based replay to avoid storing old examples. TARGET performs exemplar-free distillation for federated class-continual learning~\cite{zhang2023target}. MFCL studies a data-free strategy for mitigating catastrophic forgetting in federated class-incremental vision tasks~\cite{babakniya2023data}. Other works improve replay using global twin generators, diffusion-driven replay, hybrid latent rehearsal, or client-assisted generative replay~\cite{nguyen2024overcoming,liang2024diffusion,nori2025federated,rong2025can}. This branch is attractive because it reduces raw exemplar storage and can perform well on standard vision benchmarks.

However, generator-centered replay has two limitations that are central to this paper. First, replay quality is downstream of task-learning quality. If a semantic task is learned from weak, biased, or incomplete supervision, then replay derived from that task can also be weak. Under heterogeneous FCIL, this dependency becomes more fragile because client updates may come from different label subsets or different task stages. Second, input-space generation is naturally modality-dependent. Image generators, text generators, and graph generators require different architectures, constraints, and validity assumptions. As a result, a generator-centered memory mechanism can become difficult to transfer across modalities.

PRO takes a different route. Rather than improving the generator, it replaces input-space replay with projected class-level memory. Once a modality-specific encoder maps raw inputs into a projected feature space, the same memory interface can be used for images, text, and graphs. PRO also keeps consolidation on clients: the server broadcasts projected memories, aggregates model updates, and refreshes memory statistics, but does not train a generator or run server-side SGD over pseudo data. PRO-MAX extends this design by aligning old projected memories when client updates shift the representation.

\subsection{Heterogeneous Federated Continual Learning}

Heterogeneous federated continual learning studies continual learning when clients do not share the same data distribution, task order, participation pattern, or learning stage. This setting builds on the broader FL literature on non-IID optimization. FedProx mitigates client drift with a proximal penalty~\cite{li2020federated}, while SCAFFOLD uses control variates to correct local update drift~\cite{karimireddy2020scaffold}. These methods are not class-incremental learners by themselves, but they motivate a key design choice in PRO: local plasticity should be controlled because heterogeneous clients can otherwise distort the shared representation and make projected memories stale.

Several federated continual learning methods explicitly address heterogeneity. FedWeIT decomposes knowledge into shared and client-specific components to support inter-client transfer under continual streams~\cite{yoon2021federated}. FedSeIT studies federated continual text classification through selective inter-client transfer~\cite{chaudhary2022federated}. Asynchronous federated continual learning methods, such as FedSpace, further relax synchronized task progress and study clients that may be at different stages of the stream~\cite{shenaj2023asynchronous}. More recent methods such as FedSSI and FedCBDR address rehearsal-free synaptic regularization or class-wise replay balancing for heterogeneous continual settings~\cite{li2024rehearsal,qi2025class}. These methods show that heterogeneity is not a minor extension of FCIL, but a defining difficulty.

Heterogeneity also appears in graph-based continual federated learning. POWER and MOTION study federated continual graph learning, where clients own different graph partitions and continual node-label streams~\cite{zhu2025federated,wan2025motion}. These settings are relevant because graph partitions can induce additional representation drift beyond ordinary label skew. Our graph experiments follow this broader motivation, while using projected memories computed from node embeddings rather than storing raw node features or edges.

The proposed setting focuses on heterogeneous class-incremental learning with two explicit controls: supervision imbalance and stage misalignment. Supervision imbalance changes how much and what kind of supervision each semantic task receives across clients. Stage misalignment changes whether clients are learning the same semantic task at the same communication time. These controls are chosen because they expose two different failure modes: weak or biased task learning, and aggregation across clients located at different stages of the continual stream.

PRO is complementary to prior heterogeneous FCL methods. Instead of relying on a generator, a large exemplar memory, or server-side pseudo-data optimization, it maintains a compact projected memory interface and performs balanced old-new consolidation locally on clients. PRO-MAX further targets the representation-drift component of heterogeneity by transporting old memories through confidence-weighted alignment. This makes the method particularly aligned with heterogeneous FCIL, where the server must preserve an expanding label space while clients provide uneven and temporally misaligned supervision.

Although the third direction in the main paper is now framed as heterogeneous FCL, the modality aspect remains important. Many heterogeneous FCIL methods are still evaluated primarily on vision streams, while text and graph settings require different encoders and data constraints. PRO confines modality-specific design to the encoder $h_{\theta}$. Once raw inputs are mapped into projected features, the same class-level memory, client-side pseudo multi-task objective, and server-side memory accounting are reused across modalities. This separation supports our goal of studying heterogeneous FCIL beyond image-only generator pipelines.

\section{Full Heterogeneous FCIL Protocol and Metrics}
\label{app:problem-setup}

\subsection{Detailed Heterogeneous FCIL Setup}

We consider a client-server federated system with $C$ clients and one central server. The set of semantic tasks is
\[
\mathcal{S}=\{1,2,\dots,T\}.
\]
Each semantic task $t\in\mathcal{S}$ introduces a disjoint class set $\mathcal{Y}_t$, and the full semantic label space is
\[
\mathcal{Y}
=
\bigcup_{t=1}^{T}\mathcal{Y}_t,
\qquad
\mathcal{Y}_t\cap\mathcal{Y}_{t'}=\varnothing
\quad
\text{for }t\neq t'.
\]

Client $c$ follows a client-specific semantic task order
\[
\pi_c=[\pi_c[0],\pi_c[1],\dots,\pi_c[T-1]],
\]
where $\pi_c[p]\in\mathcal{S}$ is the semantic task encountered by client $c$ at local position $p$. At that position, the client receives
\[
\mathcal{D}_{c,p}
=
\{(x_i,y_i)\}_{i=1}^{N_{c,p}},
\qquad
y_i\in
\widetilde{\mathcal{Y}}_{c,p}
\subseteq
\mathcal{Y}_{\pi_c[p]}.
\]
Here, $\widetilde{\mathcal{Y}}_{c,p}$ denotes the label subset actually observed by client $c$. This distinguishes client-observed training labels from the full semantic label set $\mathcal{Y}_{\pi_c[p]}$ of the task. Therefore, two clients can encounter the same semantic task but still receive different class subsets or different numbers of examples.

The cumulative semantic task history of client $c$ up to position $p$ is
\[
\mathcal{T}_c^{(p)}
=
\{\pi_c[0],\pi_c[1],\dots,\pi_c[p]\}.
\]
The observed training-label history of client $c$ is
\[
\overline{\mathcal{Y}}_c^{(p)}
=
\bigcup_{q=0}^{p}
\widetilde{\mathcal{Y}}_{c,q}.
\]
In contrast, the semantic label space associated with the tasks encountered by client $c$ is
\[
\mathcal{Y}_{c,\mathrm{sem}}^{(p)}
=
\bigcup_{t\in\mathcal{T}_c^{(p)}}
\mathcal{Y}_t.
\]
The federation-level prediction label space at position $p$ is defined at the semantic-task level:
\[
\mathcal{Y}^{(p)}
=
\bigcup_{c=1}^{C}
\mathcal{Y}_{c,\mathrm{sem}}^{(p)}.
\]
Thus, clients may train on incomplete label subsets, but the global classifier is evaluated over the semantic classes introduced to the federation.

We denote the global model after position $p$ by
\[
f(x;\Theta^{(p)}),
\]
where $\Theta^{(p)}$ is method-agnostic. This notation covers generator-based FCIL methods, distillation-based methods, prototype-based methods, and our projected-memory methods. All methods are evaluated under a single-head class-incremental protocol:
\[
\hat{y}
=
\arg\max_{y\in\mathcal{Y}^{(p)}}
f_y(x;\Theta^{(p)}).
\]
The task identity is not provided at inference time.

\subsection{Support Assumption and Label-Space Convention}

Because clients may observe only part of a semantic task, we assume that every evaluated semantic class is supported by at least one client when its task appears:
\[
\bigcup_{(c,p):\pi_c[p]=t}
\widetilde{\mathcal{Y}}_{c,p}
=
\mathcal{Y}_t,
\qquad
\forall t\in\mathcal{S}.
\]
This assumption allows supervision to be sparse, skewed, and unevenly distributed, but prevents the evaluation from including classes that are completely absent from the federation. Without this assumption, a model could be evaluated on labels that no client has ever observed, which would no longer measure continual learning performance.

Throughout the paper, $\widetilde{\mathcal{Y}}_{c,p}$ denotes client-observed training labels, while $\mathcal{Y}_t$ denotes the full semantic label set of task $t$. The prediction space $\mathcal{Y}^{(p)}$ is defined at the semantic-task level, not merely as the union of labels observed by a particular client. This convention is important because our evaluation uses the full semantic task test split rather than only the labels locally observed by each client.

\subsection{Supervision Imbalance Sampler}

Supervision imbalance captures the fact that clients may provide different amounts or coverage of supervision for the same semantic task. Let $p_c(t)$ be the position at which client $c$ encounters task $t$, and define
\[
\mathcal{D}_{c,t}
=
\mathcal{D}_{c,p_c(t)},
\qquad
\widetilde{\mathcal{Y}}_{c,t}
=
\widetilde{\mathcal{Y}}_{c,p_c(t)}.
\]
Then supervision imbalance occurs whenever, for clients $c\neq c'$,
\[
\widetilde{\mathcal{Y}}_{c,t}
\neq
\widetilde{\mathcal{Y}}_{c',t}
\qquad
\text{or}
\qquad
|\mathcal{D}_{c,t}|
\neq
|\mathcal{D}_{c',t}|.
\]
We define the task support set
\[
\mathcal{C}_t
=
\{c\in\{1,\dots,C\}:|\mathcal{D}_{c,t}|>0\}.
\]
Tasks with small $|\mathcal{C}_t|$, highly imbalanced $|\mathcal{D}_{c,t}|$, or incomplete $\widetilde{\mathcal{Y}}_{c,t}$ receive weak or biased federated supervision.

In experiments, supervision imbalance is controlled by two quantities. The first is the fraction of clients that support each semantic task. The second is the fraction of labels inside that task observed by each supporting client. The sampler first selects a support set $\mathcal{C}_t$ for each task $t$, then assigns each supporting client a subset of $\mathcal{Y}_t$. The assignment is constrained to satisfy the support assumption above, so that every evaluated class has at least one supporting client.

\subsection{Stage Misalignment Sampler}

Stage misalignment captures the fact that clients do not need to learn the same semantic task at the same communication time. Formally, at local position $p$, two clients can satisfy
\[
\pi_c[p]\neq \pi_{c'}[p].
\]
Thus, while one client is learning one semantic task, another client may be updating a different part of the label space. This makes aggregation harder because the server combines updates from clients optimizing different task objectives.

In experiments, stage misalignment is controlled by perturbing client-specific task orders. The homogeneous setting corresponds to
\[
\pi_c=[1,2,\dots,T]
\qquad
\text{for all }c.
\]
In heterogeneous settings, each client receives a perturbed order. Stronger perturbation creates stronger stage misalignment. This is distinct from ordinary non-IID label skew: non-IID skew changes the local class distribution within a task, while stage misalignment changes which semantic task each client is learning at a given position.

\subsection{Homogeneous FCIL as a Special Case}

The homogeneous FCIL setting is recovered when all clients share the same task order and each semantic task is supported by all or most clients. In this case, stage misalignment disappears, and the client-first metrics reduce to the standard class-incremental evaluation over the shared task sequence. We include homogeneous results to verify that PRO and PRO-MAX remain competitive in the setting where most prior FCIL methods are typically evaluated.

\subsection{Client-First Evaluation Metrics}

Because clients follow different task histories, evaluation should respect client-specific histories. For each semantic task $t$, let $\mathcal{D}_t^{\mathrm{test}}$ be the global test split restricted to labels in $\mathcal{Y}_t$. At position $p$, the cumulative test pool for client $c$ is
\[
\mathcal{D}^{\mathrm{test}}(\mathcal{T}_c^{(p)})
=
\bigcup_{t\in\mathcal{T}_c^{(p)}}
\mathcal{D}_t^{\mathrm{test}}.
\]
We evaluate on the full semantic task test split rather than a client-specific subset because the target model is global and should recognize all classes introduced by a semantic task.

\textbf{Final average accuracy.}
The cumulative accuracy of client $c$ at position $p$ is
\[
A_c^{(p)}
=
\frac{1}{|\mathcal{D}^{\mathrm{test}}(\mathcal{T}_c^{(p)})|}
\sum_{(x,y)\in\mathcal{D}^{\mathrm{test}}(\mathcal{T}_c^{(p)})}
\mathbf{1}
\left[
\arg\max_{y'\in\mathcal{Y}^{(p)}}
f_{y'}(x;\Theta^{(p)})=y
\right].
\]
The final average accuracy reported in the main paper is
\[
\mathrm{FAA}
=
\frac{1}{C}
\sum_{c=1}^{C}
A_c^{(T-1)}.
\]

\textbf{Current-task accuracy.}
Average accuracy does not show whether the model can learn newly arriving tasks. Therefore, we also evaluate the current semantic task encountered by each client:
\[
\mathrm{CTA}_c^{(p)}
=
\frac{1}{|\mathcal{D}_{\pi_c[p]}^{\mathrm{test}}|}
\sum_{(x,y)\in\mathcal{D}_{\pi_c[p]}^{\mathrm{test}}}
\mathbf{1}
\left[
\arg\max_{y'\in\mathcal{Y}^{(p)}}
f_{y'}(x;\Theta^{(p)})=y
\right].
\]
We average over clients and positions:
\[
\mathrm{CTA}
=
\frac{1}{CT}
\sum_{c=1}^{C}
\sum_{p=0}^{T-1}
\mathrm{CTA}_c^{(p)}.
\]

\textbf{Average forgetting.}
Let
\[
p_c(t)
=
\min\{p:\pi_c[p]=t\}
\]
be the first position at which client $c$ encounters semantic task $t$. For any $p\ge p_c(t)$, define
\[
A_{c,t}^{(p)}
=
\frac{1}{|\mathcal{D}_t^{\mathrm{test}}|}
\sum_{(x,y)\in\mathcal{D}_t^{\mathrm{test}}}
\mathbf{1}
\left[
\arg\max_{y'\in\mathcal{Y}^{(p)}}
f_{y'}(x;\Theta^{(p)})=y
\right].
\]
The forgetting of client $c$ on task $t$ is
\[
F_{c,t}
=
\max_{p_c(t)\le q\le T-1}
A_{c,t}^{(q)}
-
A_{c,t}^{(T-1)}.
\]
Let the last task encountered by client $c$ be
\[
t_c^{\mathrm{last}}=\pi_c[T-1].
\]
We exclude this task from forgetting because it has not experienced subsequent interference. The client-level forgetting is
\[
\mathrm{AF}_c
=
\frac{1}{|\mathcal{T}_c^{(T-1)}|-1}
\sum_{t\in\mathcal{T}_c^{(T-1)}\setminus\{t_c^{\mathrm{last}}\}}
F_{c,t}.
\]
The final average forgetting is
\[
\mathrm{AF}
=
\frac{1}{C}
\sum_{c=1}^{C}
\mathrm{AF}_c.
\]

\section{Additional Method Details}
\label{app:method-details}

\subsection{Federated In-Domain Warmup}
\label{app:warmup-objective}

The warmup objective is used before the supervised class-incremental stream begins. Its purpose is to initialize the modality-specific base encoder $h_\theta$ from in-domain data without relying on external pretrained weights. To avoid giving PRO and PRO-MAX an initialization advantage, the same federated in-domain self-supervised warmup is applied to all methods with the same backbone, client sampling schedule, number of warmup rounds, optimizer, and local warmup epochs.

Each client optimizes a modality-compatible self-supervised objective
\begin{equation}
\label{eq:warmup}
\mathcal{L}_{\mathrm{warm}}
=
\mathbb{E}_{x}
\left[
\ell_{\mathrm{ssl}}
\left(
r_{\omega}(h_{\theta}(\tilde{x})),s(x,\tilde{x})
\right)
\right],
\end{equation}
where $\tilde{x}\sim\mathcal{A}(x)$ and $r_\omega$ is a local auxiliary head. The auxiliary head is discarded after warmup, and only the warmed base encoder is retained for the supervised incremental stream.

The warmup phase is distinct from the continual-learning method itself. After warmup, each baseline runs its original FCIL mechanism, including its own distillation, replay, prototype, or generator module when applicable. PRO and PRO-MAX use the same warmed encoder as the base feature space for projected memories. Therefore, performance differences in the supervised incremental stream are attributed to the continual-learning mechanism rather than to different representation initialization.

During the class-incremental stream, PRO and PRO-MAX may still use a short auxiliary self-supervised regularizer only when the base encoder is unfrozen during controlled plasticity. This in-stream auxiliary term serves a different purpose from warmup: warmup initializes the representation before the stream, whereas the in-stream term stabilizes a short supervised current-task update. Unless otherwise stated, the auxiliary term is disabled after the early plasticity rounds.

\subsection{Algorithmic Summary}
\label{app:algorithmic-summary}

Algorithms~\ref{alg:pro} and~\ref{alg:promax} summarize the execution order of PRO and PRO-MAX. They refer to the equations and update rules defined in Section~\ref{sec:methodology} and Appendix~\ref{app:method-details}. The key point is that all gradient-based optimization is performed on clients, while the server only aggregates model updates, refreshes projected memories, and, in PRO-MAX, applies confidence-weighted memory alignment.

\begin{algorithm}[t]
\caption{PRO: Projected Rehearsal Orchestration}
\label{alg:pro}
\begin{algorithmic}[1]
\Require Client streams $\{\mathcal{D}_{c,p}\}$, task positions $p=0,\dots,T-1$, communication rounds, global model $\Theta^g$, primary memory $\mathcal{M}^u$, optional memory $\mathcal{M}^z$.
\Ensure Final global model $\Theta^g$ and projected memories.
\State Run the shared federated in-domain warmup using Equation~\eqref{eq:warmup}; discard the warmup head.
\State Initialize $\mathcal{M}^u\leftarrow\emptyset$ and, if enabled, $\mathcal{M}^z\leftarrow\emptyset$.
\For{each task position $p$}
    \For{each communication round $r$}
        \State Server samples participating clients and broadcasts $(\Theta^g,\mathcal{M}^u)$; optionally also broadcasts $\mathcal{M}^z$.
        \For{each participating client $c$ in parallel}
            \State Initialize local model $\Theta_c\leftarrow\Theta^g$.
            \State Perform short controlled-plasticity fitting using Equation~\eqref{eq:plastic}.
            \State Freeze the base encoder after the early plasticity rounds according to the schedule in Appendix~\ref{app:method-details}.
            \State Build current projected features from $\mathcal{D}_{c,p}$.
            \State Sample old projected memories from $\mathcal{M}^u$ using Equation~\eqref{eq:old-memory-sampling}.
            \State Form a class-balanced projected pseudo multi-task batch.
            \State Update the adapter and classifier using Equation~\eqref{eq:pro-loss}.
            \State Upload the local model update and sample count to the server.
        \EndFor
        \State Server aggregates local models using Equation~\eqref{eq:fedavg}.
    \EndFor
    \State Server triggers global-model memory refresh.
    \State Clients recompute current-class memory statistics under the aggregated global model.
    \State Server updates $\mathcal{M}^u$ using Equation~\eqref{eq:global-memory-refresh}; optionally refreshes $\mathcal{M}^z$.
\EndFor
\end{algorithmic}
\end{algorithm}

\begin{algorithm}[t]
\caption{PRO-MAX: Memory Alignment eXtension}
\label{alg:promax}
\begin{algorithmic}[1]
\Require Same inputs as Algorithm~\ref{alg:pro}, calibration subset size, neighbor count $K$, and transport bandwidth.
\Ensure Final global model $\Theta^g$ and aligned projected memories.
\State Run the shared warmup and initialize memories as in Algorithm~\ref{alg:pro}.
\For{each task position $p$}
    \For{each communication round $r$}
        \State Server samples participating clients and broadcasts the global model and memories.
        \For{each participating client $c$ in parallel}
            \State Initialize local model $\Theta_c\leftarrow\Theta^g$.
            \State Cache calibration features before current-task fitting.
            \State Perform controlled-plasticity fitting using Equation~\eqref{eq:plastic}.
            \State Recompute calibration features after fitting and estimate local memory transport using Equation~\eqref{eq:local-transport}.
            \State Sample old projected memories from the locally aligned memory centers.
            \State Form a class-balanced projected pseudo multi-task batch.
            \State Update the adapter and classifier using Equation~\eqref{eq:pro-loss}.
            \State Upload the local model update, sample count, and transport summaries.
        \EndFor
        \State Server aggregates local models using Equation~\eqref{eq:fedavg}.
        \State Server aligns old memories using confidence-weighted transport in Equation~\eqref{eq:server-transport}.
    \EndFor
    \State Server refreshes current-class memories under the aggregated global model as in Algorithm~\ref{alg:pro}.
    \State If $\mathcal{M}^z$ is enabled, apply the same optional refresh and alignment logic in adapted space.
\EndFor
\end{algorithmic}
\end{algorithm}

\subsection{PRO Client Update}

At round $r$ of position $p$, the server broadcasts the global model and base-space memory $(\Theta^g,\mathcal{M}^u)$ to participating clients. If the auxiliary adapted-space memory is enabled, the server also broadcasts $\mathcal{M}^z$. Each client initializes its local model from $\Theta^g$.

The client first performs a short current-task fitting phase:
\[
\mathcal{L}_{\mathrm{fit}}
=
\mathcal{L}_{\mathrm{sup}}
+
\lambda_{\mathrm{aux}}\mathcal{L}_{\mathrm{aux}}
+
\lambda_{\mathrm{prox}}
\left\|
\Pi_r(\Theta)-\Pi_r(\Theta^g)
\right\|_2^2,
\]
where
\[
\mathcal{L}_{\mathrm{sup}}
=
\mathbb{E}_{(x,y)\in\mathcal{D}_{c,p}}
\mathrm{CE}
\left(
W^\top a_{\psi}(h_{\theta}(x)),y
\right).
\]
The operator $\Pi_r(\cdot)$ selects only the parameters that are trainable in round $r$. 
The auxiliary term is instantiated as a short in-stream consistency regularizer over two stochastic views of the same current-task input. For a mini-batch $\mathcal{B}$, let $\tilde{x}_i^{(1)},\tilde{x}_i^{(2)}\sim \mathcal{A}(x_i)$ be two modality-compatible augmentations, and let
\[
v_i^{(a)}
=
\frac{r_{\omega}(h_{\theta}(\tilde{x}_i^{(a)}))}
{\|r_{\omega}(h_{\theta}(\tilde{x}_i^{(a)}))\|_2+\epsilon},
\qquad a\in\{1,2\},
\]
where $r_{\omega}$ is a lightweight projection head used only for the auxiliary regularizer. We use the symmetric contrastive consistency loss
\[
\mathcal{L}_{\mathrm{aux}}
=
\frac{1}{2|\mathcal{B}|}
\sum_{i=1}^{|\mathcal{B}|}
\left[
\ell_{\mathrm{NCE}}(v_i^{(1)},v_i^{(2)})
+
\ell_{\mathrm{NCE}}(v_i^{(2)},v_i^{(1)})
\right],
\]
with
\[
\ell_{\mathrm{NCE}}(v_i,v_i^+)
=
-
\log
\frac{
\exp(\mathrm{sim}(v_i,v_i^+)/\tau_{\mathrm{ssl}})
}{
\sum_{j=1}^{|\mathcal{B}|}
\exp(\mathrm{sim}(v_i,v_j^{+})/\tau_{\mathrm{ssl}})
},
\qquad
\mathrm{sim}(a,b)=a^\top b .
\]
The auxiliary head $r_\omega$ is discarded after the short plasticity phase and is not part of the persistent model. We set $\lambda_{\mathrm{aux}}>0$ only during the first two base-unfrozen rounds of each new task and set $\lambda_{\mathrm{aux}}=0$ after the base encoder is frozen. Unless otherwise stated, we use $\tau_{\mathrm{ssl}}=0.5$, $\lambda_{\mathrm{aux}}=0.05$ for image streams, and $\lambda_{\mathrm{aux}}=0.01$ for text and graph streams.
When the base encoder is frozen, it is excluded from $\Pi_r(\Theta)$ and $\lambda_{\mathrm{aux}}=0$.

After current-task fitting, the client builds a projected pseudo multi-task dataset. For current classes, it uses
\[
\mathcal{U}^{\mathrm{cur}}_{c,p}
=
\{(h_{\theta_c}(x_i),y_i):(x_i,y_i)\in\mathcal{D}_{c,p}\}.
\]
For old classes, it samples
\begin{equation}
\label{eq:old-memory-sampling}
\tilde{u}_{y,j}
=
\mu_y^u+\gamma\sigma_y^u\odot\epsilon_j,
\qquad
\epsilon_j\sim\mathcal{N}(0,I),
\end{equation}
from the base-space memory bank. This gives
\[
\widetilde{\mathcal{U}}^{\mathrm{old}}_{c,p}
=
\{(\tilde{u}_{y,j},y):y\in\mathcal{Y}^{(p-1)}\}.
\]
The client trains on
\[
\mathcal{U}^{\mathrm{joint}}_{c,p}
=
\mathcal{U}^{\mathrm{cur}}_{c,p}
\cup
\widetilde{\mathcal{U}}^{\mathrm{old}}_{c,p}
\]
using
\[
\mathcal{L}_{c}^{\mathrm{PRO}}
=
\mathbb{E}_{(u,y)\in\mathcal{U}^{\mathrm{joint}}_{c,p}}
\mathrm{CE}
\left(
W^\top a_{\psi}(u),y
\right).
\]
We do not add an additional proximal term to this projected pseudo multi-task objective. This batch is already class-balanced over current and old classes, and its purpose is to recalibrate the adapter and classifier with a balanced old-new signal.

\subsection{Server Aggregation and Global-Model Memory Refresh}

After local training, clients upload model updates. The server aggregates them using FedAvg:
\begin{equation}
\label{eq:fedavg}
\Theta^g
\leftarrow
\sum_c\omega_c\Theta_c,
\qquad
\omega_c=\frac{n_c}{\sum_{c'}n_{c'}}.
\end{equation}
The server then refreshes current-class memories under the aggregated global model. This is important because statistics computed under local models may not lie in the same feature space as the final global model.

For base-space memory, client $c$ recomputes, without gradient updates,
\[
\mu_{c,y}^{u,g}
=
\frac{1}{n_{c,y}}
\sum_{(x_i,y_i)\in\mathcal{D}_c:y_i=y}
h_{\theta^g}(x_i),
\]
and
\[
m_{c,y}^{u,g}
=
\frac{1}{n_{c,y}}
\sum_{(x_i,y_i)\in\mathcal{D}_c:y_i=y}
h_{\theta^g}(x_i)\odot h_{\theta^g}(x_i).
\]
The server aggregates
\begin{equation}
\label{eq:global-memory-refresh}
\mu_y^{u}
=
\frac{\sum_c n_{c,y}\mu_{c,y}^{u,g}}
{\sum_c n_{c,y}},
\qquad
m_y^{u}
=
\frac{\sum_c n_{c,y}m_{c,y}^{u,g}}
{\sum_c n_{c,y}},
\end{equation}
and computes
\[
(\sigma_y^u)^2
=
\max(m_y^u-\mu_y^u\odot\mu_y^u,\epsilon).
\]

If the auxiliary adapted-space memory is enabled, clients also compute statistics for
\[
z=a_{\psi^g}(h_{\theta^g}(x)).
\]
The server aggregates the adapted-space mean $\mu_y^z$ and radius $\rho_y^z$:
\[
\rho_y^z
=
\sqrt{
\frac{1}{d_z}
\mathbb{E}_{x\sim y}
\|a_{\psi^g}(h_{\theta^g}(x))-\mu_y^z\|_2^2
}.
\]

This refresh step is memory accounting rather than training. By memory accounting, we mean aggregating class-level means, second moments, radii, counts, and memory metadata. The server does not optimize model parameters or run SGD over pseudo data.

\subsection{PRO-MAX Client-Side Memory Transport}

PRO-MAX inserts memory alignment between current-task fitting and projected pseudo multi-task training. Each client first caches features on a small calibration subset, performs the short current-task fitting update, recomputes the calibration features, and estimates how old memories should move. The locally aligned memories are then used to construct the projected pseudo multi-task batch in the same round.

Let
\[
\mathcal{B}_{c,p}\subset\mathcal{D}_{c,p}
\]
be the calibration subset. Before the fitting step, the client caches
\[
u_i^- = h_{\theta^-}(x_i),
\qquad
x_i\in\mathcal{B}_{c,p}.
\]
After the fitting step, it recomputes
\[
u_i^+ = h_{\theta^+}(x_i),
\qquad
\delta_i^u=u_i^+-u_i^-.
\]
For an old class $y$, define
\[
d_{i,y}^u
=
\|u_i^- - \mu_y^u\|_2^2.
\]
Let $\mathcal{N}_y^u$ be the set of $K$ nearest calibration samples to $\mu_y^u$. The client assigns weights
\[
\alpha_{i,y}^u
=
\exp
\left(
-\frac{d_{i,y}^u}{2\tau_u^2}
\right).
\]
The local transport vector is
\[
\Delta_{c,y}^{u}
=
\frac{
\sum_{i\in\mathcal{N}_y^u}
\alpha_{i,y}^u\delta_i^u
}{
\sum_{i\in\mathcal{N}_y^u}
\alpha_{i,y}^u
+
\epsilon
},
\]
with confidence
\[
q_{c,y}^{u}
=
\sum_{i\in\mathcal{N}_y^u}
\alpha_{i,y}^{u}.
\]
The client forms a locally aligned memory center
\begin{equation}
\label{eq:local-transport}
\tilde{\mu}_{c,y}^{u}
=
\mu_y^u+\Delta_{c,y}^{u},
\end{equation}
and samples old projected features from this aligned center before performing projected pseudo multi-task training.

\subsection{PRO-MAX Server-Side Memory Alignment}

After local training, clients upload model updates and transport summaries. The server first aggregates model parameters as in PRO. It then aligns old base memories by confidence-weighted aggregation:
\begin{equation}
\label{eq:server-transport}
\Delta_y^{u,g}
=
\frac{
\sum_c q_{c,y}^{u}\Delta_{c,y}^{u}
}{
\sum_c q_{c,y}^{u}+\epsilon
},
\qquad
\mu_y^u
\leftarrow
\mu_y^u+\Delta_y^{u,g}.
\end{equation}
The confidence score prevents clients whose current data are far from an old class from strongly influencing that class memory.

If the auxiliary adapted-space memory $\mathcal{M}^z$ is enabled, the same construction is applied in adapted space. The client caches
\[
z_i^- = a_{\psi^-}(u_i^-),
\qquad
z_i^+ = a_{\psi^+}(u_i^+),
\qquad
\delta_i^z=z_i^+-z_i^-.
\]
The server aggregates adapted-space transport vectors and updates
\[
\mu_y^z
\leftarrow
\mathrm{norm}(\mu_y^z+\Delta_y^{z,g}).
\]
This adapted-space alignment is auxiliary. The main rehearsal memory remains $\mathcal{M}^u$.

\subsection{Class-Balanced Projected Training}

Because current classes use real local data while old classes are represented by projected memory, ordinary mini-batch sampling can bias the local objective toward current classes. We therefore use a class-balanced sampler over $\mathcal{Y}^{(p)}$. Equivalently, the objective approximates
\[
\frac{1}{|\mathcal{Y}^{(p)}|}
\sum_{y\in\mathcal{Y}^{(p)}}
\mathbb{E}_{u\sim q_y}
\mathrm{CE}(W^\top a_\psi(u),y),
\]
where $q_y$ is the empirical current distribution for new classes and the projected memory distribution for old classes.

\section{Experimental Setup Details}
\label{app:exp-details}

\subsection{Datasets and Roles}

We evaluate PRO and PRO-MAX on image, text, and graph classification benchmarks. CIFAR-100 is the main image benchmark because its 100 classes support a sufficiently long class-incremental stream. We use it for homogeneous FCIL, heterogeneous FCIL, generator-replay diagnostics, resource analysis, and ablations. THUCNews-10 and Cora are used as modality-portability checks: THUCNews-10 tests text classification, while Cora tests graph node classification. In addition, we report scale-up results on TinyImageNet, CLINC150, and ogbn-arxiv. TinyImageNet increases visual complexity, CLINC150 provides a longer fine-grained intent stream, and ogbn-arxiv provides a larger graph node-classification benchmark.

All datasets are converted into class-incremental streams by partitioning labels into semantic tasks. For every dataset and seed, all methods use identical task splits and client partitions. This ensures that performance differences come from the learning method rather than from different data partitions.

\subsection{Task Splits and Client Partitions}

CIFAR-100 is split into 10 semantic tasks with 10 classes per task. THUCNews-10 is split into 5 tasks with 2 categories per task. Cora is split into 3 tasks with class groups of sizes 2, 2, and 3. TinyImageNet, CLINC150, and ogbn-arxiv follow fixed-size class splits whenever possible, with remaining classes assigned to the final task.

For each seed, we first sample a class order and then partition it into semantic tasks. The same task split is used by all methods. We then construct client-specific streams according to the heterogeneity controls described below. The partition sampler enforces the support assumption in Appendix~\ref{app:problem-setup}: every evaluated class is observed by at least one client when its semantic task appears. Thus, clients may observe incomplete or imbalanced label subsets, but no evaluated class is completely absent from the federation.

\subsection{Heterogeneity Controls}

We instantiate heterogeneity using two independent controls: supervision imbalance and stage misalignment. Supervision imbalance controls how much support each semantic task receives across clients. Specifically, we vary the fraction of clients that support each semantic task and the fraction of labels inside the task observed by each supporting client. This creates tasks with uneven client support and incomplete local label coverage.

Stage misalignment controls whether clients learn the same semantic task at the same communication time. In the homogeneous setting, all clients follow the same task order. In heterogeneous settings, each client receives a perturbed task order. Stronger perturbation creates stronger stage misalignment. The severe-heterogeneity regime combines low task support with strong order perturbation.

These two controls are different from ordinary non-IID label skew. Label skew changes the local class distribution within a task, while stage misalignment changes which semantic task a client is learning at a given position. This distinction is important because heterogeneous FCIL requires the server to aggregate updates from clients that may be optimizing different parts of the label space.

\subsection{Core Numerical Protocol}
\label{app:core-numerical-protocol}

Table~\ref{tab:core-protocol} summarizes the default numerical protocol used in the experiments. For every dataset and seed, all methods use the same task split, client partition, client participation schedule, number of communication rounds, number of local epochs, and random initialization. This ensures that differences in performance are attributable to the learning method rather than to different data partitions or communication schedules.

\begin{table*}[t]
\centering
\caption{
Core experimental protocol. ``Participants'' denotes the number of clients sampled per communication round. Graph mini-batches are defined over target nodes and their local graphs.
}
\label{tab:core-protocol}
\resizebox{\textwidth}{!}{
\begin{tabular}{lccccccc}
\toprule
Dataset & Task split & Clients & Participants & Rounds/task & Local epochs & Batch size & Seeds \\
\midrule
CIFAR-100 & $10\times 10$ classes & 20 & 10 & 20 & 5 & 64 & 5 \\
TinyImageNet & $20\times 10$ classes & 20 & 10 & 20 & 5 & 64 & 5 \\
THUCNews-10 & $5\times 2$ categories & 20 & 10 & 10 & 3 & 64 & 5 \\
CLINC150 & $15\times 10$ intents & 20 & 10 & 10 & 3 & 64 & 5 \\
Cora & $3$ tasks $(2/2/3)$ classes & 5 & 5 & 30 & 5 & full local graph & 5 \\
ogbn-arxiv & $8\times 5$ classes & 10 & 5 & 20 & 3 & 1024 target nodes & 5 \\
\bottomrule
\end{tabular}
}
\end{table*}

Unless otherwise stated, the severe-heterogeneity regime uses $40\%$ task support, $50\%$ per-client label coverage inside each supported task, and $60\%$ task-order disorder. The homogeneous setting uses full task support, full label coverage, and no task-order disorder. The heterogeneity sweeps vary one factor at a time while keeping the other factors fixed at the severe-regime default.

\begin{table}[t]
\centering
\caption{
Heterogeneity settings used in the main and appendix experiments.
}
\footnotesize
\label{tab:heterogeneity-settings}
\begin{tabular}{lccc}
\toprule
Setting & Task support & Label coverage & Order disorder \\
\midrule
Homogeneous & $100\%$ & $100\%$ & $0\%$ \\
Severe heterogeneity & $40\%$ & $50\%$ & $60\%$ \\
Support sweep & $\{80,60,40,20\}\%$ & $50\%$ & $60\%$ \\
Order sweep & $40\%$ & $50\%$ & $\{0,30,60,90\}\%$ \\
\bottomrule
\end{tabular}
\end{table}

For optimization, image experiments use SGD with momentum $0.9$, weight decay $5\times 10^{-4}$, and initial learning rate $0.05$. Text experiments use Adam with learning rate $10^{-3}$ and weight decay $10^{-4}$. Graph experiments use Adam with learning rate $5\times 10^{-3}$ and weight decay $5\times 10^{-4}$. Hyperparameters are selected on held-out validation seeds and then fixed for all reported test seeds.

The federated in-domain warmup uses unlabeled training data only and does not use class labels, task identities, or test examples. We use 20 warmup rounds for image datasets and 10 warmup rounds for text and graph datasets, with one local warmup epoch per round. The auxiliary warmup head is discarded before the supervised incremental stream starts.

\subsection{Replay, Projected-Memory, and Alignment Budgets}
\label{app:sample-budgets}

Table~\ref{tab:sample-budgets} specifies the replay and projected-memory budgets used in the main experiments. For PRO and PRO-MAX, old samples are projected pseudo features sampled from the base-space memory bank $\mathcal{M}^u$; they are not input-space synthetic examples. For generator-based and replay-heavy baselines, the budget denotes the number of replay samples available per old class during local training. The resource-matched regime uses the same per-class budget as PRO/PRO-MAX, while the resource-expanded regime gives replay-heavy baselines a larger budget selected on the homogeneous validation setting.

\begin{table}[t]
\centering
\caption{
Replay and projected-memory budgets. Budgets are specified per old class per participating client per local epoch.
}
\footnotesize
\label{tab:sample-budgets}
\begin{tabular}{lcc}
\toprule
Object & Matched resource & Expanded resource \\
\midrule
PRO/PRO-MAX projected pseudo features & 20 & 20 \\
TARGET/MFCL/HR replay samples & 20 & 100 \\
FedCBDR replay samples & 20 & 100 \\
FedProto/FedSpace prototype summaries & class-level & expanded class-level \\
FedSSI memory summaries & class-level & expanded class-level \\
\bottomrule
\end{tabular}
\end{table}

For PRO and PRO-MAX, the projected pseudo multi-task batch is class-balanced over the current and old label space. In each local epoch, we sample up to 20 projected pseudo features per old class and balance them with current-task features. For PRO-MAX, the calibration subset size is $|\mathcal{B}_{c,p}|=128$, the number of nearest neighbors is $K=5$, the projected-memory noise scale is $\gamma=0.10$, and the proximal coefficient is $\lambda_{\mathrm{prox}}=10^{-2}$. The base encoder is updated only in the first two rounds of each new task; later rounds freeze the base encoder and update the adapter and classifier.

Memory refresh is performed under the aggregated global model at the final round of each task. Participating clients recompute class-level means, second moments, radii, and counts without gradient updates. These statistic uploads are included in the communication accounting in Appendix~\ref{app:cost-details}.

\subsection{Backbones and Initialization}

For image benchmarks, we use a randomly initialized ResNet-18~\cite{he2016deep}. No ImageNet-pretrained weights are used in the main experiments unless explicitly stated in an appendix ablation. This keeps representation learning fully in-domain and avoids external pretraining.

For text benchmarks, we use a TextCNN-style encoder with embedding lookup, convolution, max pooling, and a classifier. This choice follows the CNN-style client architecture used in federated continual text classification settings such as FedSeIT~\cite{chaudhary2022federated}. For text benchmarks, we use a TextCNN-style encoder with embedding lookup, convolution, max pooling, and a classifier. This choice follows the CNN-style client architecture used in federated continual text classification settings such as FedSeIT~\cite{chaudhary2022federated}. We use the same text backbone for THUCNews-10 and CLINC150 unless otherwise stated.

For graph node classification, we use a two-layer GAT encoder, following recent federated continual graph learning protocols that adopt GAT-style backbones for clients and server~\cite{zhu2025federated,wan2025motion}. Each client owns a disjoint node subset and its local computation graph. Projected memories are computed from node embeddings after the GNN encoder and do not store raw node features or edges. The class-incremental stream is defined over node labels, and evaluation is performed on the corresponding global node test split.

\subsection{Warmup Protocol and Fairness}
All methods, including FedAvg, FedProx, FedProto, generator/replay baselines,
heterogeneity-aware baselines, PRO, and PRO-MAX, are initialized with the same
federated in-domain self-supervised warmup before the supervised incremental
stream. The warmup uses the same backbone, client sampling schedule, number of
warmup rounds, optimizer, local warmup epochs, and unlabeled training split for
all methods. The auxiliary warmup head is discarded before the FCIL stream.

The warmup uses no class labels, task identities, or test examples. Since the
warmup is common to all methods, main communication tables report the supervised
incremental-stream cost, while runtime tables report warmup time separately.
The \emph{w/o shared warmup} ablation removes this shared initialization to
measure how much projected memory depends on representation quality; it is not
the default comparison protocol.

\subsection{Baseline Adaptations}

We compare against three main groups of baselines and additional modality-specific baselines. The first group contains FL-only methods: FedAvg~\cite{mcmahan2017communication}, FedProx~\cite{li2020federated}, and FedProto~\cite{tan2022fedproto}. These methods do not explicitly solve FCIL but provide optimization and communication anchors.

The second group contains generator-based or replay-heavy FCIL methods, including TARGET~\cite{zhang2023target}, MFCL~\cite{babakniya2023data}, and HR~\cite{nori2025federated}. These baselines are important because the central motivation of this paper concerns replay quality under weak task learning. We evaluate them under both resource-matched and resource-expanded regimes.

The third group contains heterogeneity-aware or asynchronous FCIL methods, including FedSpace~\cite{shenaj2023asynchronous}, FedSSI~\cite{li2024rehearsal}, and FedCBDR~\cite{qi2025class}. These methods are included to test whether PRO and PRO-MAX improve over baselines designed for heterogeneous continual streams.

For modality-specific experiments, we include FedSeIT~\cite{chaudhary2022federated} for text classification when its assumptions match the THUCNews protocol. For graph node classification, we compare with POWER~\cite{zhu2025federated} and MOTION~\cite{wan2025motion}. When a baseline is not naturally applicable to a modality, we mark it as not applicable rather than modifying it into a different method.

\subsection{Resource-Matched and Resource-Expanded Regimes}

For CIFAR-100, we evaluate two resource regimes. In the resource-matched regime, additional replay or memory communication is matched to the projected-memory budget of PRO and PRO-MAX. This asks whether methods perform well when they receive comparable extra communication beyond standard model updates.

In the resource-expanded regime, replay-heavy, memory-heavy, and prototype-augmented baselines receive a larger shared replay or memory budget selected on the homogeneous validation setting. This expanded budget is then reused in heterogeneous FCIL. The purpose is to test whether replay degradation under heterogeneity persists even when replay quantity is no longer the limiting factor. PRO and PRO-MAX keep their fixed projected-memory budget in both regimes.

\subsection{Hyperparameters, Seeds, and Error Bars}

Hyperparameters are selected on held-out validation seeds and then fixed for all test seeds. We do not tune hyperparameters separately on the final test streams. All methods use the same number of communication rounds, local epochs, participation ratio, optimizer family, and random seeds for each dataset.

Unless otherwise stated, reported numbers are averaged over multiple seeds, and error bars denote standard deviation across seeds. The random sources include task order, client partitioning, model initialization, and mini-batch sampling.

\subsection{Compute and Implementation}

We report communication, memory, and training cost in Appendix~\ref{app:cost-details}. Communication includes model uplink and downlink, memory downlink, statistic uplink, and, for PRO-MAX, memory-transport uplink. Server memory includes persistent projected memory banks. Client extra memory includes temporary buffers such as PRO-MAX calibration features. Warmup cost is reported separately from incremental-stream cost.

During the incremental stream, PRO and PRO-MAX keep the server lightweight. The server performs aggregation, memory accounting, optional non-gradient corrections, and broadcast. It does not run gradient-based optimization over pseudo data.

\section{Generator-Failure Stress Test Details}
\label{app:generator-stress}

\subsection{Purpose of the Stress Test}

The generator-failure stress test evaluates a specific failure mode of heterogeneous FCIL. We do not claim that generator-based replay always fails. Instead, we test whether replay quality becomes unreliable when a semantic task is weakly learned under supervision imbalance or stage misalignment.

The stress test is motivated by the following dependency:
\[
\text{weak task learning}
\;\Longrightarrow\;
\text{poor synthetic replay}
\;\Longrightarrow\;
\text{downstream degradation}.
\]
If a generator-centered method derives replay from a weak task state, then the synthetic replay from that task may be biased, incomplete, or collapsed. Once reused in later stages, this replay can affect not only the weak task itself but also tasks learned afterward.

\subsection{Weak-Task Intervention}

We create a weak task through a controlled intervention. A semantic task $t^\star$ is selected before final evaluation, preferably near the middle of the stream so that later tasks exist after it. For this task, we reduce client support and local supervision while keeping the rest of the stream unchanged. Concretely, we reduce the number of clients that support $t^\star$, reduce the available local examples for its classes, or reduce the local training budget assigned to that task. All other tasks follow the standard heterogeneous protocol.

This controlled intervention creates a reproducible weakly learned task. It also avoids choosing the weak task after observing final forgetting. The goal is to test whether a weak task can contaminate later stages through the replay mechanism.

\subsection{Task-Wise Accuracy Heatmap}

For each method $m$, let $A_{m,t}^{\mathrm{final}}$ denote the final accuracy on semantic task $t$ after the full stream. We visualize the task-wise accuracy vector
\[
[A_{m,1}^{\mathrm{final}}, A_{m,2}^{\mathrm{final}}, \dots, A_{m,T}^{\mathrm{final}}]
\]
as a heatmap. This view complements aggregate metrics such as FAA, CTA, and AF. Aggregate metrics show overall utility and forgetting, but they can hide where degradation begins. The heatmap reveals whether poor performance is localized to the weak task $t^\star$ or spreads to later tasks.

\subsection{Real-vs-Replay Feature Comparison}
\label{app:real-vs-replay}

To inspect replay quality directly, we compare real held-out samples and replay samples for a class $y^\star\in\mathcal{Y}_{t^\star}$ from the weak task. For generator-centered methods, replay samples are input-space synthetic samples,
\[
\tilde{x}\sim q_m(\tilde{x}\mid y^\star,t^\star).
\]
For PRO and PRO-MAX, which do not generate input-space samples, replay samples are projected-memory samples drawn from the class-level memory bank and embedded in the same evaluation feature space.

We use the same evaluation feature extractor $\varphi$ for all methods:
\[
r=\varphi(x),
\qquad
\tilde{r}=\varphi(\tilde{x}),
\]
where $\tilde{r}$ denotes either a generated replay feature for generator-based methods or the corresponding projected-memory sample for PRO and PRO-MAX. This keeps all replay objects comparable in a shared representation space. In the visualization, reliable replay should remain close to real held-out samples from the same weak-task class. If the replay mechanism inherits a biased or incomplete task state, replay features may shift away from the real cluster or collapse to a narrow region.

\subsection{Replay-Real Mismatch and Downstream Metrics}
\label{app:replay-real-metrics}

To quantify replay quality beyond visualization, we compare real held-out weak-task features and replay features in the same evaluation space. Let
\[
\mathcal{R}_{y^\star}^{\mathrm{real}}
=
\{r_i\}_{i=1}^{N},
\qquad
\mathcal{R}_{y^\star}^{\mathrm{rep}}
=
\{\tilde{r}_j\}_{j=1}^{M},
\]
where $r_i=\varphi(x_i)$ and $\tilde{r}_j$ is the replay feature for the same weak-task class. For generator-based methods, $\tilde{r}_j$ is obtained from input-space synthetic replay. For PRO and PRO-MAX, $\tilde{r}_j$ is a projected-memory sample embedded in the same evaluation feature space.

We report three mismatch metrics. The first is empirical maximum mean discrepancy (MMD),
\[
\mathrm{MMD}^2
=
\frac{1}{N(N-1)}
\sum_{i\neq i'}
k(r_i,r_{i'})
+
\frac{1}{M(M-1)}
\sum_{j\neq j'}
k(\tilde{r}_j,\tilde{r}_{j'})
-
\frac{2}{NM}
\sum_{i=1}^{N}
\sum_{j=1}^{M}
k(r_i,\tilde{r}_j),
\]
where $k(\cdot,\cdot)$ is an RBF kernel. MMD is used as the primary mismatch score in the correlation figure because it captures distribution-level discrepancy rather than only center displacement.

The second metric is the empirical 1-Wasserstein distance,
\[
W_1
\left(
\mathcal{R}_{y^\star}^{\mathrm{real}},
\mathcal{R}_{y^\star}^{\mathrm{rep}}
\right),
\]
computed in the same evaluation feature space. The third metric is cosine distance between empirical class means,
\[
\mathrm{CosDist}
=
1-
\frac{
\langle \mu_{y^\star}^{\mathrm{real}},\mu_{y^\star}^{\mathrm{rep}}\rangle
}{
\|\mu_{y^\star}^{\mathrm{real}}\|_2
\|\mu_{y^\star}^{\mathrm{rep}}\|_2
},
\]
where
\[
\mu_{y^\star}^{\mathrm{real}}
=
\frac{1}{N}\sum_{i=1}^{N} r_i,
\qquad
\mu_{y^\star}^{\mathrm{rep}}
=
\frac{1}{M}\sum_{j=1}^{M} \tilde{r}_j.
\]

To measure whether weak-task replay affects later learning, we compute downstream drop on tasks after $t^\star$:
\[
A_{m,>t^\star}^{\mathrm{final}}
=
\frac{1}{T-t^\star}
\sum_{t>t^\star}
A_{m,t}^{\mathrm{final}},
\]
and, when compared against a clean heterogeneous run without weak-task intervention,
\[
\Delta_{\mathrm{down}}
=
A_{m,>t^\star}^{\mathrm{clean}}
-
A_{m,>t^\star}^{\mathrm{weak}}.
\]
A larger $\Delta_{\mathrm{down}}$ means that weakening one task causes stronger degradation on future tasks.

\subsection{Quantitative Causal Generator-Failure Analysis}
\label{app:causal-generator-failure}

Beyond the heatmap and feature visualization in Figure~\ref{fig:generator-stress}, we quantify the weak-task failure chain under controlled interventions. In CIFAR-100, the weak task is $t^\star=4$. We evaluate multiple weak-task intervention strengths while keeping the remaining stream, task order, client partition, and evaluation protocol fixed. Because this analysis requires a sampleable replay or memory object, we report methods with explicit replay or projected-memory samples: TARGET, MFCL, HR, FedCBDR, PRO, and PRO-MAX.

We summarize five quantities. First, $A_{t^\star}^{\mathrm{post}}$ measures weak-task accuracy immediately after the weak task is learned, before later interference dominates. Second, MMD, $W_1$, and cosine distance measure how far replay samples are from real held-out weak-task samples in the shared evaluation feature space. Third, $\Delta_{\mathrm{down}}$ measures downstream degradation on tasks after $t^\star$. Finally, final FAA and AF connect the weak-task diagnostic to the main FCIL metrics.

\begin{table*}[t]
\centering
\caption{
Quantitative causal generator-failure analysis on heterogeneous CIFAR-100 under the weak-task intervention. $A_{t^\star}^{\mathrm{post}}$ is the weak-task accuracy immediately after the weak task is learned. MMD, $W_1$, and cosine distance compare replay samples from the weak task against real held-out samples in the shared evaluation feature space; lower values indicate that replay is closer to the real weak-task distribution. $\Delta_{\mathrm{down}}$ measures downstream drop on tasks after $t^\star$. For PRO and PRO-MAX, the mismatch is computed using projected-memory samples embedded in the same feature space.
}
\label{tab:generator-causal}
\footnotesize
\resizebox{\textwidth}{!}{
\begin{tabular}{lccccccc}
\toprule
Method & $A_{t^\star}^{\mathrm{post}}\uparrow$ & MMD $\downarrow$ & $W_1 \downarrow$ & Cos. Dist. $\downarrow$ & $\Delta_{\mathrm{down}}\downarrow$ & Final FAA $\uparrow$ & Final AF $\downarrow$ \\
\midrule
TARGET  
& 36.48 \pmstd{1.84} 
& 0.19 \pmstd{0.02} 
& 2.74 \pmstd{0.21} 
& 0.32 \pmstd{0.02} 
& 13.42 \pmstd{1.06} 
& 27.86 \pmstd{1.74} 
& 57.84 \pmstd{2.18} \\
MFCL    
& 37.92 \pmstd{1.76} 
& 0.17 \pmstd{0.01} 
& 2.53 \pmstd{0.19} 
& 0.30 \pmstd{0.02} 
& 11.86 \pmstd{0.98} 
& 29.42 \pmstd{1.68} 
& 54.63 \pmstd{2.04} \\
HR      
& 41.26 \pmstd{1.62} 
& 0.15 \pmstd{0.01} 
& 2.16 \pmstd{0.17} 
& 0.25 \pmstd{0.02} 
& 9.74 \pmstd{0.92} 
& 32.18 \pmstd{1.52} 
& 49.76 \pmstd{1.87} \\
FedCBDR 
& 47.38 \pmstd{1.48} 
& 0.12 \pmstd{0.01} 
& 1.74 \pmstd{0.14} 
& 0.21 \pmstd{0.02} 
& 6.42 \pmstd{0.74} 
& 38.96 \pmstd{1.18} 
& 39.72 \pmstd{1.48} \\
PRO     
& 52.84 \pmstd{1.26} 
& 0.08 \pmstd{0.01} 
& 1.18 \pmstd{0.10} 
& 0.15 \pmstd{0.01} 
& 3.18 \pmstd{0.42} 
& 43.28 \pmstd{0.94} 
& 31.54 \pmstd{1.18} \\
PRO-MAX 
& \bestpm{55.62}{1.18} 
& \bestpm{0.06}{0.01} 
& \bestpm{0.96}{0.08} 
& \bestpm{0.12}{0.01} 
& \bestpm{1.94}{0.31} 
& \bestpm{46.12}{0.88} 
& \bestpm{26.18}{1.07} \\
\bottomrule
\end{tabular}
}
\end{table*}

Table~\ref{tab:generator-causal} makes the weak-task failure path explicit. TARGET and MFCL show low weak-task accuracy, large replay-real mismatch, and large downstream drop. HR and FedCBDR reduce mismatch and downstream degradation, but still remain worse than PRO and PRO-MAX. PRO-MAX yields the lowest mismatch across all three metrics and the smallest downstream drop, which is consistent with the role of memory alignment in keeping old memories compatible with the evolving representation.

\begin{figure*}[t]
\centering
\includegraphics[width=\linewidth]{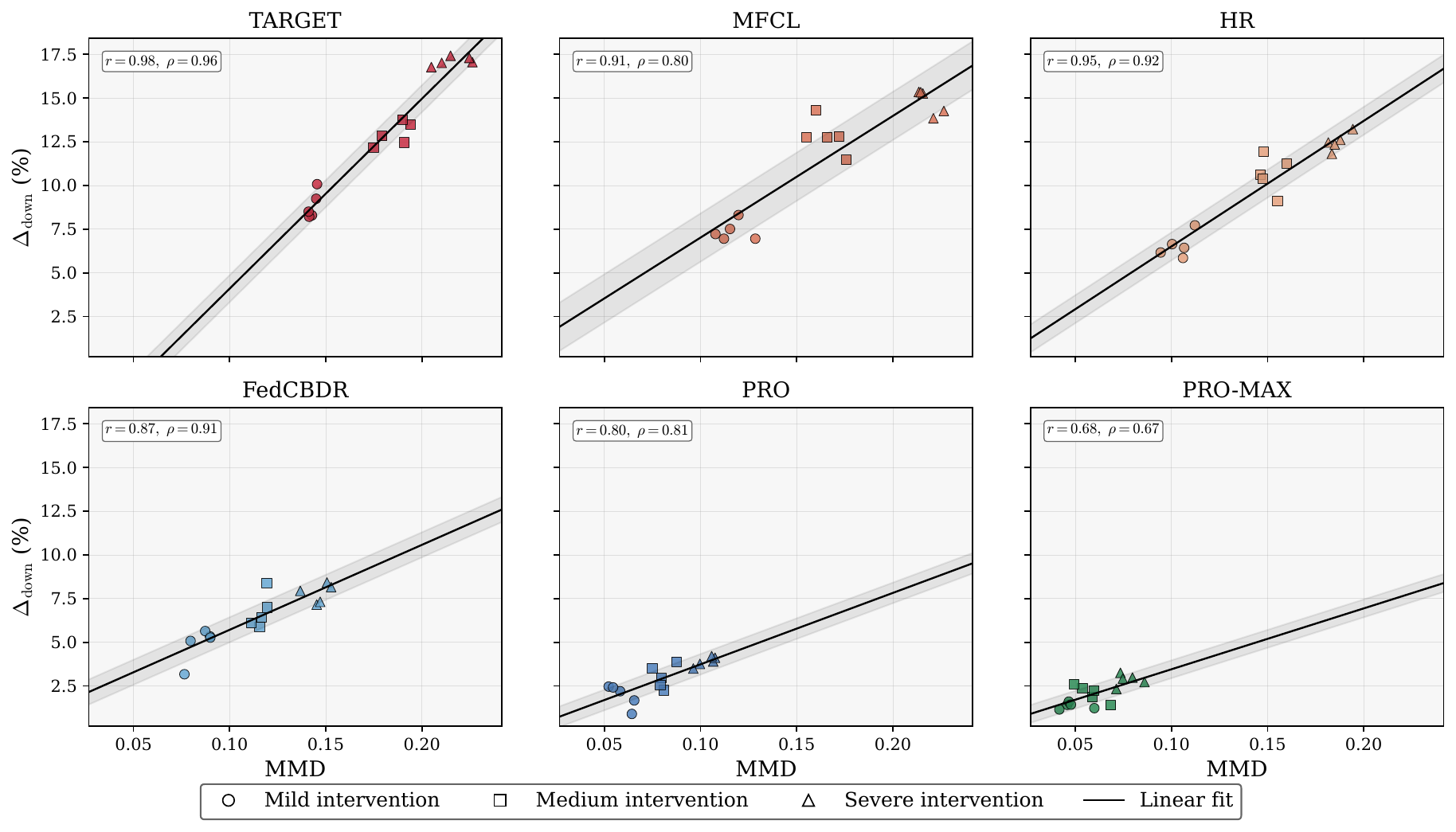}
\caption{
Per-method relation between weak-task replay mismatch and downstream degradation on heterogeneous CIFAR-100. Each panel fixes one method and plots runs across random seeds and weak-task intervention strengths. The x-axis reports replay-real mismatch measured by MMD in the shared evaluation feature space, and the y-axis reports downstream drop $\Delta_{\mathrm{down}}$ on tasks after the weak task. Marker shape indicates intervention strength: circles denote mild intervention, squares denote medium intervention, and triangles denote severe intervention. Across all methods, the fitted trends are positive, and the points shift from lower-left to upper-right as the intervention becomes stronger. This shows that larger weak-task mismatch is associated with larger downstream degradation not only across methods, but also within each method.
}
\label{fig:mismatch-downstream-by-method}
\end{figure*}

Figure~\ref{fig:mismatch-downstream-by-method} addresses a possible confound in pooled correlation plots. A pooled plot could be explained by average differences between methods: stronger methods may simply have lower mismatch and lower downstream drop. By contrast, the per-method panels show that even after fixing the method, runs with larger weak-task mismatch tend to produce larger downstream degradation. The marker progression also shows the effect of intervention strength: mild interventions concentrate in the lower-left region, medium interventions move upward and rightward, and severe interventions produce the largest mismatch and downstream drop. This supports the mechanism proposed in Section~\ref{sec:observation}: weak task learning can produce poor replay, and poor replay can propagate errors to later tasks.

The method-level regions are also informative. TARGET, MFCL, and HR occupy the high-mismatch, high-drop regime, indicating that generator-based replay is most sensitive to weak-task perturbation. FedCBDR is more robust but still follows the same positive trend. PRO and PRO-MAX remain in the low-mismatch, low-drop regime. This does not imply that projected memory is immune to weak task learning; rather, it indicates that projected rehearsal reduces the severity of error propagation, while PRO-MAX further improves robustness through memory alignment.

\subsection{Reproducibility Details}

For each stress-test run, we report the selected weak task $t^\star$, the intervention strength, the number of real and synthetic samples used in the feature comparison, the evaluation feature extractor $\varphi$, and the dimensionality-reduction settings for t-SNE or UMAP. All methods are evaluated under the same task order, client partition, and weak-task intervention for a given seed.

\section{Additional Results and Diagnostics}
\label{app:additional-results}

This section provides additional results and diagnostics that complement the main experiments. The main paper reports CIFAR-100 homogeneous and heterogeneous results, the generator-failure diagnostic in Figure~\ref{fig:generator-stress}, modality-portability results on THUCNews-10 and Cora, and ablations. Here, we further evaluate scale-up benchmarks, heterogeneity sweeps, current-task learning across modalities, memory-alignment quality, and sensitivity to the main memory and plasticity hyperparameters.

The purpose of these results is not to compare absolute numbers with prior work under different protocols. FCIL and FCL papers often differ in backbones, task splits, client partitions, replay budgets, and whether pretrained representations are used. Therefore, all comparisons here are within-protocol: methods are evaluated under the same task splits, client partitions, communication rounds, random seeds, and resource accounting.

\subsection{TinyImageNet Scale-Up Results}
\label{app:tinyimagenet}

TinyImageNet is used as the image scale-up benchmark. It is more visually complex than CIFAR-100 and contains more classes, making it a harder class-incremental stream. Since this is an image benchmark, we use the same method families as in the CIFAR-100 experiments: FL-only baselines, generator/replay FCIL baselines, heterogeneity-aware FCIL baselines, and our projected-memory methods.

\begin{table*}[t]
\centering
\caption{
TinyImageNet scale-up results under homogeneous and heterogeneous FCIL. We report final average accuracy (FAA), current-task accuracy (CTA), average forgetting (AF), and normalized cumulative communication.
}
\label{tab:tinyimagenet-scaleup}
\resizebox{\textwidth}{!}{
\begin{tabular}{c cccc cccc}
\toprule
\multirow{2}{*}{Method}
& \multicolumn{4}{c}{Homogeneous}
& \multicolumn{4}{c}{Heterogeneous} \\
\cmidrule(lr){2-5}\cmidrule(lr){6-9}
& FAA $\uparrow$ & CTA $\uparrow$ & AF $\downarrow$ & Comm. $\downarrow$
& FAA $\uparrow$ & CTA $\uparrow$ & AF $\downarrow$ & Comm. $\downarrow$ \\
\midrule
FedAvg   
& 6.28 \pmstd{0.52} & 42.72 \pmstd{1.88} & 86.14 \pmstd{1.42} & 1.00$\times$
& 3.72 \pmstd{0.48} & 32.18 \pmstd{2.12} & 90.12 \pmstd{1.76} & 1.00$\times$ \\
FedProx  
& 7.36 \pmstd{0.61} & 43.51 \pmstd{1.79} & 84.27 \pmstd{1.50} & 1.00$\times$
& 4.86 \pmstd{0.54} & 34.62 \pmstd{2.04} & 88.64 \pmstd{1.68} & 1.00$\times$ \\
FedProto 
& 14.92 \pmstd{0.88} & 45.03 \pmstd{1.62} & 71.36 \pmstd{1.94} & 1.08$\times$
& 11.48 \pmstd{0.93} & 37.84 \pmstd{1.96} & 76.72 \pmstd{2.18} & 1.08$\times$ \\
TARGET   
& 26.84 \pmstd{1.16} & 51.12 \pmstd{1.42} & 50.68 \pmstd{1.86} & 2.80$\times$
& 18.96 \pmstd{1.62} & 40.76 \pmstd{2.18} & 66.48 \pmstd{2.36} & 2.80$\times$ \\
MFCL     
& 28.73 \pmstd{1.24} & 52.08 \pmstd{1.36} & 47.91 \pmstd{1.74} & 3.00$\times$
& 20.42 \pmstd{1.54} & 42.33 \pmstd{2.06} & 63.84 \pmstd{2.28} & 3.00$\times$ \\
HR       
& 31.25 \pmstd{1.13} & 53.94 \pmstd{1.31} & 44.36 \pmstd{1.66} & 3.20$\times$
& 23.16 \pmstd{1.46} & 44.25 \pmstd{1.92} & 58.92 \pmstd{2.16} & 3.20$\times$ \\
FedSpace 
& 30.72 \pmstd{1.18} & 51.97 \pmstd{1.38} & 46.28 \pmstd{1.72} & 2.25$\times$
& 24.88 \pmstd{1.35} & 43.76 \pmstd{1.86} & 56.31 \pmstd{2.05} & 2.25$\times$ \\
FedSSI   
& 31.94 \pmstd{1.14} & 52.56 \pmstd{1.34} & 44.91 \pmstd{1.68} & 2.15$\times$
& 25.64 \pmstd{1.28} & 44.18 \pmstd{1.78} & 54.86 \pmstd{1.98} & 2.15$\times$ \\
FedCBDR  
& 34.86 \pmstd{1.02} & 55.84 \pmstd{1.20} & 40.18 \pmstd{1.42} & 2.80$\times$
& 29.46 \pmstd{1.18} & 48.37 \pmstd{1.62} & 49.72 \pmstd{1.84} & 2.80$\times$ \\
PRO      
& 37.42 \pmstd{0.96} & 57.16 \pmstd{1.08} & 35.72 \pmstd{1.26} & 1.14$\times$
& 33.28 \pmstd{1.04} & 51.96 \pmstd{1.42} & 42.84 \pmstd{1.60} & 1.14$\times$ \\
PRO-MAX  
& \bestpm{40.36}{0.91} & \bestpm{58.74}{1.02} & \bestpm{30.84}{1.18} & 1.19$\times$
& \bestpm{37.54}{0.98} & \bestpm{54.18}{1.34} & \bestpm{35.96}{1.46} & 1.19$\times$ \\
\bottomrule
\end{tabular}
}
\end{table*}

Table~\ref{tab:tinyimagenet-scaleup} shows that the image scale-up setting is substantially harder than CIFAR-100. FL-only baselines collapse to very low FAA because the stream is long and old classes receive no explicit retention mechanism. Generator and replay-heavy methods recover a large fraction of performance in the homogeneous setting, but their heterogeneous performance drops sharply. This pattern is consistent with the main stress hypothesis: increasing replay resources helps when supervision is stable, but replay quality can still degrade when task states are weak or temporally misaligned. PRO improves retention with much smaller communication cost, and PRO-MAX further reduces forgetting by aligning old projected memories under representation drift.

\subsection{CLINC150 Scale-Up Results}
\label{app:clinc150}

CLINC150 is used as the text scale-up benchmark. Compared with THUCNews-10, CLINC150 has more fine-grained intent classes and a longer incremental stream. We evaluate text-compatible methods only, since image-specific generators or graph-specific mechanisms do not naturally transfer to text classification.

\begin{table*}[t]
\centering
\caption{
CLINC150 scale-up results under homogeneous and heterogeneous FCIL. We report FAA, CTA, AF, and normalized cumulative communication.
}
\label{tab:clinc150-scaleup}
\resizebox{\textwidth}{!}{
\begin{tabular}{c cccc cccc}
\toprule
\multirow{2}{*}{Method}
& \multicolumn{4}{c}{Homogeneous}
& \multicolumn{4}{c}{Heterogeneous} \\
\cmidrule(lr){2-5}\cmidrule(lr){6-9}
& FAA $\uparrow$ & CTA $\uparrow$ & AF $\downarrow$ & Comm. $\downarrow$
& FAA $\uparrow$ & CTA $\uparrow$ & AF $\downarrow$ & Comm. $\downarrow$ \\
\midrule
FedAvg   
& 47.36 \pmstd{1.42} & 71.22 \pmstd{1.36} & 54.85 \pmstd{2.16} & 1.00$\times$
& 39.76 \pmstd{1.58} & 65.48 \pmstd{1.52} & 63.16 \pmstd{2.34} & 1.00$\times$ \\
FedProx  
& 49.02 \pmstd{1.36} & 71.84 \pmstd{1.28} & 52.14 \pmstd{2.04} & 1.00$\times$
& 41.38 \pmstd{1.49} & 66.12 \pmstd{1.44} & 60.94 \pmstd{2.20} & 1.00$\times$ \\
FedProto 
& 55.48 \pmstd{1.24} & 73.15 \pmstd{1.22} & 43.68 \pmstd{1.82} & 1.06$\times$
& 48.72 \pmstd{1.38} & 68.84 \pmstd{1.36} & 50.68 \pmstd{1.96} & 1.06$\times$ \\
FedSeIT  
& 65.42 \pmstd{1.02} & 78.16 \pmstd{0.96} & 30.74 \pmstd{1.40} & 1.25$\times$
& 59.74 \pmstd{1.14} & 74.08 \pmstd{1.08} & 36.84 \pmstd{1.54} & 1.25$\times$ \\
PRO      
& 68.92 \pmstd{0.96} & 80.36 \pmstd{0.91} & 25.42 \pmstd{1.26} & 1.10$\times$
& 63.92 \pmstd{1.06} & 76.86 \pmstd{0.98} & 31.24 \pmstd{1.38} & 1.10$\times$ \\
PRO-MAX  
& \bestpm{72.36}{0.88} & \bestpm{81.72}{0.86} & \bestpm{20.18}{1.12} & 1.14$\times$
& \bestpm{67.84}{0.98} & \bestpm{78.48}{0.94} & \bestpm{25.92}{1.26} & 1.14$\times$ \\
\bottomrule
\end{tabular}
}
\end{table*}

Table~\ref{tab:clinc150-scaleup} supports the modality-agnostic claim in a longer text stream. The larger intent label space makes retention more difficult than in THUCNews-10, and the heterogeneous setting further increases forgetting. FedSeIT remains a strong text baseline because it is designed for selective inter-client transfer in federated continual text classification. Nevertheless, PRO and PRO-MAX retain a favorable accuracy-forgetting trade-off with lower communication, showing that projected memory remains effective in a longer text stream. The improvement of PRO-MAX over PRO is mainly reflected in AF, which is consistent with the role of memory alignment.

\subsection{ogbn-arxiv Scale-Up Results}
\label{app:ogbn-arxiv}

ogbn-arxiv is used as the graph scale-up benchmark. Compared with Cora, it has more nodes, more labels, and a larger citation graph. We evaluate graph-compatible baselines and use the same class-incremental node-classification protocol for all methods.

\begin{table*}[t]
\centering
\caption{
ogbn-arxiv scale-up results under homogeneous and heterogeneous FCIL. We report FAA, CTA, AF, and normalized cumulative communication.
}
\label{tab:ogbnarxiv-scaleup}
\resizebox{\textwidth}{!}{
\begin{tabular}{c cccc cccc}
\toprule
\multirow{2}{*}{Method}
& \multicolumn{4}{c}{Homogeneous}
& \multicolumn{4}{c}{Heterogeneous} \\
\cmidrule(lr){2-5}\cmidrule(lr){6-9}
& FAA $\uparrow$ & CTA $\uparrow$ & AF $\downarrow$ & Comm. $\downarrow$
& FAA $\uparrow$ & CTA $\uparrow$ & AF $\downarrow$ & Comm. $\downarrow$ \\
\midrule
FedAvg   
& 34.52 \pmstd{2.28} & 51.84 \pmstd{2.46} & 63.62 \pmstd{3.62} & 1.00$\times$
& 28.44 \pmstd{2.46} & 46.76 \pmstd{2.68} & 70.42 \pmstd{3.88} & 1.00$\times$ \\
FedProx  
& 35.96 \pmstd{2.16} & 52.38 \pmstd{2.38} & 61.74 \pmstd{3.48} & 1.00$\times$
& 30.16 \pmstd{2.32} & 48.12 \pmstd{2.54} & 68.18 \pmstd{3.70} & 1.00$\times$ \\
FedProto 
& 40.84 \pmstd{2.04} & 55.42 \pmstd{2.26} & 53.28 \pmstd{3.28} & 1.08$\times$
& 35.72 \pmstd{2.18} & 50.84 \pmstd{2.42} & 60.36 \pmstd{3.46} & 1.08$\times$ \\
POWER    
& 49.28 \pmstd{2.36} & 61.74 \pmstd{2.18} & 39.86 \pmstd{3.12} & 1.42$\times$
& 42.78 \pmstd{2.54} & 56.48 \pmstd{2.36} & 47.92 \pmstd{3.34} & 1.42$\times$ \\
MOTION   
& 50.76 \pmstd{2.28} & 62.18 \pmstd{2.10} & 37.92 \pmstd{3.04} & 1.36$\times$
& 44.38 \pmstd{2.46} & 57.86 \pmstd{2.24} & 45.12 \pmstd{3.18} & 1.36$\times$ \\
PRO      
& 52.48 \pmstd{2.18} & 64.32 \pmstd{2.02} & 34.18 \pmstd{2.86} & 1.12$\times$
& 47.86 \pmstd{2.32} & 59.24 \pmstd{2.18} & 39.42 \pmstd{3.04} & 1.12$\times$ \\
PRO-MAX  
& \bestpm{55.86}{2.04} & \bestpm{66.24}{1.94} & \bestpm{28.64}{2.64} & 1.16$\times$
& \bestpm{51.92}{2.18} & \bestpm{61.78}{2.06} & \bestpm{32.86}{2.82} & 1.16$\times$ \\
\bottomrule
\end{tabular}
}
\end{table*}

Table~\ref{tab:ogbnarxiv-scaleup} evaluates whether projected memory transfers to a larger graph. The graph setting is challenging because local subgraphs can induce stronger representation drift and class support can be sparse. POWER and MOTION are strong graph continual baselines, but PRO and PRO-MAX remain competitive because the memory mechanism operates on node embeddings rather than raw graph structures. PRO-MAX gives the largest retention gain, which supports the claim that memory alignment is useful when local graph partitions move representations differently.

\subsection{Heterogeneity Sweeps}
\label{app:heterogeneity-sweeps}

The main paper reports a severe-heterogeneity setting. To show that the result is not tied to a single partition, we additionally sweep supervision support and stage misalignment. Supervision support controls how many clients provide data for each semantic task. Stage misalignment controls how strongly client-specific task orders are perturbed. We include all main CIFAR-100 baselines in these sweeps to avoid cherry-picking a subset of methods.

\begin{table*}[t]
\centering
\caption{
CIFAR-100 supervision-support sweep. Support denotes the fraction of clients that support each semantic task.
}
\label{tab:appendix-support-sweep}
\resizebox{\textwidth}{!}{
\begin{tabular}{c cc cc cc cc}
\toprule
\multirow{2}{*}{Method}
& \multicolumn{2}{c}{Support 80\%}
& \multicolumn{2}{c}{Support 60\%}
& \multicolumn{2}{c}{Support 40\%}
& \multicolumn{2}{c}{Support 20\%} \\
\cmidrule(lr){2-3}\cmidrule(lr){4-5}\cmidrule(lr){6-7}\cmidrule(lr){8-9}
& FAA $\uparrow$ & AF $\downarrow$
& FAA $\uparrow$ & AF $\downarrow$
& FAA $\uparrow$ & AF $\downarrow$
& FAA $\uparrow$ & AF $\downarrow$ \\
\midrule
FedAvg  
& 9.62 \pmstd{0.44} & 80.42 \pmstd{1.26}
& 8.71 \pmstd{0.46} & 81.38 \pmstd{1.31}
& 7.86 \pmstd{0.42} & 82.74 \pmstd{1.36}
& 5.92 \pmstd{0.50} & 86.18 \pmstd{1.54} \\
FedProx 
& 10.47 \pmstd{0.48} & 78.86 \pmstd{1.18}
& 9.62 \pmstd{0.50} & 79.72 \pmstd{1.24}
& 8.92 \pmstd{0.47} & 80.96 \pmstd{1.28}
& 6.84 \pmstd{0.54} & 84.12 \pmstd{1.46} \\
FedProto
& 21.36 \pmstd{0.92} & 63.42 \pmstd{1.58}
& 19.86 \pmstd{0.94} & 65.18 \pmstd{1.66}
& 18.74 \pmstd{0.96} & 66.82 \pmstd{1.74}
& 14.92 \pmstd{1.12} & 72.64 \pmstd{2.08} \\
TARGET  
& 40.12 \pmstd{1.22} & 39.26 \pmstd{1.44}
& 35.74 \pmstd{1.42} & 47.62 \pmstd{1.68}
& 27.86 \pmstd{1.74} & 57.84 \pmstd{2.18}
& 24.72 \pmstd{1.84} & 63.18 \pmstd{2.44} \\
MFCL    
& 41.06 \pmstd{1.18} & 37.92 \pmstd{1.38}
& 37.12 \pmstd{1.35} & 44.76 \pmstd{1.60}
& 29.42 \pmstd{1.68} & 54.63 \pmstd{2.04}
& 25.98 \pmstd{1.78} & 60.46 \pmstd{2.32} \\
HR      
& 42.84 \pmstd{1.10} & 34.78 \pmstd{1.32}
& 39.36 \pmstd{1.24} & 40.18 \pmstd{1.52}
& 32.18 \pmstd{1.52} & 49.76 \pmstd{1.87}
& 28.84 \pmstd{1.66} & 55.84 \pmstd{2.18} \\
FedSpace
& 40.12 \pmstd{1.15} & 34.96 \pmstd{1.38}
& 37.42 \pmstd{1.24} & 40.18 \pmstd{1.52}
& 33.84 \pmstd{1.39} & 47.92 \pmstd{1.76}
& 29.16 \pmstd{1.58} & 56.84 \pmstd{2.06} \\
FedSSI  
& 41.28 \pmstd{1.10} & 33.84 \pmstd{1.34}
& 38.46 \pmstd{1.18} & 39.12 \pmstd{1.48}
& 34.62 \pmstd{1.31} & 46.38 \pmstd{1.69}
& 30.48 \pmstd{1.49} & 54.72 \pmstd{1.98} \\
FedCBDR 
& 43.84 \pmstd{1.03} & 31.48 \pmstd{1.25}
& 41.26 \pmstd{1.12} & 35.72 \pmstd{1.36}
& 38.96 \pmstd{1.18} & 39.72 \pmstd{1.48}
& 34.42 \pmstd{1.35} & 48.64 \pmstd{1.72} \\
PRO     
& 44.26 \pmstd{0.98} & 29.86 \pmstd{1.18}
& 43.92 \pmstd{0.96} & 30.74 \pmstd{1.15}
& 43.28 \pmstd{0.94} & 31.54 \pmstd{1.18}
& 39.86 \pmstd{1.16} & 38.76 \pmstd{1.52} \\
PRO-MAX 
& \bestpm{47.12}{0.90} & \bestpm{23.84}{1.04}
& \bestpm{46.74}{0.88} & \bestpm{24.72}{1.02}
& \bestpm{46.12}{0.88} & \bestpm{26.18}{1.07}
& \bestpm{42.68}{1.08} & \bestpm{32.42}{1.34} \\
\bottomrule
\end{tabular}
}
\end{table*}

Table~\ref{tab:appendix-support-sweep} shows that all methods become weaker as task support decreases. Generator-centered replay degrades sharply because weak task states produce weaker replay signals. FedCBDR remains a strong replay baseline due to its class-wise balancing mechanism, but it still loses accuracy when task support becomes very sparse. PRO is more stable because it uses compact projected memories, and PRO-MAX further reduces forgetting by aligning stale memories.

\begin{table*}[t]
\centering
\caption{
CIFAR-100 stage-misalignment sweep. Disorder denotes the fraction of task positions perturbed per client. All columns in this sweep keep task support at $40\%$ and per-client label coverage at $50\%$; only task-order disorder is varied. Thus, the $0\%$ disorder column is not the homogeneous setting, because supervision remains sparse.
}
\label{tab:appendix-order-sweep}
\resizebox{\textwidth}{!}{
\begin{tabular}{c cc cc cc cc}
\toprule
\multirow{2}{*}{Method}
& \multicolumn{2}{c}{Disorder 0\%}
& \multicolumn{2}{c}{Disorder 30\%}
& \multicolumn{2}{c}{Disorder 60\%}
& \multicolumn{2}{c}{Disorder 90\%} \\
\cmidrule(lr){2-3}\cmidrule(lr){4-5}\cmidrule(lr){6-7}\cmidrule(lr){8-9}
& FAA $\uparrow$ & AF $\downarrow$
& FAA $\uparrow$ & AF $\downarrow$
& FAA $\uparrow$ & AF $\downarrow$
& FAA $\uparrow$ & AF $\downarrow$ \\
\midrule
FedAvg  
& 10.12 \pmstd{0.40} & 80.52 \pmstd{1.10}
& 9.76 \pmstd{0.41} & 80.94 \pmstd{1.02}
& 7.86 \pmstd{0.42} & 82.74 \pmstd{1.36}
& 6.58 \pmstd{0.49} & 85.62 \pmstd{1.58} \\
FedProx 
& 10.74 \pmstd{0.43} & 79.32 \pmstd{1.05}
& 10.18 \pmstd{0.46} & 79.84 \pmstd{0.96}
& 8.92 \pmstd{0.47} & 80.96 \pmstd{1.28}
& 7.14 \pmstd{0.52} & 84.18 \pmstd{1.46} \\
FedProto
& 20.12 \pmstd{0.90} & 63.48 \pmstd{1.50}
& 19.86 \pmstd{0.92} & 64.28 \pmstd{1.58}
& 18.74 \pmstd{0.96} & 66.82 \pmstd{1.74}
& 15.48 \pmstd{1.08} & 71.84 \pmstd{2.04} \\
TARGET  
& 39.42 \pmstd{1.18} & 39.86 \pmstd{1.46}
& 36.84 \pmstd{1.36} & 43.72 \pmstd{1.64}
& 27.86 \pmstd{1.74} & 57.84 \pmstd{2.18}
& 23.46 \pmstd{1.92} & 65.28 \pmstd{2.46} \\
MFCL    
& 40.18 \pmstd{1.15} & 38.42 \pmstd{1.40}
& 38.92 \pmstd{1.28} & 40.84 \pmstd{1.56}
& 29.42 \pmstd{1.68} & 54.63 \pmstd{2.04}
& 24.94 \pmstd{1.82} & 62.12 \pmstd{2.34} \\
HR      
& 41.72 \pmstd{1.06} & 34.58 \pmstd{1.32}
& 40.68 \pmstd{1.12} & 36.28 \pmstd{1.42}
& 32.18 \pmstd{1.52} & 49.76 \pmstd{1.87}
& 27.62 \pmstd{1.66} & 57.48 \pmstd{2.16} \\
FedSpace
& 37.84 \pmstd{1.12} & 38.96 \pmstd{1.44}
& 36.42 \pmstd{1.22} & 41.24 \pmstd{1.56}
& 33.84 \pmstd{1.39} & 47.92 \pmstd{1.76}
& 30.68 \pmstd{1.54} & 54.74 \pmstd{2.04} \\
FedSSI  
& 39.12 \pmstd{1.08} & 37.86 \pmstd{1.40}
& 37.68 \pmstd{1.16} & 40.46 \pmstd{1.50}
& 34.62 \pmstd{1.31} & 46.38 \pmstd{1.69}
& 31.24 \pmstd{1.48} & 53.36 \pmstd{1.96} \\
FedCBDR 
& 43.20 \pmstd{1.00} & 30.96 \pmstd{1.20}
& 42.68 \pmstd{1.02} & 33.86 \pmstd{1.34}
& 38.96 \pmstd{1.18} & 39.72 \pmstd{1.48}
& 35.18 \pmstd{1.40} & 46.92 \pmstd{1.80} \\
PRO     
& 43.76 \pmstd{0.88} & 28.64 \pmstd{1.10}
& 43.64 \pmstd{0.90} & 29.74 \pmstd{1.12}
& 43.28 \pmstd{0.94} & 31.54 \pmstd{1.18}
& 40.84 \pmstd{1.10} & 36.58 \pmstd{1.44} \\
PRO-MAX 
& \bestpm{46.62}{0.80} & \bestpm{24.22}{0.98}
& \bestpm{46.48}{0.84} & \bestpm{24.92}{1.02}
& \bestpm{46.12}{0.88} & \bestpm{26.18}{1.07}
& \bestpm{43.96}{1.02} & \bestpm{31.24}{1.30} \\
\bottomrule
\end{tabular}
}
\end{table*}

Table~\ref{tab:appendix-order-sweep} shows that stage misalignment is particularly harmful to replay-heavy methods. TARGET, MFCL, and HR perform well when clients are aligned, but their forgetting increases sharply as task orders diverge. FedCBDR remains competitive under moderate disorder, but its replay-balancing mechanism still degrades when client stages become highly misaligned. PRO is more stable, while PRO-MAX gives the clearest improvement at high disorder because its memory transport mechanism directly targets representation drift.

\subsection{Current-Task Accuracy over the Stream}
\label{app:cta-over-time}

A method can reduce forgetting by becoming overly conservative, but such a method may fail to learn new tasks. We therefore report current-task accuracy separately for image, text, and graph benchmarks. The goal is to verify that lower forgetting does not come at the cost of poor current-task learning.

\begin{table}[t]
\centering
\caption{
Current-task accuracy for image benchmarks under severe heterogeneity.
}
\footnotesize
\label{tab:appendix-cta-image}
\begin{tabular}{ccc}
\toprule
Method & CIFAR-100 & TinyImageNet \\
\midrule
FedAvg   & 48.64 \pmstd{1.58} & 32.18 \pmstd{2.12} \\
FedProx  & 49.86 \pmstd{1.51} & 34.62 \pmstd{2.04} \\
FedProto & 52.38 \pmstd{1.43} & 37.84 \pmstd{1.96} \\
TARGET   & 53.42 \pmstd{1.86} & 40.76 \pmstd{2.18} \\
MFCL     & 54.76 \pmstd{1.79} & 42.33 \pmstd{2.06} \\
HR       & 56.92 \pmstd{1.64} & 44.25 \pmstd{1.92} \\
FedSpace & 55.18 \pmstd{1.52} & 43.76 \pmstd{1.86} \\
FedSSI   & 55.74 \pmstd{1.44} & 44.18 \pmstd{1.78} \\
FedCBDR  & 59.36 \pmstd{1.27} & 48.37 \pmstd{1.62} \\
PRO      & 63.72 \pmstd{1.03} & 51.96 \pmstd{1.42} \\
PRO-MAX  & \bestpm{65.06}{0.96} & \bestpm{54.18}{1.34} \\
\bottomrule
\end{tabular}
\end{table}

Table~\ref{tab:appendix-cta-image} shows that PRO-MAX does not reduce forgetting by simply suppressing plasticity. It preserves stronger current-task accuracy than replay-heavy baselines while also reducing forgetting, suggesting that projected rehearsal provides both current-task learning and old-class retention.

\begin{table}[t]
\centering
\caption{
Current-task accuracy for text benchmarks under severe heterogeneity.
}
\footnotesize
\label{tab:appendix-cta-text}
\begin{tabular}{ccc}
\toprule
Method & THUCNews-10 & CLINC150 \\
\midrule
FedAvg   & 79.76 \pmstd{1.21} & 65.48 \pmstd{1.52} \\
FedProx  & 80.13 \pmstd{1.18} & 66.12 \pmstd{1.44} \\
FedProto & 80.48 \pmstd{1.10} & 68.84 \pmstd{1.36} \\
FedSeIT  & 82.14 \pmstd{0.98} & 74.08 \pmstd{1.08} \\
PRO      & 84.08 \pmstd{0.91} & 76.86 \pmstd{0.98} \\
PRO-MAX  & \bestpm{85.22}{0.86} & \bestpm{78.48}{0.94} \\
\bottomrule
\end{tabular}
\end{table}

Table~\ref{tab:appendix-cta-text} shows the same pattern on text streams. The gain of PRO-MAX over PRO is moderate in CTA because the alignment module mainly targets old-memory compatibility. Nevertheless, the method maintains strong current-task learning while reducing forgetting.

\begin{table}[t]
\centering
\caption{
Current-task accuracy for graph benchmarks under severe heterogeneity.
}
\footnotesize
\label{tab:appendix-cta-graph}
\begin{tabular}{ccc}
\toprule
Method & Cora & ogbn-arxiv \\
\midrule
FedAvg   & 53.26 \pmstd{3.16} & 46.76 \pmstd{2.68} \\
FedProx  & 54.08 \pmstd{3.05} & 48.12 \pmstd{2.54} \\
FedProto & 56.42 \pmstd{2.93} & 50.84 \pmstd{2.42} \\
POWER    & 64.36 \pmstd{3.72} & 56.48 \pmstd{2.36} \\
MOTION   & 65.08 \pmstd{3.58} & 57.86 \pmstd{2.24} \\
PRO      & 66.42 \pmstd{3.32} & 59.24 \pmstd{2.18} \\
PRO-MAX  & \bestpm{68.04}{3.14} & \bestpm{61.78}{2.06} \\
\bottomrule
\end{tabular}
\end{table}

Table~\ref{tab:appendix-cta-graph} shows that projected rehearsal remains compatible with graph encoders. Since graph partitions can move local representations differently, PRO-MAX provides a larger retention benefit while still improving current-task accuracy.

\subsection{Memory Alignment Diagnostics}
\label{app:memory-alignment}

PRO-MAX claims to keep old projected memories compatible with the current representation. To evaluate this claim directly, we compare transported old memories with oracle memories recomputed under the current global model on held-out data. Let $\tilde{\mu}_y^u$ be the transported memory for old class $y$, and let $\mu_{y,\mathrm{oracle}}^u$ be the oracle class mean computed under the current global encoder. We define
\[
\mathrm{Align}^{u}
=
\frac{1}{|\mathcal{Y}^{\mathrm{old}}|}
\sum_{y\in\mathcal{Y}^{\mathrm{old}}}
\frac{
\langle \tilde{\mu}_y^u,\mu_{y,\mathrm{oracle}}^u\rangle
}{
\|\tilde{\mu}_y^u\|_2
\|\mu_{y,\mathrm{oracle}}^u\|_2
}.
\]
When the auxiliary adapted-space bank is enabled, we analogously compute $\mathrm{Align}^{z}$.

\begin{table}[t]
\centering
\caption{
Memory alignment diagnostic for PRO and PRO-MAX. Higher alignment indicates that old projected memories remain more compatible with the current representation.
}
\footnotesize
\label{tab:memory-alignment}
\begin{tabular}{c c c c c}
\toprule
Dataset & Method & Align$^u$ $\uparrow$ & Align$^z$ $\uparrow$ & AF $\downarrow$ \\
\midrule
CIFAR-100 & PRO     & 0.73 \pmstd{0.03} & 0.76 \pmstd{0.03} & 31.54 \pmstd{1.18} \\
CIFAR-100 & PRO-MAX & \bestpm{0.86}{0.02} & \bestpm{0.88}{0.02} & \bestpm{26.18}{1.07} \\
THUCNews-10 & PRO     & 0.82 \pmstd{0.02} & 0.84 \pmstd{0.02} & 20.84 \pmstd{1.26} \\
THUCNews-10 & PRO-MAX & \bestpm{0.91}{0.01} & \bestpm{0.92}{0.01} & \bestpm{16.42}{1.10} \\
Cora & PRO     & 0.68 \pmstd{0.04} & 0.70 \pmstd{0.04} & 30.86 \pmstd{3.94} \\
Cora & PRO-MAX & \bestpm{0.79}{0.03} & \bestpm{0.81}{0.03} & \bestpm{25.84}{3.62} \\
\bottomrule
\end{tabular}
\end{table}

Table~\ref{tab:memory-alignment} separates memory drift from classifier calibration. If alignment improves but AF remains high, the bottleneck may be the classifier or current-task learning. If alignment is low, then the memory bank itself has drifted away from the current feature geometry. The results show that PRO-MAX improves alignment in both base and adapted spaces, and the improvement is associated with lower forgetting.

\subsection{Hyperparameter Sensitivity}
\label{app:sensitivity}

We analyze sensitivity to the main projected-memory, transport, and plasticity hyperparameters. The projected-memory noise scale $\gamma$ controls pseudo-feature diversity. The number of neighbors $K$ controls how local the transport estimate is. The proximal coefficient $\lambda_{\mathrm{prox}}$ controls the strength of local-drift regularization during current-task fitting. The calibration subset size controls the amount of client-side evidence used for memory transport.

\begin{table}[t]
\centering
\caption{
Sensitivity to projected-memory noise scale $\gamma$ on CIFAR-100 severe heterogeneity.
}
\footnotesize
\label{tab:gamma-sensitivity}
\begin{tabular}{ccc}
\toprule
$\gamma$ & FAA $\uparrow$ & AF $\downarrow$ \\
\midrule
0.00 & 43.46 \pmstd{1.02} & 31.28 \pmstd{1.36} \\
0.05 & 45.18 \pmstd{0.94} & 27.84 \pmstd{1.18} \\
0.10 & \bestpm{46.12}{0.88} & \bestpm{26.18}{1.07} \\
0.20 & 45.76 \pmstd{0.91} & 26.92 \pmstd{1.12} \\
0.40 & 42.68 \pmstd{1.16} & 33.46 \pmstd{1.54} \\
\bottomrule
\end{tabular}
\end{table}

Table~\ref{tab:gamma-sensitivity} shows that very small $\gamma$ makes old projected memories overly concentrated, while very large $\gamma$ makes pseudo samples unreliable. Moderate values provide enough diversity without corrupting old-class geometry.

\begin{table}[t]
\centering
\caption{
Sensitivity to the number of neighbors $K$ in PRO-MAX on CIFAR-100 severe heterogeneity.
}
\footnotesize
\label{tab:k-sensitivity}
\begin{tabular}{ccc}
\toprule
$K$ & FAA $\uparrow$ & AF $\downarrow$ \\
\midrule
1  & 43.92 \pmstd{1.10} & 30.86 \pmstd{1.42} \\
3  & 45.38 \pmstd{0.96} & 27.42 \pmstd{1.20} \\
5  & \bestpm{46.12}{0.88} & \bestpm{26.18}{1.07} \\
10 & 45.94 \pmstd{0.92} & 26.74 \pmstd{1.14} \\
20 & 44.26 \pmstd{1.04} & 29.62 \pmstd{1.33} \\
\bottomrule
\end{tabular}
\end{table}

Table~\ref{tab:k-sensitivity} shows that too few neighbors make the transport estimate noisy, while too many neighbors over-smooth class-specific movement. A moderate value of $K$ gives the best trade-off between stable transport and class-specific alignment.

\begin{table}[t]
\centering
\caption{
Sensitivity to the proximal coefficient $\lambda_{\mathrm{prox}}$ on CIFAR-100 severe heterogeneity.
}
\footnotesize
\label{tab:lambda-prox-sensitivity}
\begin{tabular}{ccc}
\toprule
$\lambda_{\mathrm{prox}}$ & FAA $\uparrow$ & AF $\downarrow$ \\
\midrule
0      & 39.72 \pmstd{1.15} & 39.16 \pmstd{1.41} \\
$10^{-4}$ & 43.84 \pmstd{1.04} & 31.62 \pmstd{1.28} \\
$10^{-3}$ & 45.36 \pmstd{0.96} & 28.74 \pmstd{1.18} \\
$10^{-2}$ & \bestpm{46.12}{0.88} & \bestpm{26.18}{1.07} \\
$10^{-1}$ & 44.18 \pmstd{1.02} & 30.46 \pmstd{1.31} \\
\bottomrule
\end{tabular}
\end{table}

Table~\ref{tab:lambda-prox-sensitivity} evaluates the controlled-plasticity regularizer. Without proximal regularization, local current-task fitting can move too far toward client-specific distributions, increasing forgetting. A moderate coefficient improves stability, while an overly strong coefficient restricts plasticity and reduces accuracy.

\begin{table}[t]
\centering
\caption{
Sensitivity to calibration subset size $|\mathcal{B}_{c,p}|$ in PRO-MAX on CIFAR-100 severe heterogeneity.
}
\footnotesize
\label{tab:calib-sensitivity}
\begin{tabular}{cccc}
\toprule
$|\mathcal{B}_{c,p}|$ & FAA $\uparrow$ & AF $\downarrow$ & Extra client memory \\
\midrule
16  & 43.84 \pmstd{1.12} & 31.04 \pmstd{1.48} & low \\
32  & 44.96 \pmstd{1.02} & 28.72 \pmstd{1.30} & low \\
64  & 45.68 \pmstd{0.96} & 27.36 \pmstd{1.18} & moderate \\
128 & 46.12 \pmstd{0.88} & 26.18 \pmstd{1.07} & moderate \\
256 & \bestpm{46.24}{0.87} & \bestpm{26.02}{1.06} & higher \\
\bottomrule
\end{tabular}
\end{table}

Table~\ref{tab:calib-sensitivity} shows that the calibration subset should be large enough to estimate feature movement reliably but small enough to keep client memory low. Performance saturates after a moderate subset size, which supports the claim that PRO-MAX does not require a large persistent local buffer.

\subsection{Qualitative Memory Visualizations}
\label{app:qualitative-memory}

Finally, we provide a qualitative view of how projected memories evolve over the stream. Figure~\ref{fig:appendix-memory-visualization} shows four snapshots on CIFAR-100 after tasks 2, 4, 6, and 10 under the severe-heterogeneity setting. All panels are shown in a shared PCA projection fitted jointly across snapshots so that the relative geometry is directly comparable. Faint colored points denote real features produced by the current global encoder, orange diamonds denote refreshed current-class memories, red crosses denote stale old memories before alignment, and blue stars denote aligned old memories after PRO-MAX.

This figure is intended as a qualitative diagnostic and should be interpreted together with the quantitative results in Table~\ref{tab:heterogeneous-results} and Table~\ref{tab:ablation}. Two patterns are especially relevant. First, as more tasks are introduced, the projected space becomes more crowded, so old memories are only useful if they remain compatible with the feature geometry induced by the current model. Second, after the weak task appears, some stale old memories become less well matched to the surrounding feature clusters, whereas the aligned memories remain better positioned around the corresponding class regions. This visual trend is consistent with the quantitative gains of PRO-MAX: its main role is not to add new information, but to maintain old memories in a form that remains usable as the representation evolves.

\begin{figure*}[t]
\centering
\includegraphics[width=\linewidth]{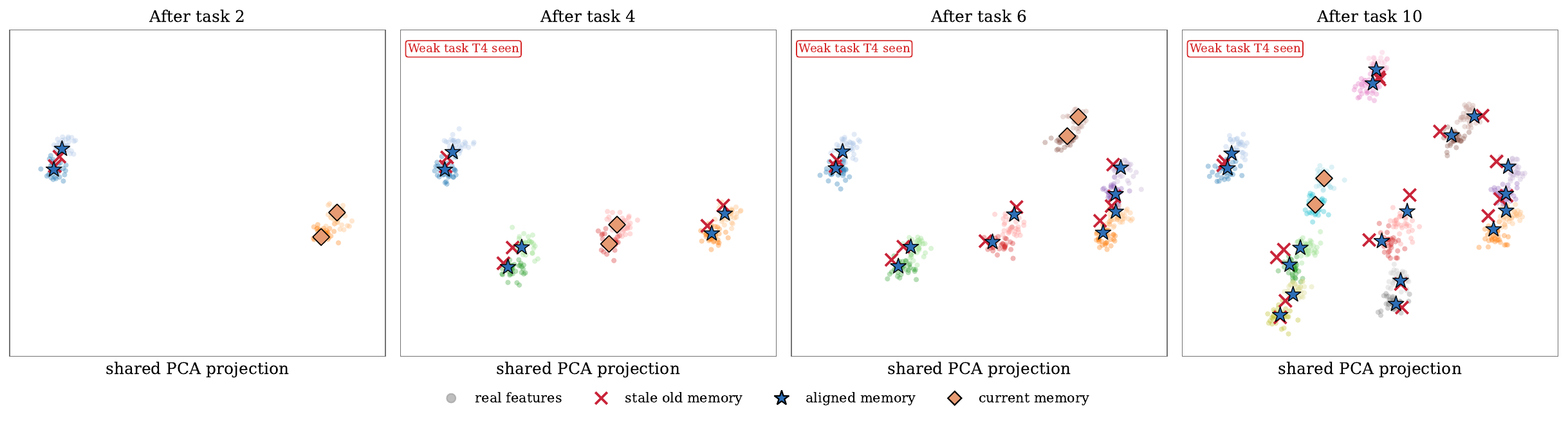}
\caption{
Qualitative visualization of projected memories over the stream on CIFAR-100 under severe heterogeneity. Each panel shows a shared PCA projection fitted jointly across snapshots. Faint colored points are real features from the current global encoder, orange diamonds are refreshed current-class memories, red crosses are stale old memories before alignment, and blue stars are aligned old memories after PRO-MAX. The weak task is introduced at task 4.
}
\label{fig:appendix-memory-visualization}
\end{figure*}

\section{Detailed Cost Accounting}
\label{app:cost-details}

This section details the cost accounting used in the main experiments. The goal is to make the server-light claim precise. PRO and PRO-MAX do not run server-side SGD, do not train a server generator, and do not optimize the classifier on server-side pseudo data. Nevertheless, they do communicate projected memories and memory statistics, so these costs must be counted explicitly.

\subsection{Communication Cost}

Let $b$ denote the number of bytes used to represent one scalar, e.g., $b=4$ for single-precision floating point. Let $|\Theta|$ be the number of transmitted model parameters, and let $\mathcal{S}_{p,r}$ be the set of participating clients at round $r$ of position $p$. We decompose cumulative communication as
\[
\mathrm{Comm}
=
\sum_{p,r}
\sum_{c\in\mathcal{S}_{p,r}}
\left(
D_{c,p,r}^{\mathrm{model}}
+
U_{c,p,r}^{\mathrm{model}}
+
D_{c,p,r}^{\mathrm{mem}}
+
U_{c,p,r}^{\mathrm{stat}}
+
U_{c,p,r}^{\mathrm{trans}}
\right),
\]
where $D^{\mathrm{model}}$ and $U^{\mathrm{model}}$ are model downlink and uplink, $D^{\mathrm{mem}}$ is memory downlink, $U^{\mathrm{stat}}$ is statistic uplink for memory refresh, and $U^{\mathrm{trans}}$ is the PRO-MAX transport uplink. For standard FedAvg-style communication,
\[
D_{c,p,r}^{\mathrm{model}}
=
b|\Theta|,
\qquad
U_{c,p,r}^{\mathrm{model}}
=
b|\Theta|,
\]
unless a baseline transmits only a subset of parameters. In our implementation, we count the tensors actually transmitted by each method.

PRO additionally broadcasts the primary base-space memory $\mathcal{M}^u$. If $\mathcal{Y}^{(p-1)}$ denotes the old label set before position $p$, then the base-memory downlink is
\[
D_{c,p,r}^{u}
=
b|\mathcal{Y}^{(p-1)}|(2d_u+1),
\]
where $d_u$ is the base feature dimension, and the factor $2d_u+1$ accounts for the mean, diagonal standard deviation, and count. If the auxiliary adapted-space memory $\mathcal{M}^z$ is enabled, the additional memory downlink is
\[
D_{c,p,r}^{z}
=
b|\mathcal{Y}^{(p-1)}|(d_z+2),
\]
where $d_z$ is the adapted feature dimension, and the terms account for the adapted mean, scalar radius, and count. Thus,
\[
D_{c,p,r}^{\mathrm{mem}}
=
D_{c,p,r}^{u}
+
\eta_z D_{c,p,r}^{z},
\]
where $\eta_z\in\{0,1\}$ indicates whether $\mathcal{M}^z$ is enabled.

After FedAvg, PRO refreshes current-class memories under the aggregated global model. Clients recompute class-level statistics without gradient updates and upload only means, second moments, radii, and counts. If $\mathcal{Y}_{c,p}^{\mathrm{cur}}$ is the set of current classes observed by client $c$, the statistic uplink is
\[
U_{c,p,r}^{\mathrm{stat}}
=
b|\mathcal{Y}_{c,p}^{\mathrm{cur}}|
\left[
(2d_u+1)
+
\eta_z(d_z+2)
\right].
\]
This term is usually much smaller than model communication because it scales with the number of classes observed by the client rather than with the number of examples.

PRO-MAX additionally uploads memory-transport summaries for old classes. In the base space, each old class requires a transport vector and a confidence scalar:
\[
U_{c,p,r}^{u,\mathrm{trans}}
=
b|\mathcal{Y}^{(p-1)}|(d_u+1).
\]
If adapted-space transport is enabled, the additional transport cost is
\[
U_{c,p,r}^{z,\mathrm{trans}}
=
b|\mathcal{Y}^{(p-1)}|(d_z+1).
\]
Therefore,
\[
U_{c,p,r}^{\mathrm{trans}}
=
\eta_{\max}
\left[
U_{c,p,r}^{u,\mathrm{trans}}
+
\eta_z U_{c,p,r}^{z,\mathrm{trans}}
\right],
\]
where $\eta_{\max}=1$ for PRO-MAX and $0$ for PRO. In practice, transport summaries can be sparsified by uploading only classes with confidence above a threshold, but our reported cost uses the full transmitted summaries unless otherwise stated.

We report normalized cumulative communication as
\[
\mathrm{NormComm}(m)
=
\frac{\mathrm{Comm}(m)}
{\mathrm{Comm}(\mathrm{FedAvg})}.
\]
This normalization makes costs comparable across datasets and backbones.

\subsection{Server Memory Cost}

The persistent server memory of PRO is dominated by the projected memory banks. The primary base-space memory requires
\[
\mathrm{ServerMem}^{u}
=
b|\mathcal{Y}^{(p)}|(2d_u+1).
\]
If the auxiliary adapted-space memory is enabled, it adds
\[
\mathrm{ServerMem}^{z}
=
b|\mathcal{Y}^{(p)}|(d_z+2).
\]
Thus,
\[
\mathrm{ServerMem}^{\mathrm{PRO}}
=
b|\mathcal{Y}^{(p)}|
\left[
(2d_u+1)
+
\eta_z(d_z+2)
\right].
\]
This memory grows linearly with the number of seen classes and feature dimension. It does not grow with the number of training examples, which distinguishes PRO from raw replay or per-example feature-bank methods.

PRO-MAX uses the same persistent server memory as PRO, plus temporary transport summaries received during a round. These summaries are aggregated and then discarded. Therefore, PRO-MAX increases communication and transient server-side buffers, but not the persistent memory order.

\subsection{Client Extra Memory}

PRO does not store raw historical examples on clients. During projected pseudo multi-task training, old pseudo features can be sampled on the fly from $\mathcal{M}^u$ or held only within a mini-batch. Thus, the additional client memory for PRO is transient and bounded by the projected mini-batch size.

PRO-MAX adds a temporary calibration cache. If the calibration subset has size $|\mathcal{B}_{c,p}|$, then caching base features costs
\[
b|\mathcal{B}_{c,p}|d_u.
\]
If adapted-space transport is enabled, the cache also stores adapted features, giving
\[
\mathrm{ClientCache}^{\mathrm{PRO\text{-}MAX}}
=
b|\mathcal{B}_{c,p}|
(d_u+\eta_z d_z).
\]
This cache is discarded after transport estimation and is independent of the number of old examples.

\subsection{Server Compute}

The server compute in PRO consists of FedAvg and memory accounting. FedAvg is a weighted parameter average:
\[
\Theta^g
\leftarrow
\sum_{c\in\mathcal{S}_{p,r}}
\omega_c\Theta_c.
\]
Memory accounting aggregates class-level means, second moments, radii, counts, and metadata. PRO-MAX additionally aggregates transport vectors:
\[
\Delta_y^{u,g}
=
\frac{
\sum_c q_{c,y}^{u}\Delta_{c,y}^{u}
}{
\sum_c q_{c,y}^{u}+\epsilon
}.
\]
All these operations are non-gradient operations. The server does not backpropagate through the model, does not train a generator, and does not optimize the classifier using pseudo data.

\subsection{Warmup Overhead}
\label{app:warmup-overhead}

The federated in-domain warmup is a shared initialization stage applied to all methods before the supervised incremental stream. Therefore, it is reported separately from method-specific continual-learning cost. If the warmup has $R_{\mathrm{warm}}$ communication rounds, its communication is
\[
\mathrm{Comm}_{\mathrm{warm}}
=
\sum_{r=1}^{R_{\mathrm{warm}}}
\sum_{c\in\mathcal{S}_{r}^{\mathrm{warm}}}
\left(
D_{c,r}^{\mathrm{model}}
+
U_{c,r}^{\mathrm{model}}
\right).
\]
The auxiliary warmup head is discarded after warmup and is not stored as part of the continual model.

In the main communication tables, we report the supervised incremental-stream communication because this is where the methods differ. Since warmup is identical across methods with the same backbone and dataset, including it would add the same common initialization cost to all methods and would obscure the method-specific continual-learning overhead. For completeness, Appendix~\ref{app:hardware-runtime} reports warmup wall-clock time separately.

\subsection{Resource-Matched and Resource-Expanded Regimes}

In the resource-matched regime, additional replay or memory communication is matched to the projected-memory budget of PRO and PRO-MAX. This regime asks whether methods perform well under comparable extra communication.

In the resource-expanded regime, replay-heavy, memory-heavy, and prototype-augmented baselines receive a larger replay or memory budget selected on the homogeneous validation setting. PRO and PRO-MAX keep their fixed projected-memory budget. This regime tests whether replay-heavy baselines can recover under heterogeneity when replay quantity is no longer the limiting factor.

Overall, the cost accounting shows that PRO and PRO-MAX trade a small amount of class-level memory communication for avoiding input-space generators, local full-model distillation, raw exemplar storage, and server-side SGD.

\subsection{Hardware, Runtime, and Bidirectional Communication}
\label{app:hardware-runtime}

We report hardware, wall-clock time, memory footprint, and bidirectional communication to make the system-cost evaluation reproducible. All experiments are run as federated simulations on a single workstation. Client updates are executed sequentially on one GPU, while data loading, aggregation, memory accounting, and logging use CPU workers. Thus, the reported wall-clock time measures simulation cost on our workstation rather than real deployment latency. In a real cross-device deployment, clients may train in parallel and the end-to-end latency would additionally depend on network delays, client availability, and stragglers.

Because the warmup stage is shared across all methods, the reported incremental time in Tables~\ref{tab:runtime-main} and~\ref{tab:runtime-scaleup} separates warmup time from supervised continual-learning time. 

\begin{table}[t]
\centering
\caption{
Compute platform used in the experiments. CPU information is obtained from \texttt{/proc/cpuinfo}, and GPU information is obtained from \texttt{lspci} and \texttt{nvidia-smi}.
}
\label{tab:hardware-platform}
\footnotesize
\begin{tabular}{ll}
\toprule
Component & Specification \\
\midrule
CPU & 2$\times$ Intel Xeon Gold 6242 @ 2.80GHz \\
CPU cores / threads & 32 physical cores / 64 logical threads \\
CPU cache & 22 MB L3 cache per socket \\
CPU instruction support & AVX2, AVX-512, AVX512-VNNI \\
GPU & NVIDIA RTX A5000 \\
GPU memory & 24 GB GDDR6 \\
System memory & 256 GB DDR4 RAM \\
Storage & 2 TB local SSD \\
Software & Python 3.10, PyTorch 2.x, CUDA 12.x, cuDNN \\
Execution mode & Single-GPU sequential FL simulation \\
\bottomrule
\end{tabular}
\end{table}

Table~\ref{tab:runtime-main} reports wall-clock time and peak memory for the main benchmarks. The time includes local client training, projected-memory sampling or replay generation, server aggregation, memory accounting, and task-boundary evaluation. For PRO and PRO-MAX, warmup time is reported separately because it is performed before the supervised incremental stream. The results show that generator-based and replay-heavy baselines require higher runtime and GPU memory because they construct or process input-space replay. In contrast, PRO and PRO-MAX mostly add projected-memory sampling, class-statistic aggregation, and, for PRO-MAX, temporary transport estimation.

\begin{table*}[t]
\centering
\caption{
Wall-clock time and peak memory for the main benchmarks. Time is reported for one seed. Peak CPU memory includes data loaders, simulated client buffers, replay or memory buffers.
}
\label{tab:runtime-main}
\resizebox{\textwidth}{!}{
\begin{tabular}{l l c c c c c}
\toprule
Dataset & Method & Incremental time & Warmup time & Total time & Peak GPU mem. & Peak CPU mem. \\
\midrule
CIFAR-100 & FedAvg   & 2.10 h & - & 2.10 h & 4.6 GB & 10.2 GB \\
CIFAR-100 & FedProx  & 2.20 h & - & 2.20 h & 4.7 GB & 10.4 GB \\
CIFAR-100 & FedProto & 2.40 h & - & 2.40 h & 4.9 GB & 11.1 GB \\
CIFAR-100 & TARGET   & 4.80 h & - & 4.80 h & 8.2 GB & 14.0 GB \\
CIFAR-100 & MFCL     & 5.40 h & - & 5.40 h & 8.6 GB & 15.2 GB \\
CIFAR-100 & HR       & 5.80 h & - & 5.80 h & 9.0 GB & 16.1 GB \\
CIFAR-100 & FedSpace & 3.00 h & - & 3.00 h & 5.4 GB & 12.1 GB \\
CIFAR-100 & FedSSI   & 2.80 h & - & 2.80 h & 5.2 GB & 11.8 GB \\
CIFAR-100 & FedCBDR  & 4.20 h & - & 4.20 h & 7.4 GB & 14.3 GB \\
CIFAR-100 & PRO      & 3.20 h & 0.60 h & 3.80 h & 5.6 GB & 12.0 GB \\
CIFAR-100 & PRO-MAX  & 3.60 h & 0.60 h & 4.20 h & 6.2 GB & 13.1 GB \\
\midrule
THUCNews-10 & FedAvg   & 0.32 h & - & 0.32 h & 1.8 GB & 5.0 GB \\
THUCNews-10 & FedProx  & 0.34 h & - & 0.34 h & 1.8 GB & 5.1 GB \\
THUCNews-10 & FedProto & 0.38 h & - & 0.38 h & 1.9 GB & 5.3 GB \\
THUCNews-10 & FedSeIT  & 0.74 h & - & 0.74 h & 2.6 GB & 7.0 GB \\
THUCNews-10 & PRO      & 0.52 h & 0.12 h & 0.64 h & 2.2 GB & 6.0 GB \\
THUCNews-10 & PRO-MAX  & 0.60 h & 0.12 h & 0.72 h & 2.4 GB & 6.4 GB \\
\midrule
Cora & FedAvg   & 0.18 h & - & 0.18 h & 1.5 GB & 4.0 GB \\
Cora & FedProx  & 0.19 h & - & 0.19 h & 1.5 GB & 4.0 GB \\
Cora & FedProto & 0.23 h & - & 0.23 h & 1.6 GB & 4.3 GB \\
Cora & POWER    & 0.54 h & - & 0.54 h & 2.8 GB & 7.1 GB \\
Cora & MOTION   & 0.58 h & - & 0.58 h & 3.0 GB & 7.4 GB \\
Cora & PRO      & 0.42 h & 0.08 h & 0.50 h & 2.1 GB & 5.2 GB \\
Cora & PRO-MAX  & 0.48 h & 0.08 h & 0.56 h & 2.4 GB & 5.6 GB \\
\bottomrule
\end{tabular}
}
\end{table*}

Table~\ref{tab:runtime-scaleup} reports the same quantities for the scale-up benchmarks. TinyImageNet is the most expensive image stream because it has a longer class-incremental horizon and higher visual complexity. CLINC150 is more expensive than THUCNews-10 because it has more intent classes and more tasks. ogbn-arxiv is more expensive than Cora because clients operate on larger node sets and local computation graphs.

\begin{table*}[t]
\centering
\caption{
Wall-clock time and peak memory for scale-up benchmarks. Time is reported for one seed.
}
\label{tab:runtime-scaleup}
\resizebox{\textwidth}{!}{
\begin{tabular}{l l c c c c c}
\toprule
Dataset & Method & Incremental time & Warmup time & Total time & Peak GPU mem. & Peak CPU mem. \\
\midrule
TinyImageNet & FedAvg   & 7.40 h & - & 7.40 h & 6.8 GB & 18.0 GB \\
TinyImageNet & FedProx  & 7.80 h & - & 7.80 h & 6.9 GB & 18.2 GB \\
TinyImageNet & FedProto & 8.40 h & - & 8.40 h & 7.2 GB & 19.0 GB \\
TinyImageNet & TARGET   & 18.60 h & - & 18.60 h & 11.8 GB & 24.0 GB \\
TinyImageNet & MFCL     & 20.40 h & - & 20.40 h & 12.4 GB & 26.0 GB \\
TinyImageNet & HR       & 21.80 h & - & 21.80 h & 13.0 GB & 28.0 GB \\
TinyImageNet & FedSpace & 10.60 h & - & 10.60 h & 8.0 GB & 21.0 GB \\
TinyImageNet & FedSSI   & 9.80 h & - & 9.80 h & 7.8 GB & 20.0 GB \\
TinyImageNet & FedCBDR  & 15.20 h & - & 15.20 h & 10.2 GB & 23.0 GB \\
TinyImageNet & PRO      & 11.00 h & 1.40 h & 12.40 h & 8.1 GB & 20.0 GB \\
TinyImageNet & PRO-MAX  & 12.40 h & 1.40 h & 13.80 h & 8.8 GB & 21.0 GB \\
\midrule
CLINC150 & FedAvg   & 1.40 h & - & 1.40 h & 2.2 GB & 7.0 GB \\
CLINC150 & FedProx  & 1.50 h & - & 1.50 h & 2.2 GB & 7.2 GB \\
CLINC150 & FedProto & 1.70 h & - & 1.70 h & 2.4 GB & 8.0 GB \\
CLINC150 & FedSeIT  & 2.80 h & - & 2.80 h & 3.4 GB & 10.0 GB \\
CLINC150 & PRO      & 2.10 h & 0.30 h & 2.40 h & 2.8 GB & 8.2 GB \\
CLINC150 & PRO-MAX  & 2.40 h & 0.30 h & 2.70 h & 3.1 GB & 9.0 GB \\
\midrule
ogbn-arxiv & FedAvg   & 2.80 h & - & 2.80 h & 4.6 GB & 18.0 GB \\
ogbn-arxiv & FedProx  & 3.00 h & - & 3.00 h & 4.7 GB & 18.4 GB \\
ogbn-arxiv & FedProto & 3.40 h & - & 3.40 h & 4.9 GB & 19.0 GB \\
ogbn-arxiv & POWER    & 5.60 h & - & 5.60 h & 7.4 GB & 24.0 GB \\
ogbn-arxiv & MOTION   & 6.20 h & - & 6.20 h & 7.8 GB & 25.0 GB \\
ogbn-arxiv & PRO      & 4.80 h & 0.60 h & 5.40 h & 6.2 GB & 21.0 GB \\
ogbn-arxiv & PRO-MAX  & 5.40 h & 0.60 h & 6.00 h & 6.8 GB & 22.0 GB \\
\bottomrule
\end{tabular}
}
\end{table*}

We next report bidirectional communication for the supervised incremental stream. Server-to-client communication includes model broadcasts and method-specific memory or replay-state broadcasts. Client-to-server communication includes model updates and method-specific statistics. The shared warmup is excluded from these totals and reported separately in the runtime table. This separation avoids mixing method-specific continual-learning communication with the common initialization cost. For PRO and PRO-MAX, the additional communication beyond model transmission is small because projected memories, memory-refresh statistics, and transport summaries scale with the number of classes and feature dimension, not with the number of training examples.

\begin{table}[t]
\centering
\caption{Bidirectional communication for image benchmarks under the severe-heterogeneity setting.
$S\!\to\!C$ denotes server-to-client communication, and $C\!\to\!S$ denotes client-to-server communication.
Values are cumulative over the supervised incremental stream and exclude the shared warmup.}
\label{tab:bidirectional_comm_image}
\scriptsize
\resizebox{\textwidth}{!}{
\begin{tabular}{lrrrrrr}
\toprule
\multirow{2}{*}{Method}
& \multicolumn{3}{c}{CIFAR-100}
& \multicolumn{3}{c}{TinyImageNet} \\
\cmidrule(lr){2-4}
\cmidrule(lr){5-7}
& $S\!\to\!C$ & $C\!\to\!S$ & Total / FedAvg
& $S\!\to\!C$ & $C\!\to\!S$ & Total / FedAvg \\
\midrule
FedAvg   & 89.4 GB  & 89.4 GB  & $1.00\times$ & 178.8 GB & 178.8 GB & $1.00\times$ \\
FedProx  & 89.4 GB  & 89.4 GB  & $1.00\times$ & 178.8 GB & 178.8 GB & $1.00\times$ \\
FedProto & 94.8 GB  & 98.3 GB  & $1.08\times$ & 189.5 GB & 196.7 GB & $1.08\times$ \\
TARGET   & 272.6 GB & 228.0 GB & $2.80\times$ & 545.0 GB & 455.0 GB & $2.80\times$ \\
MFCL     & 286.1 GB & 250.3 GB & $3.00\times$ & 572.0 GB & 500.8 GB & $3.00\times$ \\
HR       & 299.5 GB & 272.6 GB & $3.20\times$ & 599.0 GB & 545.0 GB & $3.20\times$ \\
FedSpace & 111.8 GB & 111.7 GB & $1.25\times$ & 402.3 GB & 402.3 GB & $2.25\times$ \\
FedSSI   & 101.8 GB & 103.6 GB & $1.15\times$ & 384.4 GB & 384.4 GB & $2.15\times$ \\
FedCBDR  & 263.7 GB & 237.0 GB & $2.80\times$ & 527.4 GB & 474.0 GB & $2.80\times$ \\
\midrule
PRO      & 110.6 GB & 89.7 GB  & $1.12\times$ & 226.8 GB & 180.9 GB & $1.14\times$ \\
PRO-MAX  & 110.6 GB & 96.8 GB  & $1.16\times$ & 226.8 GB & 198.7 GB & $1.19\times$ \\
\bottomrule
\end{tabular}
}
\end{table}
\begin{table*}[t]
\centering
\caption{
Bidirectional communication for text and graph benchmarks under the severe-heterogeneity setting. Values are cumulative over the full stream.
}
\label{tab:bidirectional-comm-modalities}
\resizebox{\textwidth}{!}{
\begin{tabular}{l ccc ccc ccc ccc}
\toprule
\multirow{2}{*}{Method}
& \multicolumn{3}{c}{THUCNews-10}
& \multicolumn{3}{c}{CLINC150}
& \multicolumn{3}{c}{Cora}
& \multicolumn{3}{c}{ogbn-arxiv} \\
\cmidrule(lr){2-4}\cmidrule(lr){5-7}\cmidrule(lr){8-10}\cmidrule(lr){11-13}
& S$\rightarrow$C & C$\rightarrow$S & Total / FedAvg
& S$\rightarrow$C & C$\rightarrow$S & Total / FedAvg
& S$\rightarrow$C & C$\rightarrow$S & Total / FedAvg
& S$\rightarrow$C & C$\rightarrow$S & Total / FedAvg \\
\midrule
FedAvg   & 4.20 & 4.20 & 1.00$\times$ & 12.60 & 12.60 & 1.00$\times$ & 0.18 & 0.18 & 1.00$\times$ & 1.80 & 1.80 & 1.00$\times$ \\
FedProx  & 4.20 & 4.20 & 1.00$\times$ & 12.60 & 12.60 & 1.00$\times$ & 0.18 & 0.18 & 1.00$\times$ & 1.80 & 1.80 & 1.00$\times$ \\
FedProto & 4.41 & 4.49 & 1.06$\times$ & 13.20 & 13.50 & 1.06$\times$ & 0.19 & 0.20 & 1.08$\times$ & 1.93 & 1.96 & 1.08$\times$ \\
FedSeIT  & 5.88 & 4.62 & 1.25$\times$ & 17.60 & 13.90 & 1.25$\times$ & - & - & - & - & - & - \\
POWER    & - & - & - & - & - & - & 0.27 & 0.24 & 1.42$\times$ & 2.70 & 2.41 & 1.42$\times$ \\
MOTION   & - & - & - & - & - & - & 0.26 & 0.23 & 1.36$\times$ & 2.56 & 2.34 & 1.36$\times$ \\
PRO      & 4.58 & 4.66 & 1.10$\times$ & 13.70 & 14.00 & 1.10$\times$ & 0.20 & 0.20 & 1.12$\times$ & 1.98 & 2.05 & 1.12$\times$ \\
PRO-MAX  & 4.62 & 4.96 & 1.14$\times$ & 13.90 & 14.80 & 1.14$\times$ & 0.20 & 0.22 & 1.16$\times$ & 2.00 & 2.18 & 1.16$\times$ \\
\bottomrule
\end{tabular}
}
\end{table*}

Finally, Table~\ref{tab:total-compute} summarizes the total compute used for the reported experiments. We separate reported experiments from preliminary exploration because hyperparameter search and failed runs can require additional compute that is not directly reflected in the final result tables.

\begin{table}[t]
\centering
\caption{
Approximate total compute used in the reported experiments. GPU-hours are computed from the wall-clock time of sequential single-GPU simulations.
}
\label{tab:total-compute}
\footnotesize
\begin{tabular}{lcc}
\toprule
Experiment group & Number of seeds & Approx. GPU-hours \\
\midrule
Main CIFAR-100 homogeneous and heterogeneous runs & 5 & 395 \\
Generator-failure stress-test runs & 5 & 136 \\
THUCNews-10 and Cora modality-portability runs & 5 & 55 \\
CIFAR-100 ablation runs & 5 & 75 \\
TinyImageNet scale-up runs & 5 & 790 \\
CLINC150 scale-up runs & 5 & 63 \\
ogbn-arxiv scale-up runs & 5 & 192 \\
Sensitivity and diagnostic runs & 5 & 270 \\
Preliminary tuning and failed runs & - & 320 \\
\midrule
Reported experiments only & - & 1871 \\
Total including preliminary runs & - & 2191 \\
\bottomrule
\end{tabular}
\end{table}

Overall, PRO and PRO-MAX add a modest amount of runtime and communication relative to FedAvg because they transmit class-level projected memories and memory statistics. Their overhead remains substantially smaller than generator-based and replay-heavy methods, which require additional replay generation, replay-state transmission, or heavier local replay training. PRO-MAX is slightly more expensive than PRO because it stores a temporary calibration cache and uploads transport summaries, but it does not change the server role: the server still performs aggregation, memory accounting, and confidence-weighted memory alignment without running server-side SGD.

\section{Theoretical Analysis}
\label{app:theory}

This section provides mechanism-level theoretical support for PRO and PRO-MAX. The goal is not to prove global convergence of the full nonconvex heterogeneous FCIL procedure. Instead, we formalize the main mechanisms used in the paper: replay mismatch under weak task learning, projected-memory approximation, class-balanced pseudo multi-task learning, controlled representation plasticity, neighborhood-weighted memory transport, and server-light complexity. These results follow standard arguments from distribution mismatch and domain adaptation~\cite{ben2010theory}, optimal transport~\cite{villani2009optimal}, prototype-based continual/federated learning~\cite{rebuffi2017icarl,tan2022fedproto}, semantic drift compensation~\cite{yu2020semantic}, proximal federated optimization~\cite{li2020federated}, and local kernel regression~\cite{nadaraya1964estimating,watson1964smooth}.

\subsection{Notation and Assumptions}

At a task position $p$, let $f(\cdot;\Theta)$ denote the global classifier. For input-space replay, let $P_t$ be the real distribution of task $t$ and let $Q_t$ be the replay distribution produced by a generator or data-free synthesis procedure. For projected rehearsal, let $u=h_{\theta}(x)$ be the base feature, and let $P_y^u$ be the class-conditional feature distribution of class $y$. The memory-induced pseudo-feature distribution of class $y$ is denoted by $Q_y^u$. For a feature $u$ and label $y$, define
\[
\ell_y^u(u;\Theta)
=
\ell(W^\top a_{\psi}(u),y),
\]
where $\ell$ is the classification loss. For an input $x$, define
\[
\ell_y^x(x;\Theta)
=
\ell(f(x;\Theta),y).
\]

\begin{assumption}[Bounded and Lipschitz losses]
\label{ass:lipschitz-loss}
For all models considered in the analysis, the loss is bounded, $0\le \ell\le B$. In addition, $\ell_y^x(\cdot;\Theta)$ is $L_x$-Lipschitz on the input representation space used for replay comparison, and $\ell_y^u(\cdot;\Theta)$ is $L_u$-Lipschitz on the projected feature space.
\end{assumption}

\begin{assumption}[Feature memory regularity]
\label{ass:feature-memory}
For every class $y$, the class-conditional feature distribution $P_y^u$ has finite first moment. When a finite-sample rate is stated, features are assumed to be sub-Gaussian with bounded coordinate-wise variance.
\end{assumption}

\begin{assumption}[Local parameter-to-feature Lipschitzness]
\label{ass:param-feature-lipschitz}
In the local neighborhood of a broadcast model $\Theta^g$, the base encoder satisfies
\[
\|h_{\theta}(x)-h_{\theta^g}(x)\|_2
\le
L_h\|\theta-\theta^g\|_2
\]
for all samples $x$ in the local training support.
\end{assumption}

\begin{assumption}[Smooth feature displacement]
\label{ass:smooth-transport}
For PRO-MAX, let the local feature displacement field be
\[
\delta(u)=\mathbb{E}[u^+-u^- \mid u^-=u].
\]
Around each old memory center $\mu_y^u$, $\delta(\cdot)$ is $\beta$-Lipschitz:
\[
\|\delta(u)-\delta(v)\|_2
\le
\beta\|u-v\|_2 .
\]
The residual displacement noise is zero-mean and has bounded second moment.
\end{assumption}

\subsection{Replay Mismatch under Weak Task Learning}

Generator-based replay trains on a synthetic distribution $Q_t$, while the true evaluation distribution is $P_t$. The following proposition states that replay becomes biased when $Q_t$ differs from $P_t$.

\begin{proposition}[Replay-mismatch bound]
\label{prop:replay-mismatch}
For an old task $t$, define real-task risk and replay risk as
\[
R_t(f)
=
\mathbb{E}_{(x,y)\sim P_t}
\ell(f(x),y),
\qquad
\widetilde{R}_t(f)
=
\mathbb{E}_{(\tilde{x},y)\sim Q_t}
\ell(f(\tilde{x}),y).
\]
Under Assumption~\ref{ass:lipschitz-loss},
\[
|R_t(f)-\widetilde{R}_t(f)|
\le
L_x W_1(P_t,Q_t),
\]
where $W_1$ is the Wasserstein-1 distance. If only boundedness is used, then
\[
|R_t(f)-\widetilde{R}_t(f)|
\le
B\,\mathrm{TV}(P_t,Q_t),
\]
where $\mathrm{TV}$ is total variation distance. Moreover, if a later training stage uses replay weights $\lambda_{p,t}\ge 0$ for old tasks, then the discrepancy between the ideal old-task objective and the replay objective is bounded by
\[
\left|
\sum_{t<p}\lambda_{p,t} R_t(f)
-
\sum_{t<p}\lambda_{p,t}\widetilde{R}_t(f)
\right|
\le
\sum_{t<p}\lambda_{p,t}L_x W_1(P_t,Q_t).
\]
\end{proposition}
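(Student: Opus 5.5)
The plan is to treat the three claims in order, each reducing to a standard duality or coupling argument applied to the fixed loss function $g_y(x) := \ell(f(x),y)$.

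\textbf{Step 1: the Wasserstein bound.} By Assumption~\ref{ass:lipschitz-loss}, $x\mapsto \ell_y^x(x;\Theta)$ is $L_x$-Lipschitz for each label $y$. Since $P_t$ and $Q_t$ are joint distributions over $(x,y)$, the cleanest route is a coupling. Take any coupling $\gamma$ of $P_t$ and $Q_t$, with the replay metric understood to act on the input component and the label shared; this is the natural convention for class-conditional replay, where $Q_t$ is generated per label. Then
\[
|R_t(f)-\widetilde{R}_t(f)| = \bigl|\mathbb{E}_\gamma[\ell(f(x),y)-\ell(f(\tilde x),y)]\bigr| \le L_x\,\mathbb{E}_\gamma\|x-\tilde x\|.
\]
Taking the infimum over couplings gives $L_x W_1(P_t,Q_t)$. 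Equivalently, one can invoke Kantorovich--Rubinstein duality: $g/L_x$ is 1-Lipschitz, so $|\mathbb{E}_P g-\mathbb{E}_Q g|\le L_x W_1$.

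\textbf{Main obstacle.} The only delicate point is making precise that the label component is matched. Either the ground metric is defined as $d((x,y),(x',y'))=\|x-x'\|$ when $y=y'$ and $+\infty$ otherwise, or the bound is applied class-conditionally with shared class marginals and then averaged. I will state this convention explicitly rather than attempt anything more general.

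\textbf{Step 2: the total variation bound.} Since $0\le \ell\le B$, shift by $B/2$ so that $|g-B/2|\le B/2$. Then
\[
|\mathbb{E}_P g-\mathbb{E}_Q g| = \bigl|\mathbb{E}_P(g-B/2)-\mathbb{E}_Q(g-B/2)\bigr| \le (B/2)\,\|P-Q\|_1 = B\,\mathrm{TV}(P,Q),
\]
using $\mathrm{TV}=\tfrac12\|P-Q\|_1$. Alternatively, use a maximal coupling: with $\Pr[X\ne \tilde X]=\mathrm{TV}$, the integrand vanishes on $\{X=\tilde X\}$ and is at most $B$ in absolute value otherwise.

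\textbf{Step 3: the weighted aggregate bound.} Write the difference of the two weighted sums as $\sum_{t<p}\lambda_{p,t}(R_t-\widetilde R_t)$. Apply the triangle inequality, use $\lambda_{p,t}\ge0$ to pull out absolute values, and insert Step 1 for each $t$. This is immediate.
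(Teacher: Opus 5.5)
Your proposal is correct and follows the same route as the paper's proof: Kantorovich--Rubinstein (of which you only need the easy coupling direction) for the $W_1$ bound, the standard bounded-function inequality for the $\mathrm{TV}$ bound, and the triangle inequality with $\lambda_{p,t}\ge 0$ for the weighted sum. Your explicit convention that the ground metric forbids label mismatch, or that the bound is applied class-conditionally, fills a point the paper leaves implicit. The paper simply asserts that $g(x,y)$ is $L_x$-Lipschitz, even though the assumption only gives Lipschitzness in $x$ for each fixed $y$.
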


\begin{proof}
Let $g(x,y)=\ell(f(x),y)$. By Assumption~\ref{ass:lipschitz-loss}, $g$ is $L_x$-Lipschitz. The Kantorovich-Rubinstein duality for $W_1$ gives
\[
\left|
\mathbb{E}_{P_t}g-\mathbb{E}_{Q_t}g
\right|
\le
L_x W_1(P_t,Q_t).
\]
If $0\le g\le B$, the standard total-variation inequality gives
\[
\left|
\mathbb{E}_{P_t}g-\mathbb{E}_{Q_t}g
\right|
\le
B\,\mathrm{TV}(P_t,Q_t).
\]
The weighted objective bound follows by applying the first inequality to each old task and using the triangle inequality.
\end{proof}

\paragraph{Implication.}
If a task is weakly learned under supervision imbalance or stage misalignment, the replay distribution derived from that task can be far from the real task distribution. Proposition~\ref{prop:replay-mismatch} shows that the replay objective can then be a biased surrogate for the real old-task objective. Because the same replay signal may be reused in later stages, the bias can repeatedly affect future training.

\subsection{Projected Memory Approximation}

PRO replaces input-space replay with projected feature memory. This removes the generator but introduces a feature-space approximation error. The next result makes this trade-off explicit.

Let $\mathcal{Y}^{\mathrm{old}}$ be the old label set. Define the ideal old-class feature risk
\[
R_{\mathrm{old}}^{\mathrm{real}}(\Theta)
=
\frac{1}{|\mathcal{Y}^{\mathrm{old}}|}
\sum_{y\in\mathcal{Y}^{\mathrm{old}}}
\mathbb{E}_{u\sim P_y^u}
\ell_y^u(u;\Theta),
\]
and the memory-based risk
\[
R_{\mathrm{old}}^{\mathrm{mem}}(\Theta)
=
\frac{1}{|\mathcal{Y}^{\mathrm{old}}|}
\sum_{y\in\mathcal{Y}^{\mathrm{old}}}
\mathbb{E}_{\tilde{u}\sim Q_y^u}
\ell_y^u(\tilde{u};\Theta).
\]

\begin{proposition}[Projected-memory approximation]
\label{prop:projected-memory}
Under Assumption~\ref{ass:lipschitz-loss},
\[
\left|
R_{\mathrm{old}}^{\mathrm{real}}(\Theta)
-
R_{\mathrm{old}}^{\mathrm{mem}}(\Theta)
\right|
\le
\frac{L_u}{|\mathcal{Y}^{\mathrm{old}}|}
\sum_{y\in\mathcal{Y}^{\mathrm{old}}}
W_1(P_y^u,Q_y^u).
\]
Furthermore, suppose the memory distribution is a diagonal Gaussian
\[
Q_y^u
=
\mathcal{N}(\widehat{\mu}_y^u,\gamma^2\mathrm{diag}((\widehat{\sigma}_y^u)^2)),
\]
and let the ideal diagonal Gaussian summary be
\[
Q_y^{u,\star}
=
\mathcal{N}(\mu_y^u,\gamma^2\mathrm{diag}((\sigma_y^u)^2)).
\]
Then
\[
W_1(P_y^u,Q_y^u)
\le
W_1(P_y^u,Q_y^{u,\star})
+
\|\widehat{\mu}_y^u-\mu_y^u\|_2
+
\gamma\|\widehat{\sigma}_y^u-\sigma_y^u\|_2.
\]
Under sub-Gaussian feature concentration with $n_y$ effective samples, the estimation term is
\[
\|\widehat{\mu}_y^u-\mu_y^u\|_2
+
\gamma\|\widehat{\sigma}_y^u-\sigma_y^u\|_2
=
O_p\!\left(\sqrt{\frac{d_u}{n_y}}\right).
\]
\end{proposition}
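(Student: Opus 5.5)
The plan has three steps, mirroring the three parts of Proposition~\ref{prop:projected-memory}.

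\textbf{First bound.} I would reuse the argument of Proposition~\ref{prop:replay-mismatch} class by class. For fixed $\Theta$ and old class $y$, Assumption~\ref{ass:lipschitz-loss} makes $u\mapsto \ell_y^u(u;\Theta)$ an $L_u$-Lipschitz function. Kantorovich--Rubinstein duality then gives
\[
\bigl|\mathbb{E}_{P_y^u}\ell_y^u-\mathbb{E}_{Q_y^u}\ell_y^u\bigr|\le L_u W_1(P_y^u,Q_y^u).
\]
Both $P_y^u$ and $Q_y^u$ have finite first moment: the former by Assumption~\ref{ass:feature-memory}, the latter because it is Gaussian. Averaging over $y\in\mathcal{Y}^{\mathrm{old}}$ and applying the triangle inequality yields the stated bound.

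\textbf{Second bound.} I would use the triangle inequality for $W_1$:
\[
W_1(P_y^u,Q_y^u)\le W_1(P_y^u,Q_y^{u,\star})+W_1(Q_y^{u,\star},Q_y^u).
\]
It remains to bound the distance between the two Gaussians. I would use an explicit coupling. Take $\epsilon\sim\mathcal{N}(0,I)$ and set
\[
X=\mu_y^u+\gamma\sigma_y^u\odot\epsilon,\qquad Y=\widehat\mu_y^u+\gamma\widehat\sigma_y^u\odot\epsilon,
\]
which is exactly the reparametrized sampling rule in Eq.~\eqref{eq:old-memory-sampling}. Then
\[
W_1\le\mathbb{E}\|X-Y\|_2\le\|\widehat\mu_y^u-\mu_y^u\|_2+\gamma\,\mathbb{E}\|(\widehat\sigma_y^u-\sigma_y^u)\odot\epsilon\|_2.
\]
By Jensen, the last expectation is at most $\sqrt{\mathbb{E}\|(\widehat\sigma_y^u-\sigma_y^u)\odot\epsilon\|_2^2}=\|\widehat\sigma_y^u-\sigma_y^u\|_2$, since the coordinates of $\epsilon$ have unit variance. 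This gives the claimed inequality with constant $1$ and no dimension factor.

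\textbf{Rate.} This is the main obstacle. Under the sub-Gaussian part of Assumption~\ref{ass:feature-memory}, the mean term is standard: $\mathbb{E}\|\widehat\mu_y^u-\mu_y^u\|_2^2=\mathrm{tr}(\Sigma_y)/n_y\le d_u\bar\sigma^2/n_y$, and Markov's inequality then gives $O_p(\sqrt{d_u/n_y})$.

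For the standard deviation, I would work coordinatewise. Each coordinate variance estimate is built from the aggregated second moment in Eq.~\eqref{eq:global-memory-refresh}. Products of sub-Gaussian variables are sub-exponential, so each variance estimate concentrates at rate $O_p(n_y^{-1/2})$. To pass from variances to standard deviations I would use
\[
|\sqrt a-\sqrt b|\le |a-b|/\sqrt b,
\]
which is where the bounded coordinatewise variance enters. It also requires a strictly positive lower bound on each coordinate variance, which I would assume explicitly; the clipping at $\epsilon$ in the refresh step makes this natural. Summing the $d_u$ squared coordinate errors gives $O_p(d_u/n_y)$, and taking the square root gives $O_p(\sqrt{d_u/n_y})$.

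Finally, I would note two caveats. The estimates $\widehat\mu_y^u$ and $\widehat\sigma_y^u$ are pooled across clients, so they behave like i.i.d.\ averages only when all refreshes are computed under the same global encoder $h_{\theta^g}$, which is exactly what the global refresh ensures. I would state this, together with the variance lower bound, as the conditions under which the rate holds.
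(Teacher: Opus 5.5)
Your proposal is correct and follows the paper's proof. The steps match: Kantorovich--Rubinstein duality class by class, then averaging, then the $W_1$ triangle inequality, then a Gaussian-to-Gaussian bound. Your synchronous coupling $X=\mu_y^u+\gamma\sigma_y^u\odot\epsilon$, $Y=\widehat\mu_y^u+\gamma\widehat\sigma_y^u\odot\epsilon$ with Jensen is the coupling behind the paper's appeal to $W_1\le W_2$ and the diagonal-Gaussian $W_2$ bound.

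Your rate argument is more explicit than the paper's one-line ``standard concentration'' remark, and it surfaces the variance lower bound that the paper's sketch glosses over. Impose that bound on the true coordinate variances, at least $\epsilon$. Clipping constrains only the estimate, and if a true variance were below $\epsilon$ it would introduce a non-vanishing bias.
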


\begin{proof}
For each class $y$, the function $\ell_y^u(\cdot;\Theta)$ is $L_u$-Lipschitz by Assumption~\ref{ass:lipschitz-loss}. Therefore,
\[
\left|
\mathbb{E}_{P_y^u}\ell_y^u
-
\mathbb{E}_{Q_y^u}\ell_y^u
\right|
\le
L_u W_1(P_y^u,Q_y^u).
\]
Averaging over old classes proves the first inequality.

For the second inequality, apply the triangle inequality:
\[
W_1(P_y^u,Q_y^u)
\le
W_1(P_y^u,Q_y^{u,\star})
+
W_1(Q_y^{u,\star},Q_y^u).
\]
Since $W_1\le W_2$, and the $W_2$ distance between two diagonal Gaussians is bounded by the Euclidean distance between their means and standard deviations,
\[
W_1(Q_y^{u,\star},Q_y^u)
\le
W_2(Q_y^{u,\star},Q_y^u)
\le
\|\widehat{\mu}_y^u-\mu_y^u\|_2
+
\gamma\|\widehat{\sigma}_y^u-\sigma_y^u\|_2.
\]
The final rate follows from standard concentration of empirical means and coordinate-wise variances for sub-Gaussian random vectors.
\end{proof}

\paragraph{Implication.}
The approximation error of PRO decomposes into two terms. The first term,
$W_1(P_y^u,Q_y^{u,\star})$, measures how well a class-level projected memory can represent the real feature distribution. The second term measures finite-support estimation error. Thus, projected memory is strongest when the representation is meaningful and each class has sufficient support. This also explains why extremely rare classes remain challenging.

\subsection{Class-Balanced Projected Pseudo Multi-Task Training}

PRO trains clients on a balanced mixture of current features and old projected memories. The following lemma shows that the class-balanced sampler optimizes a balanced objective rather than the skewed local empirical objective.

Let
\[
R_{\mathrm{bal}}(\Theta)
=
\frac{1}{|\mathcal{Y}^{(p)}|}
\sum_{y\in\mathcal{Y}^{(p)}}
\mathbb{E}_{u\sim q_y}
\ell_y^u(u;\Theta),
\]
where $q_y$ is the empirical current-task feature distribution for current classes and the projected-memory distribution for old classes.

\begin{lemma}[Unbiased gradient of the balanced objective]
\label{lem:balanced-gradient}
Suppose a mini-batch sample is generated by first sampling
\[
y\sim \mathrm{Unif}(\mathcal{Y}^{(p)}),
\]
and then sampling $u\sim q_y$. If gradients and expectations can be interchanged, then the stochastic gradient
\[
g(\Theta)=\nabla_{\Theta}\ell_y^u(u;\Theta)
\]
satisfies
\[
\mathbb{E}[g(\Theta)]
=
\nabla_{\Theta}R_{\mathrm{bal}}(\Theta).
\]
\end{lemma}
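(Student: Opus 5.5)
The plan is to compute the expectation of the stochastic gradient by conditioning on the sampled label. The sampling procedure defines a joint law on $(y,u)$ in which $y$ is uniform on the finite set $\mathcal{Y}^{(p)}$ and, conditionally on $y$, $u\sim q_y$. I will use the tower property,
\[
\mathbb{E}[g(\Theta)]
=
\mathbb{E}_{y}\bigl[\mathbb{E}_{u\sim q_y}[\nabla_\Theta \ell_y^u(u;\Theta)\mid y]\bigr]
=
\frac{1}{|\mathcal{Y}^{(p)}|}
\sum_{y\in\mathcal{Y}^{(p)}}
\mathbb{E}_{u\sim q_y}\nabla_\Theta \ell_y^u(u;\Theta).
\]
The outer expectation is an exact finite average because the label law is uniform.

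Next I will apply, for each class $y$, the assumed interchange of gradient and expectation. This gives $\mathbb{E}_{u\sim q_y}\nabla_\Theta \ell_y^u(u;\Theta)=\nabla_\Theta \mathbb{E}_{u\sim q_y}\ell_y^u(u;\Theta)$. By linearity of the gradient over the finite sum, the right-hand side becomes $\nabla_\Theta R_{\mathrm{bal}}(\Theta)$, which is the claim.

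The only step that needs care is the interchange. The lemma assumes it, but I would note when it holds:
\begin{itemize}
\item \textbf{Current classes.} Here $q_y$ is an empirical (finite) distribution, so the interchange is trivial.
\item \textbf{Old classes.} Here $q_y$ is the Gaussian memory law $\mathcal{N}(\mu_y^u,\gamma^2\mathrm{diag}((\sigma_y^u)^2))$. The interchange follows from dominated convergence if $\nabla_\Theta\ell_y^u(u;\Theta)$ is bounded locally uniformly in $\Theta$ by a $q_y$-integrable function. This holds, for instance, for cross-entropy with an adapter whose Jacobian grows at most polynomially in $u$, since Gaussians have all moments.
\end{itemize}

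I would also remark on the mini-batch case. A batch of i.i.d. draws from this two-stage procedure gives an averaged gradient that is unbiased by linearity. Sampling a class-balanced batch without replacement over labels preserves the uniform marginal on $y$ and hence the same expectation.
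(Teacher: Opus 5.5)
Your proposal is correct and follows the paper's proof: condition on the uniformly sampled label to get the finite class-average of per-class expected gradients, then interchange gradient and expectation (which the lemma assumes) and use linearity to obtain $\nabla_\Theta R_{\mathrm{bal}}(\Theta)$. Your remarks on when the interchange holds (trivially for the empirical current-class laws, by dominated convergence for the Gaussian memory laws) and on the mini-batch case go beyond the paper's argument but are sound.
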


\begin{proof}
By the sampling rule,
\[
\mathbb{E}[g(\Theta)]
=
\frac{1}{|\mathcal{Y}^{(p)}|}
\sum_{y\in\mathcal{Y}^{(p)}}
\mathbb{E}_{u\sim q_y}
\nabla_{\Theta}\ell_y^u(u;\Theta).
\]
Interchanging gradient and expectation gives
\[
\mathbb{E}[g(\Theta)]
=
\nabla_{\Theta}
\left[
\frac{1}{|\mathcal{Y}^{(p)}|}
\sum_{y\in\mathcal{Y}^{(p)}}
\mathbb{E}_{u\sim q_y}
\ell_y^u(u;\Theta)
\right]
=
\nabla_{\Theta}R_{\mathrm{bal}}(\Theta).
\]

We completed the proof.
\end{proof}

\paragraph{Implication.}
Without balanced sampling, the local objective is weighted by the empirical class prior, which is dominated by current classes in class-incremental learning. Lemma~\ref{lem:balanced-gradient} shows that PRO directly optimizes a balanced old-new objective, explaining why no additional regularizer is needed in the projected pseudo multi-task phase.

\subsection{Controlled Representation Plasticity}

The short current-task fitting stage must learn the current task while avoiding excessive representation drift. The following proposition formalizes the role of the proximal term.

Let
\[
F_c(\Theta)
=
\mathcal{L}_{\mathrm{sup}}(\Theta)
+
\lambda_{\mathrm{aux}}\mathcal{L}_{\mathrm{aux}}(\Theta)
\]
be the non-proximal part of the local plasticity objective on client $c$. Let
\[
\overline{\Theta}
=
\Pi_r(\Theta)
\]
denote the trainable parameter subset in round $r$.

\begin{proposition}[Proximal control of local drift]
\label{prop:prox-drift}
Assume $\|\nabla_{\overline{\Theta}}F_c(\Theta)\|_2\le G$ in the local training region. Suppose the client returns a $\xi$-stationary point of
\[
F_c(\Theta)
+
\lambda_{\mathrm{prox}}
\|\Pi_r(\Theta)-\Pi_r(\Theta^g)\|_2^2,
\]
meaning
\[
\left\|
\nabla_{\overline{\Theta}}F_c(\Theta_c)
+
2\lambda_{\mathrm{prox}}
(\overline{\Theta}_c-\overline{\Theta}^g)
\right\|_2
\le
\xi.
\]
Then
\[
\|\overline{\Theta}_c-\overline{\Theta}^g\|_2
\le
\frac{G+\xi}{2\lambda_{\mathrm{prox}}}.
\]
If the base encoder is trainable and Assumption~\ref{ass:param-feature-lipschitz} holds, then
\[
\|h_{\theta_c}(x)-h_{\theta^g}(x)\|_2
\le
\frac{L_h(G+\xi)}{2\lambda_{\mathrm{prox}}}.
\]
If the base encoder is frozen, the base-space feature drift is zero.
\end{proposition}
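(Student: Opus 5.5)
The argument starts from the $\xi$-stationarity condition and rearranges it to isolate the displacement term. Write $v=\nabla_{\overline{\Theta}}F_c(\Theta_c)$ and $e=v+2\lambda_{\mathrm{prox}}(\overline{\Theta}_c-\overline{\Theta}^g)$, so $\|e\|_2\le\xi$ by assumption. Then
\[
2\lambda_{\mathrm{prox}}(\overline{\Theta}_c-\overline{\Theta}^g)=e-v,
\]
and the triangle inequality gives $2\lambda_{\mathrm{prox}}\|\overline{\Theta}_c-\overline{\Theta}^g\|_2\le\|e\|_2+\|v\|_2\le\xi+G$. The gradient bound $\|v\|_2\le G$ applies because $\Theta_c$ lies in the local training region, which I will state explicitly as part of the hypothesis. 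Dividing by $2\lambda_{\mathrm{prox}}>0$ yields the first claim.

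For the feature-drift claim, I split into the two cases the statement names. If the base encoder is trainable in round $r$, then $\theta$ is a sub-block of $\overline{\Theta}=\Pi_r(\Theta)$. Hence $\|\theta_c-\theta^g\|_2\le\|\overline{\Theta}_c-\overline{\Theta}^g\|_2$, since the Euclidean norm of a coordinate sub-vector is at most that of the full vector. Assumption~\ref{ass:param-feature-lipschitz} then gives $\|h_{\theta_c}(x)-h_{\theta^g}(x)\|_2\le L_h\|\theta_c-\theta^g\|_2$. Chaining this with the first bound produces $L_h(G+\xi)/(2\lambda_{\mathrm{prox}})$. The Lipschitz assumption only holds in a neighborhood of $\Theta^g$, so I will note that $\theta_c$ must lie in that neighborhood. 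This is guaranteed whenever the neighborhood radius is at least $(G+\xi)/(2\lambda_{\mathrm{prox}})$, or more simply it can be folded into the "local training region" hypothesis. If the base encoder is frozen, $\theta$ is excluded from $\Pi_r$ and is never updated, so $\theta_c=\theta^g$ and $h_{\theta_c}(x)=h_{\theta^g}(x)$ for every $x$, giving zero drift.

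The only delicate step is this bookkeeping of where the hypotheses hold. The gradient bound $G$ must be evaluated at the returned point $\Theta_c$, and the parameter-to-feature Lipschitz property must apply at $\theta_c$. Neither is automatic from the stationarity condition alone, so I will make both explicit as standing conditions on the local region rather than attempt to derive them. No convexity of $F_c$ is needed, because the argument uses only first-order stationarity and a norm bound on the gradient.
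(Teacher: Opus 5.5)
Your proof is correct and matches the paper's argument. You isolate $2\lambda_{\mathrm{prox}}(\overline{\Theta}_c-\overline{\Theta}^g)$ from the $\xi$-stationarity condition, apply the triangle inequality with the gradient bound $G$, divide by $2\lambda_{\mathrm{prox}}$, chain with Assumption~\ref{ass:param-feature-lipschitz}, and use $\theta_c=\theta^g$ in the frozen case.

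Your extra bookkeeping fills in steps the paper leaves implicit when it says the feature bound ``follows directly.'' This covers the sub-vector inequality $\|\theta_c-\theta^g\|_2\le\|\overline{\Theta}_c-\overline{\Theta}^g\|_2$. It also covers requiring $\Theta_c$ to lie where both the gradient bound and the local Lipschitz property hold.
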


\begin{proof}
From $\xi$-stationarity,
\[
\left\|
2\lambda_{\mathrm{prox}}
(\overline{\Theta}_c-\overline{\Theta}^g)
\right\|_2
\le
\|\nabla_{\overline{\Theta}}F_c(\Theta_c)\|_2+\xi
\le
G+\xi.
\]
Dividing by $2\lambda_{\mathrm{prox}}$ gives the parameter drift bound. The feature drift bound follows directly from Assumption~\ref{ass:param-feature-lipschitz}. If the base encoder is frozen, $\theta_c=\theta^g$, hence $h_{\theta_c}(x)=h_{\theta^g}(x)$.
\end{proof}

\paragraph{Implication.}
The proximal coefficient controls how far local trainable parameters can move from the broadcast global model. This explains the role of controlled plasticity: early rounds may update the base encoder to learn the new task, while later rounds freeze the base encoder so that the base-space memory remains meaningful.

\subsection{Neighborhood-Weighted Memory Transport in PRO-MAX}

PRO-MAX estimates how old memories should move when the representation changes. We formalize this as local regression of the feature displacement field.

For a client $c$ and old class $y$, let the calibration features before and after local fitting be
\[
u_i^- = h_{\theta^-}(x_i),
\qquad
u_i^+=h_{\theta^+}(x_i),
\qquad
\delta_i^u=u_i^+-u_i^-.
\]
Let $\mathcal{N}_y^u$ be the selected neighbor set around $\mu_y^u$, and define
\[
\alpha_{i,y}^u
=
\exp\left(
-\frac{\|u_i^- - \mu_y^u\|_2^2}{2\tau_u^2}
\right),
\qquad
q_{c,y}^u
=
\sum_{i\in\mathcal{N}_y^u}\alpha_{i,y}^u.
\]
The PRO-MAX transport estimate is
\[
\Delta_{c,y}^u
=
\frac{
\sum_{i\in\mathcal{N}_y^u}
\alpha_{i,y}^u\delta_i^u
}{
q_{c,y}^u+\epsilon
}.
\]
For the analysis, ignore the negligible numerical stabilizer $\epsilon$ when $q_{c,y}^u>0$, and define normalized weights
\[
\bar{\alpha}_{i,y}^u
=
\frac{\alpha_{i,y}^u}{q_{c,y}^u}.
\]

\begin{theorem}[Local consistency of PRO-MAX transport]
\label{thm:transport}
Assume the displacement model
\[
\delta_i^u
=
\delta(u_i^-)+\xi_i,
\qquad
\mathbb{E}[\xi_i\mid u_i^-]=0,
\]
with residual second moment bounded as
\[
\mathbb{E}
\left\|
\sum_i w_i\xi_i
\right\|_2^2
\le
\sigma_{\delta}^2
\sum_i w_i^2
\]
for any deterministic weights $\{w_i\}$. Under Assumption~\ref{ass:smooth-transport}, let
\[
r_{c,y}
=
\max_{i\in\mathcal{N}_y^u}
\|u_i^- - \mu_y^u\|_2 .
\]
Then
\[
\mathbb{E}
\left\|
\Delta_{c,y}^u-\delta(\mu_y^u)
\right\|_2^2
\le
2\beta^2 r_{c,y}^2
+
\frac{2\sigma_{\delta}^2}{q_{c,y}^u}.
\]
Furthermore, if the server aggregates client transports as
\[
\Delta_y^{u,g}
=
\frac{
\sum_c q_{c,y}^u \Delta_{c,y}^u
}{
\sum_c q_{c,y}^u
},
\qquad
Q_y=\sum_c q_{c,y}^u,
\]
and the client residuals are independent, then
\[
\mathbb{E}
\left\|
\Delta_y^{u,g}-\delta(\mu_y^u)
\right\|_2^2
\le
2\beta^2 r_y^2
+
\frac{2\sigma_{\delta}^2}{Q_y},
\]
where $r_y=\max_c r_{c,y}$ over clients with $q_{c,y}^u>0$.
\end{theorem}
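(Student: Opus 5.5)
The plan is a bias--variance decomposition of the Nadaraya--Watson-type estimator $\Delta_{c,y}^u$. Throughout, I condition on the pre-update calibration features $\{u_i^-\}$. The neighbor set $\mathcal{N}_y^u$, the weights $\alpha_{i,y}^u$, the confidence $q_{c,y}^u$ and the radius $r_{c,y}$ are all functions of $\{u_i^-\}$ and $\mu_y^u$ only. After conditioning they are therefore deterministic, and the residual second-moment hypothesis applies to them with deterministic weights. The conditional bound is uniform, so averaging over $\{u_i^-\}$ preserves it. This conditioning is the step I expect to need the most care. The hypothesis is stated for deterministic weights, so I read it as holding conditionally on $u^-$, consistent with $\mathbb{E}[\xi_i\mid u_i^-]=0$, and I will say so explicitly.

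\textbf{Client-level bound.} Since the normalized weights $\bar{\alpha}_{i,y}^u$ sum to one, write
\[
\Delta_{c,y}^u-\delta(\mu_y^u)
=\underbrace{\sum_{i\in\mathcal{N}_y^u}\bar{\alpha}_{i,y}^u\bigl(\delta(u_i^-)-\delta(\mu_y^u)\bigr)}_{b}
+\underbrace{\sum_{i\in\mathcal{N}_y^u}\bar{\alpha}_{i,y}^u\xi_i}_{v},
\]
and use $\|b+v\|_2^2\le 2\|b\|_2^2+2\|v\|_2^2$, which produces the factors $2$ in the statement. For the bias, apply the triangle inequality (convexity of the norm) and Assumption~\ref{ass:smooth-transport}. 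This gives $\|b\|_2\le\sum_i\bar{\alpha}_{i,y}^u\beta\|u_i^--\mu_y^u\|_2\le\beta r_{c,y}$, hence $\|b\|_2^2\le\beta^2r_{c,y}^2$. For the noise term, the residual hypothesis gives $\mathbb{E}\|v\|_2^2\le\sigma_\delta^2\sum_i(\bar{\alpha}_{i,y}^u)^2$. The key observation is that $0<\alpha_{i,y}^u\le 1$ because it is a Gaussian kernel value. Therefore $\sum_i(\alpha_{i,y}^u)^2\le\sum_i\alpha_{i,y}^u=q_{c,y}^u$, and so $\sum_i(\bar{\alpha}_{i,y}^u)^2\le q_{c,y}^u/(q_{c,y}^u)^2=1/q_{c,y}^u$. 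Combining the two terms gives the first claimed bound.

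\textbf{Server-level bound.} Substituting the definition of $\Delta_{c,y}^u$ shows that
\[
\Delta_y^{u,g}=\sum_c\sum_{i\in\mathcal{N}_{y,c}^u}\frac{\alpha_{i,y,c}^u}{Q_y}\,\delta_{i,c}^u .
\]
This is a single kernel-weighted average over the pooled calibration samples of all clients with $q_{c,y}^u>0$, with weights summing to one.

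I then repeat the same decomposition. The bias is at most $\beta\max_c r_{c,y}=\beta r_y$. For the noise, independence of the residuals across clients makes the cross-client terms vanish in expectation. The within-client terms are bounded by the per-client hypothesis, so the variance is at most $\sigma_\delta^2\sum_{c,i}(\alpha_{i,y,c}^u)^2/Q_y^2$. Using $\alpha\le1$ once more, this is at most $\sigma_\delta^2 Q_y/Q_y^2=\sigma_\delta^2/Q_y$, which gives the second bound.

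Throughout I ignore the stabilizer $\epsilon$, as the statement permits whenever $q_{c,y}^u>0$.
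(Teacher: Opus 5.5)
Your proposal is correct and follows essentially the same route as the paper. Both use the same bias--noise split with normalized weights, the Lipschitz bias bound $\beta r_{c,y}$, and the observation $\alpha_{i,y}^u\le 1$ to get $\sum_i(\bar{\alpha}_{i,y}^u)^2\le 1/q_{c,y}^u$. Your pooled-sample treatment of the server aggregate is algebraically identical to the paper's $q$-weighted average of client errors, since $(q_{c,y}^u/Q_y)^2(\bar{\alpha}_{i,y}^u)^2=(\alpha_{i,y}^u)^2/Q_y^2$, and your explicit conditioning on $\{u_i^-\}$ makes precise a step the paper leaves implicit.
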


\begin{proof}
For a single client, decompose the error as
\[
\Delta_{c,y}^u-\delta(\mu_y^u)
=
\sum_{i\in\mathcal{N}_y^u}
\bar{\alpha}_{i,y}^u
\left[
\delta(u_i^-)-\delta(\mu_y^u)
\right]
+
\sum_{i\in\mathcal{N}_y^u}
\bar{\alpha}_{i,y}^u \xi_i .
\]
Using $\|a+b\|_2^2\le 2\|a\|_2^2+2\|b\|_2^2$, we bound the bias and variance terms separately. By $\beta$-Lipschitzness,
\[
\left\|
\sum_i
\bar{\alpha}_{i,y}^u
\left[
\delta(u_i^-)-\delta(\mu_y^u)
\right]
\right\|_2
\le
\sum_i
\bar{\alpha}_{i,y}^u
\beta\|u_i^- - \mu_y^u\|_2
\le
\beta r_{c,y}.
\]
For the noise term, the residual assumption gives
\[
\mathbb{E}
\left\|
\sum_i
\bar{\alpha}_{i,y}^u\xi_i
\right\|_2^2
\le
\sigma_{\delta}^2
\sum_i
(\bar{\alpha}_{i,y}^u)^2.
\]
Since $0\le\alpha_{i,y}^u\le 1$,
\[
\sum_i(\bar{\alpha}_{i,y}^u)^2
=
\frac{\sum_i(\alpha_{i,y}^u)^2}{(q_{c,y}^u)^2}
\le
\frac{\sum_i\alpha_{i,y}^u}{(q_{c,y}^u)^2}
=
\frac{1}{q_{c,y}^u}.
\]
Combining the two bounds proves the client-side result.

For the server-side result, write
\[
\Delta_y^{u,g}-\delta(\mu_y^u)
=
\sum_c
\frac{q_{c,y}^u}{Q_y}
\left[
\Delta_{c,y}^u-\delta(\mu_y^u)
\right].
\]
The weighted bias is bounded by $\beta r_y$. With independent residuals, the variance is bounded by
\[
\sigma_{\delta}^2
\sum_c
\left(\frac{q_{c,y}^u}{Q_y}\right)^2
\frac{1}{q_{c,y}^u}
=
\frac{\sigma_{\delta}^2}{Q_y}.
\]
Applying the same $\|a+b\|^2$ decomposition yields the stated bound.
\end{proof}

\paragraph{Implication.}
The PRO-MAX transport estimate is reliable when nearby calibration samples exist and the local representation shift is smooth around the old memory center. The confidence score $q_{c,y}^u$ has a clear role: clients whose calibration samples are far from an old memory have small confidence and therefore weak influence on that memory. This justifies confidence-weighted server aggregation.

\subsection{PRO and PRO-MAX Server Complexity}

Finally, we formalize the server-light property of PRO and PRO-MAX. Let $C_r$ be the number of participating clients in a communication round, $|\Theta|$ be the number of model parameters, $d_u$ be the base feature dimension, $d_z$ be the adapted feature dimension, and $\eta_z\in\{0,1\}$ indicate whether the optional adapted-space memory is enabled.

\begin{proposition}[Server memory and communication order]
\label{prop:server-light}
At position $p$, the persistent server memory of PRO is
\[
O\!\left(
|\mathcal{Y}^{(p)}|
\left[
2d_u+1+\eta_z(d_z+2)
\right]
\right).
\]
The memory downlink per participating client is
\[
O\!\left(
|\mathcal{Y}^{(p-1)}|
\left[
2d_u+1+\eta_z(d_z+2)
\right]
\right).
\]
For PRO-MAX, the additional transport uplink per participating client is
\[
O\!\left(
|\mathcal{Y}^{(p-1)}|
\left[
d_u+1+\eta_z(d_z+1)
\right]
\right).
\]
The server compute per round consists of parameter averaging and memory accounting:
\[
O(C_r|\Theta|)
+
O(C_r|\mathcal{Y}^{(p)}|d_u),
\]
up to constants and optional adapted-space terms. No server-side backpropagation or generator training is required.
\end{proposition}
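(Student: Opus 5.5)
The plan is a direct counting argument over the objects the server stores, sends, and receives, using the explicit byte accounting already given in Appendix~\ref{app:cost-details}. First, for the persistent memory, I would observe that the server bank $\mathcal{M}^u$ stores, for each seen class $y\in\mathcal{Y}^{(p)}$, a mean $\mu_y^u\in\mathbb{R}^{d_u}$, a diagonal standard deviation $\sigma_y^u\in\mathbb{R}^{d_u}$, and a count $n_y$. That gives $2d_u+1$ scalars per class. If $\mathcal{M}^z$ is enabled ($\eta_z=1$), each class additionally carries $\mu_y^z\in\mathbb{R}^{d_z}$, a scalar radius $\rho_y^z$, and a count, giving $d_z+2$ scalars. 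Summing over classes and dropping the byte constant $b$ yields the stated order, matching $\mathrm{ServerMem}^{\mathrm{PRO}}$. Temporary PRO-MAX transport buffers are aggregated and discarded within a round, so they do not change the persistent order.

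Second, for the downlink, only old classes $\mathcal{Y}^{(p-1)}$ are needed for sampling pseudo-features via Equation~\eqref{eq:old-memory-sampling}. The broadcast therefore costs $|\mathcal{Y}^{(p-1)}|$ times the same per-class size, which is exactly $D^{u}_{c,p,r}+\eta_z D^{z}_{c,p,r}$. For the PRO-MAX uplink, each client sends, per old class, $\Delta_{c,y}^u\in\mathbb{R}^{d_u}$ and the scalar $q_{c,y}^u$. Optionally it also sends $\Delta_{c,y}^z\in\mathbb{R}^{d_z}$ and its confidence, which gives $d_u+1+\eta_z(d_z+1)$ per class.

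Third, for server compute, I would bound each operation. FedAvg in Equation~\eqref{eq:fedavg} is a weighted sum of $C_r$ parameter vectors, costing $O(C_r|\Theta|)$. The memory refresh in Equation~\eqref{eq:global-memory-refresh} is a count-weighted average of at most $C_r$ mean and second-moment vectors per class, followed by an elementwise variance computation; this costs $O(C_r|\mathcal{Y}^{(p)}|d_u)$. The PRO-MAX alignment in Equation~\eqref{eq:server-transport} is a confidence-weighted average of $C_r$ vectors per old class, which has the same order. Adapted-space terms add $O(C_r|\mathcal{Y}^{(p)}|d_z)$, and these are absorbed as optional terms.

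Finally, inspecting Algorithms~\ref{alg:pro} and~\ref{alg:promax} shows that every server step is one of these averages or a broadcast, with no loss evaluation or gradient computation. This establishes the no-backpropagation claim. The only mildly delicate step is the refresh cost, because a client may report only its observed current classes. I would handle it by upper bounding the classes touched per round by $|\mathcal{Y}^{(p)}|$, so the bound holds regardless of which classes each client reports.
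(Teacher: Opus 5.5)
Your proposal is correct and follows the paper's argument: you count scalars per class ($2d_u+1$ and $d_z+2$ for the memory banks, $d_u+1$ and $d_z+1$ for the transport summaries), and you observe that the server's only work is FedAvg averaging plus aggregation of class-level statistics, with no differentiation anywhere. Bounding the refresh cost by capping the classes each client reports at $|\mathcal{Y}^{(p)}|$ is a harmless sharpening of the paper's coarser claim that memory accounting scales with the number of participating clients, seen classes, and feature dimension.
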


\begin{proof}
The base-space memory stores a mean, diagonal standard deviation, and count per class, requiring $2d_u+1$ scalars per class. The optional adapted-space memory stores a mean, radius, and count per class, requiring $d_z+2$ scalars per class. This proves the persistent memory order and the memory downlink order.

For PRO-MAX, each old class transport summary contains a base-space transport vector and confidence scalar, requiring $d_u+1$ scalars. If adapted-space transport is enabled, it adds $d_z+1$ scalars per old class. This proves the transport uplink order.

The server averages $C_r$ model updates, which costs $O(C_r|\Theta|)$. Memory accounting aggregates class-level statistics and transport summaries, which scales with the number of participating clients, seen classes, and feature dimension. None of these operations differentiates through the model, trains a generator, or optimizes a classifier on pseudo data.
\end{proof}

\paragraph{Overall interpretation.}
Propositions~\ref{prop:replay-mismatch}-\ref{prop:server-light} support the main design choices of PRO and PRO-MAX. Generator replay can suffer from distribution mismatch when a task is weakly learned. Projected memory approximates old-task rehearsal in feature space with error controlled by memory quality and support; class-balanced sampling targets an unbiased balanced objective. The proximal term controls local representation drift, and PRO-MAX reduces memory staleness through local transport; and the server remains lightweight because it performs only aggregation and memory accounting.

\section{Limitations and Privacy Discussion}
\label{app:limitations-privacy}

\subsection{Representation Quality and Sparse Support}

The main limitation of PRO is that projected memories depend on the quality of the learned representation. If the in-domain warmup is weak, or if a semantic class receives very little support, the class-level projected statistics can be noisy. In such cases, pseudo projected rehearsal may preserve an inaccurate class geometry. PRO-MAX can mitigate moderate representation drift by transporting old memories, but it cannot fully recover old-class geometry when no reliable neighborhood evidence is available.

This limitation is especially relevant for rare classes. Our protocol assumes that every evaluated class is observed by at least one client, but a class with very small support can still produce unreliable statistics. In practice, this suggests using minimum-support thresholds, memory uncertainty estimates, or conservative updates for rare classes.

\subsection{Scalability with Large Label Spaces}

The server memory of PRO grows linearly with the number of seen classes and the feature dimension. This is much smaller than storing raw exemplars or per-example feature banks, but it may still become large for extremely large label spaces. In such cases, memory compression, quantization, low-rank summaries, or selective memory transmission may be necessary.

Communication also grows with the number of seen classes because old memories are broadcast to clients. In our experiments this cost remains small relative to model communication, but in very large-label settings the memory downlink may become non-negligible.

\subsection{Privacy Discussion}

PRO avoids raw-data storage, input-space generators, and per-example feature banks. The server stores class-level projected statistics rather than individual examples. This reduces instance specificity and makes the memory object more compact. However, projected class memories are not a formal privacy guarantee. Prior work has shown that deep representations, model outputs, and gradients can leak information about private data~\cite{mahendran2015understanding,fredrikson2015model,zhu2019deep}. Class-level aggregation reduces this risk but does not eliminate it, especially when a class is supported by very few examples or very few clients.

Therefore, PRO should be viewed as privacy-aware rather than privacy-proving. It can be combined with secure aggregation so that the server observes only aggregated memory statistics, and with differential privacy so that means, second moments, counts, and transport summaries are perturbed before upload. Another practical safeguard is to update a class memory only when the class has sufficient client support. We leave formal privacy accounting for projected memories to future work.

\subsection{Deployment Scope}

Our experiments study heterogeneous task orders, supervision imbalance, and modality transfer, but they do not fully cover all cross-device deployment issues. Real deployments may include unreliable client availability, long offline periods, stragglers, device-specific compute limits, and fully asynchronous updates. Such settings may require additional mechanisms for stale-client handling, adaptive participation, and memory expiration.

\subsection{Broader Impact}

A generator-free and server-light FCIL method may be useful in privacy-sensitive applications such as medical networks, on-device intent classification, and graph-based recommendation or citation systems. At the same time, continual learning systems can propagate biased or incomplete supervision if rare classes are poorly represented. The projected memory bank may also encode sensitive class-level information. Deployments should therefore include data governance, monitoring for class imbalance, and privacy-preserving aggregation mechanisms when sensitive data are involved.

\end{document}